\newtheorem{theorem}{Theorem}
\newtheorem{remark}{Remark}
\newtheorem{proposition}[theorem]{Proposition}
\newtheorem{lemma}{Lemma}
\theoremstyle{definition}
\newtheorem{definition}{Definition}
\DeclareMathOperator{\sgn}{sign}
\DeclareMathOperator{\rad}{Rad}
\newcommand{\bR}{\mathbb{R}}
\newcommand{\RR}{\mathbb{R}}
\newcommand{\EE}{\mathbb{E}}
\renewcommand{\SS}{\mathbb{S}}
\newcommand{\PP}{\mathbb{P}}
\newcommand{\cH}{\mathcal{H}}
\newcommand{\cP}{\mathcal{P}}
\newcommand{\cF}{\mathcal{F}}
\newcommand{\cO}{\mathcal{O}}
\newcommand{\cD}{\mathcal{D}}
\newcommand{\cB}{\mathcal{B}}
\newcommand{\cE}{\mathcal{E}}
\newcommand{\cA}{\mathcal{A}}
\newcommand{\bx}{\mathbf{x}}
\newcommand{\bX}{\mathbf{X}}
\newcommand{\be}{\mathbf{e}}
\newcommand{\bU}{\mathbf{U}}
\newcommand{\bV}{\mathbf{V}}
\newcommand{\bA}{\mathbf{A}}
\newcommand{\bz}{\bm{z}}
\newcommand{\bw}{\bm{w}}
\newcommand{\bW}{\bm{W}}
\newcommand{\bv}{\bm{v}}
\newcommand{\bb}{\bm{b}}
\newcommand{\balpha}{\bm{\alpha}}
\newcommand{\by}{\mathbf y}
\newcommand{\lip}{\textit{Lip}_{\{\rho_t\}}}
\newcommand{\Lip}{\textit{Lip}}
\title{The Barron Space and the Flow-induced Function Spaces\\
for Neural Network Models}
\author[1,2,3]{Weinan E \thanks{weinan@math.princeton.edu}}
\author[2]{Chao Ma \thanks{cham@princeton.edu}}
\author[2]{Lei Wu \thanks{leiwu@princeton.edu}}
\affil[1]{Department of Mathematics, Princeton University}
\affil[2]{Program in Applied and Computational Mathematics, Princeton University}
\affil[3]{Beijing Institute of Big Data Research}
\begin{document}
\maketitle

\begin{abstract}
One of the key issues in the analysis of machine learning models
is to identify the appropriate function space {and norm} for the model.
{This is the set of functions endowed with a quantity which can control the 
approximation and estimation errors by a particular machine learning model}.
In this paper, we address this issue for two representative neural network models:  the two-layer networks and  the residual neural networks.
We define the Barron space and show that it is the right space for two-layer 
neural network models in the sense that optimal direct and inverse approximation
theorems hold for functions in the Barron space.
For residual neural network models,  we construct the so-called flow-induced function space, and prove direct and inverse approximation theorems for this space.
In addition, we show that the Rademacher complexity  
 for bounded sets under these norms has the optimal upper bounds.
\end{abstract}

{\small \textbf{\textit{Keywords: }} Function space, Neural network, Approximation, Rademacher complexity}

{\small \textbf{\textit{MSC: }} 65D15, 68T05, 46B99}

\section{Introduction}

The task of supervised learning is to approximate a function using a given set of data.
This type of problem has been the subject of classical numerical analysis and 
approximation theory for a long time. 
The theory of splines and the theory of finite element methods
are very successful examples of such classical results \cite{devore1993constructive, ciarlet2002finite},
both are concerned with approximating functions using piecewise polynomials.
In these theories, one starts from a function in a particular function space, 
say a Sobolev or Besov space, and proceeds to derive
optimal error estimates for this function.
The optimal error estimates depend on the {function norm}, and the regularity encoded in the 
function space as well as the approximation scheme.
They are the most important pieces of information for understanding the
underlying approximation scheme. 
{When discussing a particular function space, the associated norm is as crucial as the set of functions it contains.}

Identifying the right function space that one should use is the most crucial step in this analysis.
Sobolev/Besov type spaces are good function spaces for these classical theories since:
\begin{enumerate}
\item One can prove direct and inverse approximation theorems for these spaces.
Roughly speaking, a function can be approximated by piecewise polynomials with certain convergence rate if and only if the function is in certain Sobolev/Besov space.
\item The functions we are interested in, e.g. solutions of partial differential
equations (PDEs), are in these spaces. This is at the heart of the regularity theory for PDEs.
\end{enumerate}
However, these spaces are tied with the piecewise polynomial basis used in the approximation
scheme. These approximation schemes suffer from the curse of dimensionality, i.e. the number of 
parameters needed to achieve certain level of accuracy grows exponentially with dimension.
Consequently, Sobolev/Besov type spaces are not the right function spaces for studying
machine learning models that can potentially address the curse of dimensionality problem.

Another inspiration for this paper comes from kernel methods. It is well-known
that the right function space associated with a kernel method is the  corresponding 
reproducing kernel Hilbert space (RKHS)~\cite{aronszajn1950theory}.
RKHS and kernel methods provide one of the first examples for which dimension-independent
error estimates can be established.

The main purpose of this paper is to construct and identify
the analog of these spaces for two-layer and residual neural network models.
For two-layer neural network models, we show that the right
function space is the so-called ``Barron space''.
Roughly speaking, a function belongs to the Barron space if and only if 
it can be approximated by ``well-behaved'' two-layer neural networks, 
{and the approximation error is controlled by the norm of the Barron space.}
The analog of the Barron space for deep residual neural networks is the 
 ``flow-induced function space''  that we construct in the second part of this paper. 
{With the ``flow-induced norms'',} we will establish direct and inverse approximation theorems for these spaces as well as the optimal Rademacher complexity estimates.

One important difference between approximation theory in low and high dimensions is that
in high dimensions, the best error rate (or order of convergence) that one can hope for is the Monte Carlo error rate.
Therefore using the error rate as an indicator to distinguish the quality of different approximation schemes or machine
learning models is not a good option. 
The function spaces or the associated norms seem to be a better alternative.
We take the viewpoint that a function space is defined by its approximation property using a particular
approximation scheme. In this sense, Sobolev/Besov spaces are the result when we consider approximation
by piecewise polynomials or wavelets. Barron space is the analog when we consider approximation by two-layer neural
networks and the flow-induced function space is the analog when we consider approximation by deep residual networks.
The norms that are associated with these new spaces may seem a bit unusual at a first sight, but they arise naturally
in the approximation process, as we will see from the  direct and inverse approximation theorems presented below. 

{It should be stressed that the terminologies ``space'' and ``norm'' in this paper are used in a loose way. For example, flow-induced norms are a family of quantities that control the approximation error. We do not take effort to investigate whether it is a real norm.}

Although this work was motivated by the problem of understanding approximation theory for neural network models
in machine learning,  we believe that it should have an implication for high dimensional analysis in general.
One natural follow-up question is whether one can show that solutions to high dimensional partial differential equations
(PDE) belong to the function spaces  introduced here.  At least for linear parabolic PDEs,
the work in \cite{jentzen2018proof} suggests that some close analog of the flow-induced spaces should serve the purpose.

In Section \ref{sec: barron}, we introduce the Barron space for two-layer neural networks. Although not all the results in this section are new (some have appeared in various forms in \cite{klusowski2016risk, bach2017breaking, wu2019priori}), they are useful for illustrating our angle
of attack and they are also useful for the work in Section \ref{sec: cfs} where we introduce the flow-induced function
space for residual networks.

{\bf Notations}: { Let $\SS^d = \{\bw\in\RR^{d+1}: \|\bw\|_1=1\}$. We define $\hat{\bw} = \frac{\bw}{\|\bw\|_1}$ if $\bw\neq 0$ otherwise $\hat{\bw}=0$. } For simplicity, we fix the domain of interest to be $X=[0,1]^d$. 
We denote by $\bx \in X$  the input variable, { and let $\tilde{\bx}=(\bx^T,1)^T$.}
We sometimes abuse notation and use $f(\bx)$ (or some other analogs) to denote the function $f$ in order to signify the independent variable under consideration.
We use $\|f\|$ to denote the $L_2$ norm of function $f$ defined by
\begin{equation}
    \|f\|=\left(\int_X |f(\bx)|^2 \mu(d\bx)\right)^{\frac{1}{2}}, \nonumber
\end{equation}
where $\mu(\bx)$ is a probability distribution on $X$. We do not specify $\mu$ in this paper. 

One important point for working in high
dimension is the dependence of the constants on the dimension.
We will use $C$ to denote constants that are independent of the dimension. 

{In Section~\ref{sec: cfs}, the absolute values and powers of matrices and vectors ($|\cdot|$ and $(\cdot)^p$) are understood as being element-wise. 
The multiplication of two matrices is regular matrix multiplication.}

\section{The Barron space}
\label{sec: barron}

{ In this section we define the Barron space and study its properties. The proofs of theorems are postponed to the end of the section.}

\subsection{Definition of the Barron space}
We will consider
functions $f: X \mapsto \RR$  that admit the following representation
\begin{equation} 
f(\bx) = \int_{\Omega} a \sigma (\bb^T  \bx + c) \rho (da, d\bb, dc), \quad \bx \in X
\label{integral1}
\end{equation}
where $\Omega = \RR^1 \times \RR^{d} \times \RR^1$,
$\rho$ is a probability distribution on ($\Omega$, $\Sigma_{\Omega}$), with $\Sigma_{\Omega}$ being a Borel $\sigma$-algebra on $\Omega$, { and $\sigma(x)=\max\{x,0\}$ is the ReLU activation function.}
This representation can be considered as the
continuum analog of two-layer neural networks:
\begin{equation}\label{eqn: two-layer-net}
f_m(\bx;\Theta):=\frac 1 m \sum_{j=1}^m  a_j \sigma (\bb_j ^T  \bx + c_j), \nonumber
\end{equation}
where $\Theta=(a_1,\bb_1,c_1,\dots,a_m,\bb_m,c_m)$ denotes all the parameters.
It should be noted that in general, the $\rho$'s 
for which \eqref{integral1} holds are not unique.

To get some intuition about the representation \eqref{integral1},
we write the Fourier representation of a function $f$ as:
\begin{equation}\label{eq:fourier}
 f(\bx) = \int_{\RR^d} \hat{f}(\omega) \cos(\omega^T \bx) d \omega = \int_{\RR^1 \times \RR^d} a \cos(\omega^T \bx) \rho(da, d\omega),
\end{equation}
\begin{equation*}
    \rho(da, d\omega) = \delta(a - \hat{f}(\omega)) da d \omega.
\end{equation*}
This can be thought of as the analog of \eqref{integral1} 
for the case when $\sigma(z) = \cos(z)$ except for the
fact that the $\rho$ defined in~\eqref{eq:fourier} is not normalizable.

For functions that admit the representation \eqref{integral1}, we define its Barron norm:
\begin{equation}\label{eqn: barron-norm-def}
\|f \|_{\mathcal{B}_p} = \inf_{\rho} \big( \EE_\rho [|a|^p(\|\bb\|_1 +|c|)^p] \big)^{1/p}, \quad
 1 \le p \le +\infty.
\end{equation} 
Here the infimum is taken over all $\rho$ for which \eqref{integral1} holds for all $\bx \in X$, { and when $p=\infty$ the norm~\eqref{eqn: barron-norm-def} becomes
$$\inf_{\rho} \max_{(a,\bb,c)\in \mbox{supp}(\rho)} |a|(\|\bb\|_1+|c|).$$
}
Barron spaces $\cB_p$ are defined as {the set of continuous 
functions that can be represented by~\eqref{integral1} 
with finite Barron norm.}
We name these spaces after Barron to honor his contribution to the
mathematical analysis of two-layer neural networks, in particular the work in
\cite{barron1993universal,barron1994approximation,klusowski2016risk}.

\begin{remark}
It should be noted that the Barron norm defined here is different from the
spectral norm used in Barron's original papers (see for example \cite{barron1993universal}).
\end{remark}

As a consequence of the {H\"older's inequality}, we trivially have
\begin{equation}\label{eqn: holder}
  \mathcal{B}_{\infty} \subset \cdots \mathcal{B}_{2} \subset \mathcal{B}_{1}. \nonumber
\end{equation}
However, the opposite is also true for the ReLU activation function we are considering.
\begin{proposition}
\label{pro: barron-space-eq}
For any $f\in\cB_1$, we have $f\in\cB_\infty$ and 
\[
\|f\|_{\cB_1} = \|f\|_{\cB_{\infty}}.
\]
\end{proposition}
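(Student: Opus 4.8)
The plan is to show that any representing measure $\rho$ for $f$ can be replaced by another representing measure $\tilde\rho$ supported on the set where $|a|(\|\bb\|_1+|c|)$ is constant, without increasing the relevant moment. The key observation is the positive homogeneity of ReLU: for any $t>0$, $a\,\sigma(\bb^T\bx+c) = (a/t)\,\sigma(t\bb^T\bx+tc)$. Thus along each ray $(a,\bb,c)\mapsto(a/t,t\bb,tc)$ the integrand is unchanged, while the quantity $|a|(\|\bb\|_1+|c|)$ is invariant under this rescaling. This means we may push $\rho$ forward under a normalization map that sends each $(a,\bb,c)$ (with $(\bb,c)\neq 0$) to the point on its orbit where, say, $\|\bb\|_1+|c|=1$ and the weight $a$ is replaced by $a(\|\bb\|_1+|c|)$; points with $(\bb,c)=0$ contribute a constant to $f$ and can be folded in harmlessly (e.g.\ absorbed into a single atom, or shown to have the same norm contribution).

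Concretely, first I would fix $f\in\cB_1$ and an arbitrary $\rho$ realizing \eqref{integral1} with $\EE_\rho[|a|(\|\bb\|_1+|c|)]$ finite and close to $\|f\|_{\cB_1}$. Define $T:\Omega\to\Omega$ by $T(a,\bb,c) = \big(a(\|\bb\|_1+|c|),\, \hat{(\bb,c)}\big)$ using the hat-normalization from the Notations, and let $\tilde\rho = T_\#\rho$. Using homogeneity of $\sigma$, one checks that $\int a\sigma(\bb^T\bx+c)\,d\tilde\rho = \int a\sigma(\bb^T\bx+c)\,d\rho = f(\bx)$, so $\tilde\rho$ is again a valid representation. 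On the support of $\tilde\rho$ we have $\|\bb\|_1+|c|=1$ (off the null part), hence $|a|^p(\|\bb\|_1+|c|)^p = |a|^p$ is the same for every $p$, and moreover $\EE_{\tilde\rho}[|a|^p] = \EE_\rho[|a|^p(\|\bb\|_1+|c|)^p]$. Taking $p=1$ shows $\EE_{\tilde\rho}[|a|] = \EE_\rho[|a|(\|\bb\|_1+|c|)]$, so $\tilde\rho$ has the same $\cB_1$-cost; taking $p=\infty$, the essential supremum of $|a|$ under $\tilde\rho$ need not a priori be controlled, so the remaining work is to produce a measure for which $|a|$ is actually (essentially) constant.

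The main obstacle, and the crux of the argument, is this last step: passing from "$|a|(\|\bb\|_1+|c|)$ constant in distribution sense after normalizing the $(\bb,c)$ part" to "$|a|$ essentially bounded by $\|f\|_{\cB_1}$." I expect to handle it by a second reparametrization that redistributes mass along the $a$-direction: since only $a\,\sigma(\bb^T\bx+c)$ (a quantity linear in $a$) enters $f$, one can split any atom or piece of $\tilde\rho$ at a point $(a_0,\bb_0,c_0)$ into mass spread over $\{(s,\bb_0,c_0): |s|=\|f\|_{\cB_1}\}$ with appropriate signed weights so that the $a$-average is preserved while $|a|$ is pinned to the value $\|f\|_{\cB_1}$ — more cleanly, write $f$ via the normalized $\tilde\rho$ as an average of functions $g_{\bb,c}(\bx) := (\EE_{\tilde\rho}|a|)\,\sigma(\bb^T\bx+c)$ against the probability measure $|a|\,d\tilde\rho / \EE_{\tilde\rho}|a|$ (with the sign of $a$ pushed into $(\bb,c)\mapsto(\sgn(a)\bb,\sgn(a)c)$, again using homogeneity/oddness on the relevant half-line), which exhibits a representing measure whose $|a|$-value is identically $\EE_{\tilde\rho}|a| \le \|f\|_{\cB_1}+\epsilon$. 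This yields $\|f\|_{\cB_\infty}\le \|f\|_{\cB_1}+\epsilon$ for all $\epsilon>0$, hence $\|f\|_{\cB_\infty}\le\|f\|_{\cB_1}$; combined with \eqref{eqn: holder} this gives equality and in particular $f\in\cB_\infty$. Throughout, the only technical care needed is measurability of the maps $T$ and the sign-normalization (routine, since they are continuous off a closed null set) and the treatment of the degenerate set $(\bb,c)=0$, which contributes only a constant function and can be merged into the construction without changing any norm.
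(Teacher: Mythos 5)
Your overall strategy is the same as the paper's: exploit positive homogeneity of the ReLU to normalize $(\bb,c)$ onto $\Lambda=\{\|\bb\|_1+|c|=1\}$ (absorbing the scale into $a$), then reweight the measure by $|a|$ and renormalize so that the coefficient magnitude is pinned to the constant $\EE_{\tilde\rho}[|a|]\le\|f\|_{\cB_1}+\varepsilon$, which gives $\|f\|_{\cB_\infty}\le\|f\|_{\cB_1}$, with the reverse inequality from H\"older. The one step that fails as written is your treatment of the sign of $a$: you propose to push $\sgn(a)$ into the inner parameters via $(\bb,c)\mapsto(\sgn(a)\bb,\sgn(a)c)$, invoking ``homogeneity/oddness''. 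But $\sigma(z)=\max\{z,0\}$ is positively homogeneous and \emph{not} odd: for $a<0$ one has $a\,\sigma(\bb^T\bx+c)=-|a|\,\sigma(\bb^T\bx+c)$, whereas $|a|\,\sigma(-\bb^T\bx-c)$ differs from this by $|a|(\bb^T\bx+c)$, so the proposed reparametrization changes the represented function. The sign must remain attached to the outer coefficient. The fix is exactly the paper's device: keep a sign coordinate, i.e.\ push forward the probability measure $\frac{|a|}{\EE_{\tilde\rho}[|a|]}\,d\tilde\rho$ under $(a,\bb,c)\mapsto\bigl(\sgn(a)\,\EE_{\tilde\rho}[|a|],\,\bb,\,c\bigr)$ (equivalently, split into $\rho_{+},\rho_{-}$ and work on $\{-1,1\}\times\Lambda$ with $a=\pm1$ before rescaling). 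Then the coefficient takes only the values $\pm\EE_{\tilde\rho}[|a|]$, its absolute value is constant, the representation of $f$ is preserved, and $\|f\|_{\cB_\infty}\le\|f\|_{\cB_1}+\varepsilon$ follows; letting $\varepsilon\to0$ and using \eqref{eqn: holder} completes the argument.

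Two smaller points. Your first (atom-splitting) variant, which spreads the mass at $(a_0,\bb_0,c_0)$ over $\{|s|=\|f\|_{\cB_1}\}$ with nonnegative weights preserving both the local mass and the mean, only works when $|a_0|\le\|f\|_{\cB_1}$, which need not hold pointwise under $\tilde\rho$; the global reweighting you call the ``cleaner'' route is the one that works and is what the paper uses. Also, points with $(\bb,c)=(0,0)$ contribute $a\,\sigma(0)=0$, not an arbitrary constant, so they can simply be dropped (this only decreases the cost); this is a harmless simplification of your ``fold in the degenerate set'' remark.
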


As a consequence, we have that for any $1\leq p\leq \infty$,  $\cB_{p}=\cB_\infty$ and $\|f\|_{\cB_p}=\|f\|_{\cB_{\infty}}$. 
Hence, we can use $\cB$ and $\|\cdot\|_{\cB}$ to denote the Barron space and Barron norm.

A natural question is: 
What kind of functions  are in the Barron space?
The following is a restatement of an important result proved in \cite{klusowski2016risk}.
 It is an extension of the Fourier analysis of two-layer sigmoidal neural networks in Barron's seminal work \cite{barron1993universal}.

\begin{proposition}[Theorem 6 in~\cite{klusowski2016risk}]\label{thm:specnorm}
Let { $f\in C(X)$}, the space of continuous functions on $X$, and assume that $f$ satisfies:
$$
\gamma(f):=\inf_{\hat{f}}\int_{\RR^d} \|\omega\|_1^2 |\hat{f}(\omega) | d \omega < \infty,
$$
where $\hat{f}$ is the Fourier transform of an extension of $f$ to $\RR^d$. Then $f$ admits a representation
as in \eqref{integral1}.
Moreover,  
\begin{equation}\label{eq:specnorm}
\|f\|_{\mathcal{B}} \le 2\gamma(f) + 2\|\nabla f(0)\|_1 + 2 |f(0)|.
\end{equation}
\end{proposition}

\begin{remark}
In Section 9 of~\cite{barron1993universal}, examples of functions with bounded $\gamma(f)$ are given (e.g. Gaussian, positive definite functions,  etc.). \cite{barron1993universal} used the norm
$
\int_{\RR^d} \|\omega\| |\hat{f}(\omega) | d \omega,
$
 instead of  $\gamma(f)$, but the analysis also shows that Gaussian and positive definite functions give rise to finite values of $\gamma(f)$.
By Proposition~\ref{thm:specnorm}, these functions belong to the Barron space. 
\end{remark}

In addition, the Barron space is also closely related to a family of RKHS.  { Let $\bw = (\bb,c)$. Due to the scaling invariance of $\sigma(\cdot)$, we can assume $\bw\in \mathbb{S}^d$.} Then \eqref{integral1} can be written as 
\begin{equation}\label{eqn: express_pi}
   f(\bx) = \int_{\SS^{d}} a \sigma(\bw^T\tilde{\bx}) \rho(d a,d\bw) = \int_{\SS^d} a(\bw)\sigma(\bw^T\tilde{\bx}) \pi(d\bw),
\end{equation}
\[
   a(\bw) = \frac{\int_{\RR} a \rho(a,\bw)da}{{ \pi(\bw)}}, \quad
   { \pi(\bw) = \int_{\RR} \rho(a,\bw) da }
\]
Moreover,
\begin{equation}\label{eqn: def-b2norm}
  \|f\|_{\cB_2}^2 = \inf_{\pi} \EE_{\pi}[|a(\bw)|^2], \nonumber
\end{equation}
{where the infimum is taken over all $\pi$ that satisfies~\eqref{eqn: express_pi}.}

Given a fixed probability distribution $\pi$, we can define a kernel:
$$
k_{\pi} (\bx,\bx')=\mathbb{E}_{w\sim\pi}[\sigma(\bw^T\tilde{\bx})\sigma(\bw^T\tilde{\bx}')]
$$
Let $\cH_{k_{\pi}}$ denote the RKHS induced by $k_{\pi}$. Then we have the following proposition.

\begin{proposition}\label{prop: union_rkhs}
$$ 
\mathcal{B} = \bigcup_{\pi\in P(\SS^{d})} \cH_{k_{\pi}} .
$$
\end{proposition}

\subsection{Direct and inverse approximation theorems}

With \eqref{integral1}, approximating $f$ by two-layer networks becomes 
a Monte Carlo integration problem. 

\begin{theorem}\label{thm: barron-directapprox}
For any $f \in \mathcal{B}$ and $m>0$, there exists a two-layer neural network $f_m(\cdot; \Theta),
f_m(\bx; \Theta)= \frac{1}{m}\sum_{k=1}^m a_k\sigma(\bb_k^T\bx+c_k)$ ($\Theta$ denotes the parameters $\{(a_k, \bb_k, c_k), k \in [m] \}$ in the neural network),
such that
$$
\|f(\cdot)- f_m(\cdot;\Theta) \|^2 \le  \frac{3 \|f \|^2_{\mathcal{B}}}{m},
$$
Furthermore, we have
\[
\|\Theta\|_{\cP}:=\frac{1}{m}\sum_{j=1}^m|a_j|(\|\bb_j\|_1+|c_j|) \leq 2\|f\|_{\cB}.
\]
\end{theorem}
\begin{remark}
{ We call $\|\Theta\|_{\cP}$ the path norm of two-layer neural network. This is the analog of the Barron norm of functions in $\cB$. Hence, when studying approximation properties, it is natural to study two-layer neural networks with bounded path norm. }
\end{remark}

One can also prove an inverse approximation theorem.
To state this result, we define:
\[
\mathcal{N}_{Q} = \left\{\,\frac{1}{m}\sum_{k=1}^m a_k\sigma(\bb_k^T\bx+c_k) : 
\frac{1}{m}\sum_{k=1}^m |a_k|(\|\bb_k\|_1+|c_k|) \leq Q, m\in \mathbb{N}^{+}\,\right\}.
\]
\begin{theorem}\label{thm: barron-invapprox}
Let $f^*$ be a continuous function on $X$. 
Assume there exists a constant $Q$ and a sequence of functions  $(f_m) \subset \mathcal{N}_{Q} $
such that
$$ 
f_m(\bx) \rightarrow f^*(\bx) 
$$
for all $\bx \in X$. Then
there exists a probability distribution $\rho^*$ on {$(\Omega, \Sigma_{\Omega})$}, such that
\[
   f^*(\bx) = \int a\sigma(\bb^T\bx+c) \rho^*(da,d\bb,dc),
\]
for all $\bx \in X$.
Furthermore, we have
$f^* \in \mathcal{B}$ with
\[
\|f^* \|_{\mathcal{B}} \le Q.
\]
\end{theorem}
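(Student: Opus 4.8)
The plan is to represent each $f_m$ as the integral of $a\,\sigma(\bb^T\bx+c)$ against a probability measure supported on one fixed compact set, and then pass to a weak limit.

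First I would normalize the parameters using the positive homogeneity of the ReLU. Writing $f_m$ in the form defining $\cN_Q$, say $f_m(\bx)=\frac1n\sum_{k=1}^{n}a_k\sigma(\bb_k^T\bx+c_k)$ with $\frac1n\sum_{k}|a_k|(\|\bb_k\|_1+|c_k|)\le Q$ (the width $n$ is allowed to depend on $m$), I discard the indices with $\gamma_k:=|a_k|(\|\bb_k\|_1+|c_k|)=0$, whose terms vanish identically, and for the remaining indices put $\alpha_k=a_k(\|\bb_k\|_1+|c_k|)$, so that $|\alpha_k|=\gamma_k>0$, and $(\hat\bb_k,\hat c_k)=(\bb_k,c_k)/(\|\bb_k\|_1+|c_k|)$ satisfies $\|\hat\bb_k\|_1+|\hat c_k|=1$; then $a_k\sigma(\bb_k^T\bx+c_k)=\alpha_k\,\sigma(\hat\bb_k^T\bx+\hat c_k)$. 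Assuming $Q>0$ (the case $Q=0$ forces $f_m\equiv 0$ and is trivial), I define
\[
\rho_m=\sum_{k}\frac{\gamma_k}{nQ}\,\delta_{(\sgn(\alpha_k)Q,\,\hat\bb_k,\,\hat c_k)}+\Bigl(1-\frac1{nQ}\sum_{k}\gamma_k\Bigr)\delta_{(0,0,0)},
\]
which is a genuine probability measure because $\frac1n\sum_k\gamma_k\le Q$. A direct computation (the atom at the origin contributes $\sigma(0)=0$) gives $\int a\sigma(\bb^T\bx+c)\,\rho_m(da,d\bb,dc)=\frac1n\sum_k\alpha_k\sigma(\hat\bb_k^T\bx+\hat c_k)=f_m(\bx)$, and every atom of $\rho_m$ lies in the compact set $K=\{(a,\bb,c)\in\Omega:|a|\le Q,\ \|\bb\|_1+|c|\le 1\}$, on which $|a|(\|\bb\|_1+|c|)\le Q$.

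Next I would invoke compactness. The $\rho_m$ form a sequence of probability measures on the compact metric space $K$, so by Prokhorov's theorem some subsequence $\rho_{m_j}$ converges weakly to a probability measure $\rho^*$ on $K$ (total mass is preserved since constants are continuous); extending by zero, I view $\rho^*$ as a measure on $(\Omega,\Sigma_\Omega)$ supported in $K$. For each fixed $\bx\in X$ the map $(a,\bb,c)\mapsto a\sigma(\bb^T\bx+c)$ is continuous and bounded on $K$, so weak convergence yields $\int a\sigma(\bb^T\bx+c)\,\rho^*(da,d\bb,dc)=\lim_j\int a\sigma(\bb^T\bx+c)\,\rho_{m_j}=\lim_j f_{m_j}(\bx)=f^*(\bx)$, using the assumed pointwise convergence; this is the desired integral representation of $f^*$. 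Finally, since $\rho^*$ is supported in $K$, where $|a|(\|\bb\|_1+|c|)\le Q$, we have $\|\,|a|(\|\bb\|_1+|c|)\,\|_{L^\infty(\rho^*)}\le Q$, hence $\|f^*\|_{\cB_\infty}\le Q$; Proposition~\ref{pro: barron-space-eq} then gives $\|f^*\|_{\cB}=\|f^*\|_{\cB_\infty}\le Q$, and together with the continuity of $f^*$ this shows $f^*\in\cB$.

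The algebra defining $\rho_m$ and the portmanteau/continuity steps are routine; the point that really requires care --- and the reason the homogeneity rescaling and the extra atom at the origin are introduced --- is to confine all of the $\rho_m$ to a single compact set, so that tightness is automatic and no mass escapes to infinity when we pass to the limit. The ``naive'' empirical measures built directly from the unnormalized parameters would instead sit on unbounded parameter sets, for which tightness, and hence the existence of a weak limit, is not clear.
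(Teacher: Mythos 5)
Your proof is correct and follows essentially the same route as the paper's: rescale by the homogeneity of the ReLU so the parameters lie in a fixed compact set $K$, represent each $f_m$ by a discrete probability measure on $K$, extract a weak limit via Prokhorov, pass to the limit pointwise, and bound $\|f^*\|_{\cB}$ through $\cB_\infty$ and Proposition~\ref{pro: barron-space-eq}. The only difference is cosmetic bookkeeping in the construction of $\rho_m$ (your atoms at $a=\pm Q$ plus a dummy atom at the origin, versus the paper's atoms at $a=\pm A/m$ weighted by $|a_k|/A$), which does not change the argument.
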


\subsection{Estimates of the Rademacher complexity}
Next, we show that the Barron spaces we defined  have low complexity. We show this by bounding the Rademacher complexity of bounded sets in the Barron spaces.
\begin{definition}[Rademacher complexity]
Given a set of functions $\cF$ and $n$ data samples $S=\{\bx_1, \bx_2, \cdots, \bx_n$\}, the Rademacher complexity of $\cF$ with respect to $S$ is defined as 
\begin{equation*}
    \rad_n(\cF)=\frac{1}{n}\EE_{\bm{\xi}}\sup_{f\in\cF}\sum\limits_{i=1}^n \xi_if(\bx_i),
\end{equation*}
where $\bm{\xi}=(\xi_1, \xi_2, \cdots, \xi_n)$ is a vector of $n$ \textit{i.i.d.} random variables that satisfy $\PP(\xi=1)=\PP(\xi=-1)=\frac{1}{2}$.
\end{definition}

The following theorem gives an estimate of the Rademacher complexity of the Barron space. 
Similar results can be found in \cite{bach2017breaking}.  We include the proof in the next section for completeness.

\begin{theorem}\label{thm:barron_rad}
Let $\cF_Q = \{ f \in \mathcal{B} : \|f\|_{\mathcal{B}} \le Q  \}$.
Then we have
\begin{align}
    \rad_n(\cF_Q) \leq 2Q \sqrt{\frac{2\ln(2d)}{n}} \nonumber
\end{align}
\end{theorem}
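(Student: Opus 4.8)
The plan is to bound $\rad_n(\cF_Q)$ by reducing it to the Rademacher complexity of the rescaled ReLU dictionary $\{\pm\sigma(\bv^T\tilde\bx):\bv\in\SS^d\}$, then peeling off the nonlinearity with a contraction argument, and finally closing with a sub-Gaussian maximal inequality. \emph{Step 1 (reduction to the dictionary).} Given $f\in\cF_Q$, I would first use Proposition~\ref{pro: barron-space-eq} to write $\|f\|_{\cB}=\|f\|_{\cB_1}$, so that for every $\epsilon>0$ there is a probability measure $\rho$ on $\Omega$ representing $f$ as in \eqref{integral1} with $\EE_\rho[\,|a|(\|\bb\|_1+|c|)\,]\le Q+\epsilon$. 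Setting $\bw=(\bb,c)$ and using the positive homogeneity $\sigma(\bb^T\bx+c)=\sigma(\bw^T\tilde\bx)=\|\bw\|_1\,\sigma(\hat\bw^T\tilde\bx)$ (the $\bw=0$ contribution vanishing in both the integral and the norm), the pushforward of $\rho$ under $(a,\bw)\mapsto(a\|\bw\|_1,\hat\bw)$ yields a measure $\nu$ on $\RR\times\SS^d$ with $f(\bx)=\int u\,\sigma(\bv^T\tilde\bx)\,\nu(du,d\bv)$ and $\int|u|\,\nu(du,d\bv)\le Q+\epsilon$. Then for any sample $\bx_1,\dots,\bx_n$ and any $\xi$,
\[
\sum_{i=1}^n\xi_i f(\bx_i)\le\int|u|\,\Big|\sum_{i=1}^n\xi_i\sigma(\bv^T\tilde\bx_i)\Big|\,\nu(du,d\bv)\le (Q+\epsilon)\sup_{\bv\in\SS^d}\Big|\sum_{i=1}^n\xi_i\sigma(\bv^T\tilde\bx_i)\Big|,
\]
and taking the supremum over $f\in\cF_Q$, letting $\epsilon\to0$, and then $\EE_\xi$ gives $n\,\rad_n(\cF_Q)\le Q\,\EE_\xi\sup_{\bv\in\SS^d}\big|\sum_i\xi_i\sigma(\bv^T\tilde\bx_i)\big|$.

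\emph{Step 2 (linearization).} To remove the absolute value I would note that, since $-\be_{d+1}\in\SS^d$ makes every $\sigma(\bv^T\tilde\bx_i)$ vanish, both one-sided suprema $\sup_{\bv}\bigl(\pm\sum_i\xi_i\sigma(\bv^T\tilde\bx_i)\bigr)$ are nonnegative, so $\sup_\bv|\cdot|$ is at most their sum; combined with $-\xi\stackrel{d}{=}\xi$ this yields $\EE_\xi\sup_\bv\big|\sum_i\xi_i\sigma(\bv^T\tilde\bx_i)\big|\le 2\,\EE_\xi\sup_\bv\sum_i\xi_i\sigma(\bv^T\tilde\bx_i)$. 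Next, since $\sigma$ is $1$-Lipschitz with $\sigma(0)=0$, the Ledoux--Talagrand contraction lemma applied to the bounded set $\{(\bv^T\tilde\bx_1,\dots,\bv^T\tilde\bx_n):\bv\in\SS^d\}\subset\RR^n$ gives $\EE_\xi\sup_\bv\sum_i\xi_i\sigma(\bv^T\tilde\bx_i)\le\EE_\xi\sup_\bv\sum_i\xi_i\bv^T\tilde\bx_i$, and because $\SS^d$ is the $\ell_1$ unit sphere and the objective equals $\bv^T\bigl(\sum_i\xi_i\tilde\bx_i\bigr)$, this last supremum is exactly $\big\|\sum_i\xi_i\tilde\bx_i\big\|_\infty$.

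\emph{Step 3 (maximal inequality).} Finally I would bound $\EE_\xi\big\|\sum_i\xi_i\tilde\bx_i\big\|_\infty=\EE_\xi\max_{1\le j\le d+1}\big|\sum_i\xi_i(\tilde\bx_i)_j\big|$: each coordinate sum is mean zero with summands in $[-1,1]$, hence sub-Gaussian with variance proxy at most $n$, and the classical maximal inequality over $d+1$ such variables gives the bound $\sqrt{2n\ln(2(d+1))}$. Chaining Steps 1--3 produces
\[
\rad_n(\cF_Q)\le\frac1n\cdot 2Q\cdot\sqrt{2n\ln(2(d+1))}=2Q\sqrt{\tfrac{2\ln(2(d+1))}{n}},
\]
which is the asserted estimate up to the harmless replacement of $2(d+1)$ by $2d$. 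The main obstacle is Step 1: one must make the homogeneity/normalization argument rigorous and correctly invoke the $\cB=\cB_1$ identity, so that the abstract norm ball $\cF_Q$ is faithfully represented as (a subset of) $Q$ times the symmetric closed convex hull of $\{\pm\sigma(\bv^T\tilde\bx):\bv\in\SS^d\}$; once this is in place, Steps 2 and 3 are standard. A secondary point to watch is the constants, in particular whether the contraction lemma is applied in its one-sided or absolute-value form, since a careless application costs an extra factor of $2$.
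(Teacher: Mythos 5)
Your proposal is correct and follows essentially the same route as the paper's proof: represent $f$ by a near-optimal measure, use positive homogeneity to reduce to the ReLU dictionary over the $\ell_1$ sphere, symmetrize away the absolute value (your observation that both one-sided suprema are nonnegative is a detail the paper glosses over with ``due to the symmetry''), apply the one-sided contraction lemma, and finish by bounding the linear class, which the paper does by citing Lemma 26.11 of \cite{shalev2014understanding} rather than running the sub-Gaussian maximal inequality directly as you do. Your $\ln(2(d+1))$ versus the stated $\ln(2d)$ is in fact the honest constant, since the weights and $\tilde\bx$ live in $\RR^{d+1}$; the paper's statement carries the same minor slack.
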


{From Theorem 8 in \cite{bartlett2002rademacher}, we see that the above result implies that functions in the Barron spaces can  be learned efficiently }.

\subsection{Barron space for Non-ReLU functions and the space $\mathcal{F}_1$}

{
The definition of the Barron space and Barron norm can be extended to representations~\eqref{integral1} with $\sigma(\cdot)$ being a general activation function. Specifically, for any function $f$ with representation
\begin{equation} 
f(\bx) = \int_{\Omega} a \tilde{\sigma} (\bb^T  \bx + c) \rho (da, d\bb, dc), \quad \bx \in X,
\label{integral_general}
\end{equation}
where $\tilde{\sigma}$ is an activation function not necessarily ReLU, we define the extended Barron norm (which is denoted by $\|\cdot\|_{\tilde{\cB}_p}$) as
\begin{align}
    \|f\|_{\tilde{\cB}_p}:
    &=\inf_{\rho}\ (\mathbb{E}_{\rho}\left[|a|^p(\|\bb\|_1+|c|+1)^p\right])^{1/p},\label{eqn: extended_barron}
\end{align}
where  $p\in [1, \infty]$, and the infimum is taken over all $\rho$ for which~\eqref{integral_general} holds. The extended Barron space $\tilde{\cB}_p$ is defined as the set of functions with finite $\tilde{\cB}_p$ norm. In this case, since the homogeneity property does not hold for the activation function, $\tilde{\cB}_p$ spaces with different $p$ are not equal. The direct approximation theorem and Rademacher complexity control can be proven for $\tilde{\cB}_p$ as long as $\tilde{\sigma}$ satisfies
\begin{equation*}
     \int_\bR |\tilde{\sigma}''(x)|(|x|+1)dx<\infty. 
\end{equation*}
See~\cite{li2020complexity} for more details. 

We deal with general activation functions by approximating them using two-layer ReLU neural networks, and the ``$+1$'' term in~\eqref{eqn: extended_barron} appears naturally during the approximation process. It is worth mentioning that if $\tilde{\sigma}=$ReLU the $\tilde{\cB}_p$ norms become equivalent with the Barron norm $\|\cdot\|_\cB$, because of the infimum and the homogeneity property.

In \cite{bach2017breaking},  a similar function space $\cF_1$ is defined by using  the variation norm~\cite{kurkova2001bounds,mhaskar2004tractability}. In \cite{bach2017breaking}, signed measures are used to represent the function as follows, 
\begin{equation}\label{eqn: bach-representation}
    f(x) = \int_{\mathcal{V}} \sigma(\bb^T\bx+c) d\mu(\bb,c),
\end{equation}
where $\mathcal{V}$ is the support of the signed measure $\mu$. Let $S_f$ denote the set of signed measures such that \eqref{eqn: bach-representation} holds. 
The $\cF_1$ norm of $f$ is given by 
\begin{equation*}
    \begin{aligned}
    \|f\|_{\cF_1}:= \inf_{\mu\in S_f}|\mu|(\mathcal{V}),
    \end{aligned}
\end{equation*}
where $|\mu|(\mathcal{V})$ denotes the total variation of $\mu$. The estimate of Rademacher complexity of $\cF_1$ is provided for the ReLU activation function.

For ReLU activation function, $\cF_1$ is equivalent with $\cB$, and the norms are equal, too~\cite{wojtowytsch2020representation}. However, for a general activation function (e.g. tanh, sigmoid), the Barron space is different from $\cF_1$.
$\cF_1$ typically requires $(\bb,c)$ to lie in a compact set, which is generally not true. With $(\bb,c)$ being in a compact set, the variation norm only considers $a$ and treat features with any $(\bb,c)$ equivalently. Hence, a very simple feature will have the same variation norm as a complicated feature, which leads to loose bounds for simple functions. On the contrary, the $\tilde{\cB}_p$ norms consider $(a,\bb,c)$ together, and features with different $(\bb,c)$ make different contributions to the norm. 
}

\subsection{Proofs}
\subsubsection{Proof of Proposition \ref{pro: barron-space-eq}}
Take $f\in\cB_1$. { For any $\varepsilon>0$,} there exists a probability measure $\rho$ that satisfies
\begin{equation} 
f(\bx) = \int_{\Omega} a \sigma (\bb^T  \bx + c) \rho (da, d\bb, dc), \quad \forall \, \bx \in X, \nonumber
\end{equation}
and 
\begin{equation}
\EE_\rho \left[ |a|(\|\bb\|_1+|c|) \right]< \|f\|_{\cB_1}+\varepsilon.\nonumber
\end{equation}
Let $\Lambda=\{(\bb,c):\ \|\bb\|_1+|c|=1\}$, and consider two measures $\rho_{+}$ and $\rho_{-}$ on $\Lambda$ defined by 
\begin{eqnarray}
\rho_{+}(A)=\int_{\{(a,\bb,c):\ (\hat{\bb},\hat{c})\in A, a>0\}} |a|(\|\bb\|_1+|c|)\rho(da,d\bb,dc),\nonumber\\
\rho_{-}(A)=\int_{\{(a,\bb,c):\ (\hat{\bb},\hat{c})\in A, a<0\}} |a|(\|\bb\|_1+|c|)\rho(da,d\bb,dc),\nonumber
\end{eqnarray}
for any Borel set $A\subset\Lambda$, where 
\begin{equation}
\hat{\bb}=\frac{\bb}{\|\bb\|_1+|c|},\ \ \hat{c}=\frac{c}{\|\bb\|_1+|c|}.\nonumber
\end{equation}
Obviously  $\rho_{+}(\Lambda)+\rho_{-}(\Lambda)=\EE_\rho \left[ |a|(\|\bb\|_1+|c|) \right]$, and
\begin{equation}
f(\bx) = \int_{\Lambda} \sigma(\bb^T\bx+c)\rho_{+}(d\bb,dc)-\int_{\Lambda} \sigma(\bb^T\bx+c)\rho_{-}(d\bb,dc).\nonumber
\end{equation}
Next, we define extensions of  $\rho_{+}$ and $\rho_{-}$ to $\{-1,1\}\times\Lambda$ by 
\begin{eqnarray}
&&\tilde{\rho}_{+}(A')=\rho_{+}(\{(\bb,c):\ (1,\bb,c)\in A'\}), \nonumber\\
&&\tilde{\rho}_{-}(A')=\rho_{-}(\{(\bb,c):\ (-1,\bb,c)\in A'\}),\nonumber
\end{eqnarray}
for any Borel sets $A'\subset \{-1,1\}\times\Lambda$, and let $\tilde{\rho}=\tilde{\rho}_{+}+\tilde{\rho}_{-}$. Then we have $\tilde{\rho}(\{-1,1\}\times\Lambda)=\EE_\rho \left[ |a|(\|\bb\|_1+|c|) \right]$ and
\begin{equation}
f(\bx) = \int_{\{-1,1\}\times\Lambda}a\sigma(\bb^T\bx+c)\tilde{\rho}(da,d\bb,dc).\nonumber
\end{equation}
Therefore, we can normalize $\tilde{\rho}$ to be a probability measure, and
\begin{equation}
    \|f\|_{\cB_\infty} \leq \tilde{\rho}(\{-1,1\}\times\Lambda){ \leq\|f\|_{\cB_1}+\varepsilon}.\nonumber
\end{equation}
Taking the limit as $\varepsilon\rightarrow0$, we have $\|f\|_{\cB_\infty}\leq\|f\|_{\cB_1}$. Since $\|f\|_{\cB_1}\leq \|f\|_{\cB_{\infty}}$ from H\"older's inequality, we conclude that $\|f\|_{\cB_1}=\|f\|_{\cB_{\infty}}$.
\qed

\subsubsection{Proof of Theorem~\ref{prop: union_rkhs}}
According to  \cite{rahimi2008uniform}, we have the following characterization of $\cH_{k_\pi}$:
\[
  \cH_{k_\pi} = \left\{\int_{\SS^d} a(\bw) \sigma(\bw^T\tilde{\bx}) d \pi(\bw) : \EE_{\pi}[|a(\bw)|^2]<\infty\right\}.
\]
In addition, for any $h\in \cH_{k_\pi}$, $\|h\|^2_{\cH_{k_{\pi}}} = \EE_{\pi}[|a(\bw)|^2]$. It is obvious that 
for any $\pi \in P(\SS^d)$, 
$
\cH_{k_{\pi}}  \subset \mathcal{B}_2
$, which implies that $\cup_{\pi} \cH_{k_{\pi}}\subset \cB_2$. 
Conversely, for any $f\in \cB_2$, { there exists a probability distribution $\tilde{\pi}$ that satisfies
\begin{equation}
  f(\bx)=\int_{\SS^d} a(\bw)\sigma(\bw^T\tilde{\bx})\tilde{\pi}(d\bw) \quad \forall \bx \in X, \nonumber
\end{equation} 
and $\EE_{\tilde{\pi}}[|a(\bw)|^2]\leq2\|f\|^2_{\cB_2}<\infty$. 
Hence  we have $f\in\cH_{k_{\tilde{\pi}}}$, which implies $\cB_2\subset \cup_{\pi} \cH_{k_\pi}$.
}Therefore $\cB_2=\cup_{\pi} \cH_{k_{\pi}}$. 
Together with Proposition~\ref{pro: barron-space-eq}, we  complete the proof.
\qed

\subsubsection{Proof of Theorem \ref{thm: barron-directapprox}}

Let $\varepsilon$ be a positive number such that $\varepsilon < 1/5$.
 Let $\rho$ be a probability distribution such that ${ f(\bx)=\EE_{\rho}[a\sigma(\bb^T\bx+c)]}$ and $\EE_{\rho}[|a|^2(\|\bb\|_1+|c|)^2]\leq(1+\varepsilon)\|f\|^2_{\cB_2}$.
Let $\phi(\bx;\theta)=a \sigma(\bb^T\bx+c)$ with $\theta=(a,\bb,c)\sim \rho$. 
Then we have $\EE_{\theta \sim \rho}[\phi(\bx;\theta)]=f(\bx)$.
Let $\Theta=\{\theta_j\}_{j=1}^m$ be i.i.d.\ random variables drawn from $\rho(\cdot)$, and consider the following empirical average,  
\[
    \hat{f}_m(\bx;\Theta) = \frac{1}{m}\sum_{j=1}^m\phi(\bx;\theta_j).
\]
Let $\cE(\Theta)=\EE_{\bx}[|\hat{f}_m(\bx;\Theta)-f(\bx)|^2]$ be the approximation error. Then we have
\begin{align*}
    \EE_{\Theta}[\cE(\Theta)] &= \EE_{\Theta}\EE_{\bx}|\hat{f}_m(\bx;\Theta)-f(\bx)|^2 \\
    &=  \EE_{\bx}\EE_{\Theta}|\frac{1}{m}\sum_{j=1}^m\phi(\bx;\theta_j)-f(\bx)|^2\\
    &= \frac{1}{m^2}\EE_{\bx}\sum_{j,k=1}^m \EE_{\theta_j,\theta_k}[(\phi(\bx;\theta_j) - f(\bx)) (\phi(\bx;\theta_k) - f(\bx)) ]\\
    &\leq \frac{1}{m^2} \sum_{j=1}^m\EE_{\bx}\EE_{\theta_j}[(\phi(\bx;\theta_j)-f(\bx))^2] \\
    &\leq \frac{1}{m}  \EE_{\bx}\EE_{\theta\sim \rho}[\phi^2(\bx;\theta)] \\
    &\leq \frac{{ (1+\varepsilon)}\|f\|^2_{\cB_2}}{m}.
\end{align*}
In addition, 
\[
\EE_{\Theta}[\|\Theta\|_{\cP}]=\frac{1}{m}\sum_{j=1}^m \EE_{\Theta}[\|a_j\|(\|\bb_j\|_1+|c_j|)] \leq { (1+\varepsilon)}\|f\|_{\cB_2}.
\]

Define the event $E_1=\{\cE(\Theta) < \frac{3 \|f\|^2_{\cB_2}}{m}\}$, and $E_2=\{\|\Theta\|_{\cP} < 2\|f\|_{\cB_2}\}$.
By Markov inequality, we have 
\begin{align*}
    \PP\{E_1\} &=1 -\PP\{E_1^c\} \geq  1 - \frac{\EE_{\Theta}[\cE(\Theta)]}{3\|f\|_{\cB_2}^2/m}\geq \frac{{ 2-\varepsilon}}{3} \\
    \PP\{E_2\}&= 1- \PP\{E_2^c\} \geq  1- \frac{\EE_{\Theta}[\|\Theta\|_{\cP}]}{2\|f\|_{\cB_2}}\geq \frac{{ 1-\varepsilon}}{2}.
\end{align*}
Therefore we have
\begin{align*}
\PP\{E_1\cap E_2\} = \PP\{E_1\}+\PP\{E_2\} -1 \geq \frac{{ 2-\varepsilon}}{3}+ \frac{{ 1-\varepsilon}}{2}-1 = \frac{1-5\varepsilon}{6}>0.
\end{align*} 
Choose any $\Theta$ in  $ E_1\cap E_2$.   The two-layer neural network model defined by this $\Theta$ satisfies both
requirements in the theorem.
\qed

\subsubsection{Proof of Theorem \ref{thm: barron-invapprox}}
Without loss of generality, we assume that $\|\bb\|_1+|c| = 1$, otherwise due to the scaling invariance of $\sigma(\cdot)$ we can redefine the parameters as follows,
\[
  a \leftarrow a (\|\bb\|_1 + |c|),\quad \bb \leftarrow \frac{\bb}{\|\bb\|_1+|c|}, \quad c \leftarrow \frac{c}{\|\bb\|_1+|c|}.
\]
Let $\Theta_m=\{(a_{k}^{(m)},\bb_{k}^{(m)},c_{k}^{(m)})\}_{k=1}^m$ be the parameters in the two-layer neural network model $f_m$ and let  $A=\sum_{k=1}^m |a_k|$ and $\alpha_k = \frac{|a_k|}{A}$.
Then we can define a probability measure:
\[
  \rho_m = \sum_{k=1}^m \alpha_k \delta\left(a-\frac{\sgn(a_{k}^{(m)}) A}{m} \right)\delta(\bb-\bb_{k}^{(m)})\delta(c-c_{k}^{(m)}),
\]
which satisfies 
\[
  f_m(\bx;\Theta_m) = \int a\sigma(\bb^T\bx+c) \rho_m(da,d\bb,dc).
\]

Let
\[
  K_Q =\{(a,\bb,c) : |a|\leq Q, \|\bb\|_1+|c|\leq 1\}.
\]
It is obvious that $\text{supp}(\rho_m)\subset K_Q$ for all $m$.
Since $K_{Q}$ is compact, the sequence of probability measure $(\rho_m)$ is tight. By Prokhorov's Theorem, there exists a subsequence $(\rho_{m_k})$ and a probability measure $\rho^*$ such that $\rho_{m_k}$ converges weakly to $\rho^*$.

The fact that $\text{supp}(\rho_m)\subset K_Q$ implies $\text{supp}(\rho^*)\subset K_Q$. Therefore, we  have 
\[
  \|f^*\|_{\cB} = \|f^*\|_{\cB_{\infty}}\leq Q.
\]
For any $\bx\in X$,    $a\sigma(\bb^T\bx+c)$ is continuous with respect to $(a,\bb,c)$ and bounded from above by $Q$. Since $\rho^*$ is the weak limit of $\rho_{m_k}$, we have 
\[
  f^*(\bx) = \lim_{k\to \infty} \int a\sigma(\bb^T\bx+c) d\rho_{m_k} = \int a\sigma(\bb^T\bx+c) d\rho^*(da,d\bb,dc).
\]
\qed 

\subsubsection{Proof of Theorem~\ref{thm:barron_rad}}
\label{sec: proof-barron-rad}
Let $\bw=(\bb^T,c)^T$ and $\tilde{\bx}=(\bx^T,1)^T$. { For any $\varepsilon>0$ and $f\in\cB$, let $\rho_f^\varepsilon(a,\bw)$ be a distribution such that $f(\bx)=\EE_{\rho_f^\varepsilon}[a\sigma(\bb^T\bx+c)]$ and $\EE_{\rho_f^\varepsilon}[|a|\|\bw\|_1]<(1+\varepsilon)\|f\|_{\cB}$.  Then,} 
\begin{align}\label{eqn: ppp}
\nonumber n \rad_n(\cF_Q) 
&= \EE_{\bm{\xi}}[\sup_{f\in \cF_Q}\sum_{i=1}^n \xi_i \EE_{\rho_f^\varepsilon}[a\sigma(\bw^T\bx_i)]]\\
\nonumber &=\EE_{\bm{\xi}}[\sup_{f\in \cF_Q}\EE_{\rho_f^\varepsilon}[\sum_{i=1}^n \xi_i a\sigma(\bw^T\bx_i)]]\\
\nonumber &=\EE_{\bm{\xi}}[\sup_{f\in \cF_Q}\EE_{\rho_f^\varepsilon}[|a|\|\bw\|_1|\sum_{i=1}^n \xi_i\sigma(\hat{\bw}^T\bx_i)|]]\\
&\leq (1+\varepsilon)Q \EE_{\bm{\xi}}[\sup_{\|\bw\|\leq 1}|\sum_{i=1}^n \xi_i\sigma(\bw^T\bx_i)|].
\end{align}
{
Due to the symmetry, we have 
\begin{align}\label{eqn: qqq}
\notag \EE_{\bm{\xi}}[\sup_{\|\bw\|\leq 1}|\sum_{i=1}^n \xi_i\sigma(\bw^T\bx_i)|] &\leq \EE_{\bm{\xi}}[\sup_{\|\bw\|\leq 1}\sum_{i=1}^n \xi_i\sigma(\bw^T\bx_i)] + \EE_{\bm{\xi}}[\sup_{\|\bw\|\leq 1}-\sum_{i=1}^n \xi_i\sigma(\bw^T\bx_i)] \\
\notag &=2\EE_{\bm{\xi}}[\sup_{\|\bw\|\leq 1} \sum_{i=1}^n \xi_i\sigma(\bw^T\bx_i)] \\
&\leq 2\EE_{\bm{\xi}}[\sup_{\|\bw\|\leq 1} \sum_{i=1}^n \xi_i\bw^T\bx_i],
\end{align}
where the last inequality follows from the contraction property of Rademacher complexity (see Lemma 26.9 in \cite{shalev2014understanding}) and 
the fact that $\sigma(\cdot)$ is Lipschitz continuous with Lipschitz constant $1$. Applying Lemma 26.11 in \cite{shalev2014understanding} and plugging \eqref{eqn: qqq} into \eqref{eqn: ppp}, we obtain 
\begin{equation}
\rad_n(\cF_Q) \leq 2(1+ \varepsilon )Q\sqrt{\frac{2\ln(2d)}{n}}. \nonumber
\end{equation}
Taking $\varepsilon\to 0$, we complete the proof.
}
\qed

\section{Flow-induced function spaces}
\label{sec: cfs}

In this section, we carry out a similar program for residual neural networks. Since the limit of these networks give rise to continuous in time flows, the natural function spaces and norms associated with the residual neural networks are also flow-based. For this reason we call them flow-induced spaces and flow-induced norms, respectively.
Similar to what was done in the last section, we establish a natural connection between  these function spaces and
residual neural networks, by proving direct and inverse approximation theorems. We also prove a complexity bound 
for the flow-induced space. 

We postpone all the proofs to the end of this section.

\subsection{The compositional law of large numbers}
We consider residual neural networks defined by
\begin{eqnarray}
 \bz_{0,L}(\bx) &=& \bV\bx, \nonumber \\
 \bz_{l+1,L}(\bx) &=& \bz_{l,L}(\bx)+\frac{1}{L}\bU_l\sigma \circ(\bW_l z_{l,L}(\bx)), \nonumber \\
{ f_L(\bx;\Theta)} &=& \balpha^T\bz_{L,L}(\bx), \label{eq:resnet}
\end{eqnarray}
where $\bx\in\bR^d$ is the input,  $\bV \in \RR^{D\times d}, \bW_l\in\bR^{m\times D}$, $\bU_l\in\bR^{D\times m}, \balpha\in\bR^D$ and we use $\Theta:=\{\bV, \bU_1,\dots,\bU_L, \bW_l,\dots,\bW_L, \balpha\}$ to denote all the parameters to be learned from data. Without loss of generality, we will fix $\bV$ to be
\begin{equation}\label{eq:bV}
    \bV=\left[\begin{array}{l}
         I_{d\times d} \\
         0_{(D-d)\times d}
    \end{array}\right].
\end{equation}
We will fix $D$ and $m$ throughout this paper, and when there is no danger for confusion
 we will omit $\Theta$ in the notation and use $f_L(\bx)$ to denote the residual network for simplicity.

For two layer neural networks, if the parameters $\{a_k,\bb_k,c_k\}$ are i.i.d sampled from a probability distribution $\rho$, then we have
\begin{equation}
\frac{1}{m}\sum\limits_{k=1}^m a_k\sigma(\bb_k^T\bx+c_k)\rightarrow \int a\sigma(\bb^T\bx+c)\rho(da,d\bb,dc),\ \nonumber
\end{equation}
when $m\rightarrow\infty$ as a consequence of the law of large numbers.
To get some intuition in the current situation, we will first study a similar setting for residual networks in which $\bU_l$ and $\bW_l$ are i.i.d sampled from a probability distribution $\rho$  on $\bR^{D\times m}\times\bR^{m\times D}$. 
To this end, we will study the behavior of $\bz_{L,L}(\cdot)$ as $L\rightarrow\infty$. 
The sequence of mappings we obtain is the repeated composition of  many \textit{i.i.d.} random  near-identity maps.

The following theorem can be viewed as a compositional version of the  law of large numbers. 
The ``compositional mean'' is defined with the help of the following ordinary differential equation (ODE) system:
 \begin{eqnarray}
\bz(\bx,0) &=& \bV\bx, \nonumber \\
\frac{d}{dt}\bz(\bx,t) &=& \mathbb{E}_{(\bU,\bW)\sim\rho} \bU \sigma(\bW\bz(\bx,t)). \label{eq:mean_ode}
\end{eqnarray}

\begin{theorem}\label{thm:lln}
Assume that  $\sigma$ is Lipschitz continuous and
\begin{equation}\label{eq:cond_lln}
\mathbb{E}_\rho \||\bU||\bW|\|_F^2<\infty.
\end{equation}
Then, the ODE~\eqref{eq:mean_ode} has a unique solution. 
For any $\bx\in X$,  we have
\begin{equation}
\bz_{L,L}(\bx)\rightarrow \bz(\bx,1) \nonumber
\end{equation}
in probability as $L\rightarrow+\infty$.
{ Moreover, we have 
\begin{equation}
    \lim_{L\to\infty}\sup_{\bx\in X} \EE\|\bz_{L,L}(\bx)-\bz(\bx,1)\|^2 = 0, \nonumber
\end{equation}
i.e.  the convergence is uniform with respect to $\bx\in X$.
}
\end{theorem}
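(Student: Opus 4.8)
The plan is to view the recursion~\eqref{eq:resnet} as a random Euler discretization of the ODE~\eqref{eq:mean_ode} and to run a stochastic Gronwall argument in which the per-step fluctuations form a martingale difference sequence. Write $F(\bz):=\EE_{(\bU,\bW)\sim\rho}[\bU\sigma(\bW\bz)]$. The workhorse estimate is the elementary bound $\|\bU\sigma(\bW\bz)-\bU\sigma(\bW\bz')\|\le\Lip(\sigma)\,\||\bU||\bW|\|_F\,\|\bz-\bz'\|$, which is exactly why the hypothesis~\eqref{eq:cond_lln} is phrased with $|\bU||\bW|$ and the Frobenius norm; taking $\bz'=0$ and using $\sigma(0)=0$ (the ReLU case) also gives $\|\bU\sigma(\bW\bz)\|\le\Lip(\sigma)\||\bU||\bW|\|_F\|\bz\|$. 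Taking $\EE_\rho$ and applying Jensen together with~\eqref{eq:cond_lln} shows $F$ is globally Lipschitz with constant $\beta:=\Lip(\sigma)\bigl(\EE_\rho\||\bU||\bW|\|_F^2\bigr)^{1/2}$ and $F(0)=0$; hence by Picard--Lindel\"of the ODE~\eqref{eq:mean_ode} has a unique global solution, and Gronwall's inequality gives $\sup_{t\in[0,1]}\|\bz(\bx,t)\|\le e^{\beta}\|\bV\bx\|\le e^{\beta}\sqrt{d}$ and $\|\tfrac{d}{dt}\bz(\bx,t)\|\le\beta e^{\beta}\sqrt{d}$, uniformly in $\bx\in X$.

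Next I would introduce the filtration $\mathcal{G}_l:=\sigma\bigl((\bU_0,\bW_0),\dots,(\bU_{l-1},\bW_{l-1})\bigr)$, so that $\bz_{l,L}(\bx)$ is $\mathcal{G}_l$-measurable while $(\bU_l,\bW_l)$ is independent of $\mathcal{G}_l$. Then $\bm{\eta}_l:=\bU_l\sigma(\bW_l\bz_{l,L})-F(\bz_{l,L})$ satisfies $\EE[\bm{\eta}_l\mid\mathcal{G}_l]=0$ and $\EE[\|\bm{\eta}_l\|^2\mid\mathcal{G}_l]\le\beta^2\|\bz_{l,L}\|^2$. Writing $\bz_{l+1,L}=\bz_{l,L}+\tfrac1L F(\bz_{l,L})+\tfrac1L\bm{\eta}_l$, squaring, taking conditional expectations (the cross term vanishes), and using $\|F(\bz)\|\le\beta\|\bz\|$ yields $\EE\|\bz_{l+1,L}\|^2\le(1+c_0/L)\EE\|\bz_{l,L}\|^2$ for a constant $c_0$ depending only on $\beta$; iterating from $\bz_{0,L}=\bV\bx$ gives the uniform moment bound $M:=\sup_{L}\sup_{0\le l\le L}\sup_{\bx\in X}\EE\|\bz_{l,L}(\bx)\|^2<\infty$, and in particular $\EE\|\bm{\eta}_l\|^2\le\beta^2 M$ uniformly in $l,L,\bx$.

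For the main estimate, fix $\bx$ and set $\be_l:=\bz_{l,L}(\bx)-\bz(\bx,l/L)$, so $\be_0=0$. Using $\bz(\bx,(l+1)/L)=\bz(\bx,l/L)+\tfrac1L F(\bz(\bx,l/L))+\br_l$ with the quadrature error controlled, via the ODE regularity established above, as $\|\br_l\|\le C_1/L^2$ uniformly in $\bx,l$, one obtains $\be_{l+1}=\be_l+\tfrac1L\bigl(F(\bz_{l,L})-F(\bz(\bx,l/L))\bigr)-\br_l+\tfrac1L\bm{\eta}_l$, where the first three summands are $\mathcal{G}_l$-measurable so their cross term with $\bm{\eta}_l$ vanishes in expectation. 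Combining with $\|F(\bz_{l,L})-F(\bz(\bx,l/L))\|\le\beta\|\be_l\|$, $\EE\|\bm{\eta}_l\|^2\le\beta^2 M$, and Young's inequality gives $\EE\|\be_{l+1}\|^2\le(1+C_2/L)\EE\|\be_l\|^2+C_3/L^2$ with $C_2,C_3$ independent of $\bx$ and $L$. Discrete Gronwall then yields $\sup_{\bx\in X}\EE\|\bz_{L,L}(\bx)-\bz(\bx,1)\|^2\le e^{C_2}C_3/L\to0$; convergence in probability for each fixed $\bx$ follows by Chebyshev.

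The main obstacle is the probabilistic bookkeeping rather than any single hard estimate: one must choose the filtration so that $\bz_{l,L}$ depends only on the ``past'' weights and the fresh weights $(\bU_l,\bW_l)$ are independent of it, which is precisely what makes $\bm{\eta}_l$ a centered martingale difference and kills the cross terms in both the moment bound and the error recursion. The remaining, deterministic ingredients (Lipschitz continuity and $C^1$-regularity of the ODE flow, the $O(1/L^2)$ quadrature error, the discrete Gronwall inequality) are routine, but each constant must be tracked so that it is uniform in $\bx\in X$ to obtain the stated uniform limit; a minor technical caveat is the treatment of $\sigma(0)$, which is clean for the ReLU and otherwise needs an additional integrability assumption on $\bU$.
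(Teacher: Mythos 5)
Your proposal is correct and follows essentially the same route as the paper: treat the residual recursion as a random Euler discretization of~\eqref{eq:mean_ode}, use independence of the fresh weights $(\bU_l,\bW_l)$ from the current state to kill the cross terms (the paper's $J_{l,L}$ term, your $\bm{\eta}_l$), bound the quadrature error by $O(1/L^2)$ per step, and close with a discrete Gronwall recursion giving $\sup_{\bx}\EE\|\bz_{L,L}(\bx)-\bz(\bx,1)\|^2=O(1/L)$ and Chebyshev for convergence in probability. The only differences are cosmetic: you work with the unscaled error and center the fluctuation at the random state $\bz_{l,L}$ (hence your extra uniform second-moment lemma for the discrete iterates), whereas the paper uses the $\sqrt{L}$-scaled error and centers the fluctuation at the deterministic trajectory $\bz(\bx,t_{l,L})$, which makes that lemma unnecessary.
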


This result can be extended to situations when the distribution $\rho$ is time-dependent,
which is the right setting in applications.

\begin{theorem}\label{thm:lln2}
Let $\{\rho_t,\ t\in[0,1]\}$ be a family of probability distributions  on $\bR^{D\times m}\times\bR^{m\times D}$ 
with the property that there exist constants $c_1$ and $c_2$ such that
\begin{equation}\label{eq:cond_lln2}
\mathbb{E}_{\rho_t} \||\bU||\bW|\|_F^2<c_1 \nonumber
\end{equation}
and 
\begin{equation}\label{eq:cond_lln_lip}
\left|\EE_{\rho_t} U\sigma(W\bz)-\EE_{\rho_s} U\sigma(W\bz)\right|\leq c_2|t-s||\bz| \nonumber
\end{equation}
 for all $s,t\in[0,1]$. 
Let $\bz$ be the solution of the following ODE,
\begin{eqnarray*}
\bz(\bx,0) &=& \bV\bx, \\
\frac{d}{dt}\bz(\bx,t) &=& \mathbb{E}_{(\bU,\bW)\sim\rho_t} \bU\sigma (\bW\bz(\bx,t)).
\end{eqnarray*}
{
Then, for any fixed $\bx \in \bX$, we have
\begin{equation}
\bz_{L,L}(\bx)\rightarrow \bz(\bx,1) \nonumber
\end{equation}
in probability as $L\rightarrow+\infty$. Moreover, the convergence is uniform in $\bx$.
}
\end{theorem}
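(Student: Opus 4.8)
The plan is to sandwich the random compositional scheme \eqref{eq:resnet} between the deterministic Euler discretization of the mean ODE and the exact ODE solution: the first comparison is a second-moment (martingale) estimate and the second is classical numerical-ODE analysis, and every constant that appears will depend only on $\Lip(\sigma)$, $c_1$, $c_2$, $d$, $D$, $m$ and $|\sigma(0)|$ --- never on $\bx$ --- which is exactly what yields uniformity in $\bx$. Here the $l$-th layer parameters $(\bU_l,\bW_l)$ are drawn independently with $(\bU_l,\bW_l)\sim\rho_{l/L}$, and I write $\mathcal F_l=\sigma(\bU_0,\bW_0,\dots,\bU_{l-1},\bW_{l-1})$, so that $\bz_{l,L}(\bx)$ is $\mathcal F_l$-measurable. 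The one computation used throughout is the entrywise estimate
\[
\|\bU\sigma(\bW\bz)-\bU\sigma(\bW\bz')\|\le \Lip(\sigma)\,\big\||\bU|\,|\bW|\big\|_F\,\|\bz-\bz'\| ,
\]
valid when $\sigma(0)=0$, which may be assumed without loss of generality (in general a bounded additive term appears and changes nothing since the trajectories stay in a fixed ball). Combined with \eqref{eq:cond_lln2} this makes the field $V(t,\bz):=\EE_{\rho_t}[\bU\sigma(\bW\bz)]$ Lipschitz in $\bz$ with constant $\Lip(\sigma)\sqrt{c_1}$ and of linear growth, and with \eqref{eq:cond_lln_lip} it is Lipschitz in $t$ on bounded sets; Picard--Lindel\"of then gives existence and uniqueness of $\bz(\bx,\cdot)$, and a Gr\"onwall estimate gives $\sup_{t\in[0,1]}\|\bz(\bx,t)\|\le e^{C}\sqrt d$ uniformly in $\bx\in X$.

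\emph{Step 1 (uniform a priori bound).} Conditioning on $\mathcal F_l$ in $\bz_{l+1,L}=\bz_{l,L}+\tfrac1L\bU_l\sigma(\bW_l\bz_{l,L})$ and expanding $\|\cdot\|^2$, the cross term is $\le\tfrac{C}{L}\|\bz_{l,L}\|^2$ and the quadratic term $\le\tfrac{C}{L^2}\|\bz_{l,L}\|^2$ by the displayed inequality and \eqref{eq:cond_lln2}; taking total expectations and iterating from $\bz_{0,L}=\bV\bx$ (where $\|\bz_{0,L}\|^2=\|\bx\|_2^2\le d$) gives $\max_{0\le l\le L}\EE\|\bz_{l,L}(\bx)\|^2\le e^{C}d=:R^2$, independent of $L$ and of $\bx\in X$. \emph{Step 2 (Euler vs.\ exact ODE).} Let $\bar\bz_{l,L}(\bx)$ denote the deterministic iterates $\bar\bz_{0,L}=\bV\bx$, $\bar\bz_{l+1,L}=\bar\bz_{l,L}+\tfrac1L V(l/L,\bar\bz_{l,L})$; since $V$ is Lipschitz in $\bz$ and, by \eqref{eq:cond_lln_lip}, Lipschitz in $t$ along the bounded trajectories, the standard one-step-error plus discrete-Gr\"onwall argument gives $\max_{0\le l\le L}\|\bar\bz_{l,L}(\bx)-\bz(\bx,l/L)\|\le C/L$, uniformly in $\bx\in X$.

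\emph{Step 3 (random scheme vs.\ Euler scheme) and conclusion.} Put $\be_l=\bz_{l,L}(\bx)-\bar\bz_{l,L}(\bx)$, so that $\be_0=0$ and $\be_{l+1}=\be_l+\tfrac1L\bm{d}_l+\tfrac1L\bm{\eta}_l$, where $\bm{d}_l:=\EE_{\rho_{l/L}}[\bU\sigma(\bW\bz_{l,L})]-V(l/L,\bar\bz_{l,L})$ and $\bm{\eta}_l:=\bU_l\sigma(\bW_l\bz_{l,L})-\EE_{\rho_{l/L}}[\bU\sigma(\bW\bz_{l,L})]$. Then $\be_l$ and $\bm{d}_l$ are $\mathcal F_l$-measurable with $\|\bm{d}_l\|\le\Lip(\sigma)\sqrt{c_1}\|\be_l\|$, while $\EE[\bm{\eta}_l\mid\mathcal F_l]=0$ and $\EE[\|\bm{\eta}_l\|^2\mid\mathcal F_l]\le\Lip(\sigma)^2 c_1\|\bz_{l,L}\|^2$; hence the cross term drops under $\EE[\cdot\mid\mathcal F_l]$ and
\[
\EE\big[\|\be_{l+1}\|^2\,\big|\,\mathcal F_l\big]=\Big\|\be_l+\tfrac1L\bm{d}_l\Big\|^2+\tfrac1{L^2}\,\EE\big[\|\bm{\eta}_l\|^2\,\big|\,\mathcal F_l\big]\le\Big(1+\tfrac CL\Big)\|\be_l\|^2+\tfrac{C}{L^2}\|\bz_{l,L}\|^2 .
\]
Taking total expectations, inserting $R^2$ from Step 1, and running the discrete Gr\"onwall inequality with $\be_0=0$ yields $\EE\|\be_L\|^2\le e^{C}L\cdot\tfrac{CR^2}{L^2}=O(1/L)$, uniformly in $\bx$; combining with Step 2, $\sup_{\bx\in X}\EE\|\bz_{L,L}(\bx)-\bz(\bx,1)\|^2\le 2\sup_{\bx}\EE\|\be_L\|^2+2\sup_{\bx}\|\bar\bz_{L,L}(\bx)-\bz(\bx,1)\|^2=O(1/L)\to0$, which is uniform $L^2$-convergence and in particular convergence in probability for each $\bx$ (the same argument proves Theorem~\ref{thm:lln}, the case $\rho_t\equiv\rho$, for which \eqref{eq:cond_lln_lip} is automatic). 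The main obstacle I anticipate is organizational rather than conceptual: making the $L^2$ a priori bound of Step 1 genuinely uniform in $L$, and isolating the martingale term $\bm{\eta}_l$ so that its conditional variance contributes only $O(1/L^2)$ per layer, summing to $O(1/L)$ rather than $O(1)$ --- this is precisely where the $1/L$ scaling in \eqref{eq:resnet} and the independence of the layer parameters enter.
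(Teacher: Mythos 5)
Your proposal is correct, and at its core it runs on the same engine as the paper's proof: a one-step second-moment recursion in which the independence of the layer weights kills the cross term, the conditional variance contributes $O(1/L^2)$ per layer, and a discrete Gronwall argument yields $\sup_{\bx\in X}\EE\|\bz_{L,L}(\bx)-\bz(\bx,1)\|^2=O(1/L)$, hence convergence in probability. The organizational difference is that you sandwich in the deterministic Euler iterate $\bar\bz_{l,L}$ of the mean ODE: your $\bm{d}_l$ and $\bm{\eta}_l$ play exactly the roles of the paper's $I_{l,L}$ and $J_{l,L}$, while the paper's time-discretization term $K_{l,L}$ --- the only place where the Lipschitz-in-$t$ hypothesis \eqref{eq:cond_lln_lip} enters, and the only modification the paper makes in passing from Theorem~\ref{thm:lln} to Theorem~\ref{thm:lln2} --- is pulled out of the stochastic recursion and treated as a classical Euler-versus-ODE error. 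This restructuring is arguably cleaner (no mixed terms such as $\EE\,\be_{l,L}^T K_{l,L}$ to estimate), but it costs you Step 1: since your martingale term's conditional variance is evaluated at the random iterate $\bz_{l,L}$ rather than at the bounded deterministic trajectory $\bz(\cdot,t_{l,L})$ as in the paper, you genuinely need $\max_{l}\EE\|\bz_{l,L}\|^2$ bounded uniformly in $L$ and $\bx$, which your a priori estimate supplies correctly. One caveat, on par with the paper's own implicit reliance on the ReLU: under only ``$\sigma$ Lipschitz'' and \eqref{eq:cond_lln2}, reducing to $\sigma(0)=0$ is not literally free, because the leftover term $\bU_l\sigma(0)$ is not controlled by $\||\bU||\bW|\|_F$; the property actually used, both by you and by the paper, is $|\sigma(z)|\le C|z|$ with $C$ the Lipschitz constant, which holds for the ReLU.
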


{ Similar results have been proved in the context of stochastic approximations, for example in~\cite{kushner2003stochastic, benveniste2012adaptive}.}

\subsection{The flow-induced function spaces}
Motivated by the previous results, we consider the set of functions $f_{\balpha,\{\rho_t\}}$ defined by:
\begin{eqnarray}
\bz(\bx,0) &=& \bV\bx, \nonumber \\
\dot{\bz}(\bx,t) &=& \EE_{(\bU,\bW)\sim\rho_t} \bU\sigma(\bW\bz(\bx,t)), \nonumber \\
f_{\balpha,\{\rho_t\}}(\bx) &=& \balpha^T \bz(\bx,1), 
\label{eq:def_zx}
\end{eqnarray}
where $\bV\in\bR^{D\times d}$ { is given in~\eqref{eq:bV}}, $\bU\in\bR^{D\times m}$, $\bW\in\bR^{m\times D}$, and $\balpha\in\bR^{D}$.
To define a norm for these functions, we consider the following linear ODEs  ($p \ge 1$)
\begin{eqnarray}
  N_p(0) &=& \be, \nonumber \\
  \dot{N}_p(t) &=&  \left( \EE_{\rho_t}(|\bU||\bW|)^p \right)^{1/p}N_p(t), \label{eq:n_ode}
\end{eqnarray}
where $\be$ is the all-one vector in $\bR^D$. 
{Note that in~\eqref{eq:n_ode}, $|\bA|$ and $|\bA|^q$ are defined element-wise for matrix $\bA$, and the multiplication of $|\bU|$ and $|\bW|$ is the regular matrix multiplication.}
This linear system of equations has a unique solution as long as the expected value is integrable as a function of $t$.
If $f$ admits a representation as in \eqref{eq:def_zx}, we can define the $\cD_p$ norm of $f$. 
\begin{definition}\label{def:dp}
Let $f$ be a function that satisfies $f=f_{\balpha,\{\rho_t\}}$ for a pair of ($\balpha, \{\rho_t\}$), then we define
\begin{equation}
  \|f\|_{\cD_p(\balpha,\{\rho_t\})} = |\balpha|^T N_p(1),\nonumber
\end{equation}
to be the $\cD_p$ norm of $f$ with respect to the pair ($\balpha$, $\{\rho_t\}$). We define
\begin{equation}
  { \|f\|_{\cD_p} = \inf_{f=f_{\balpha,\{\rho_t\}}} |\balpha|^T N_p(1).} \label{eq:comp_norm}
\end{equation}
to be the $\cD_p$ norm of $f$. 
\end{definition}

As an example, if $\rho$ is constant in $t$, then the $\cD_p$ norm becomes
\begin{equation}
  { \|f\|_{\cD_p}=\inf_{f=f_{\balpha,\rho}} |\balpha|^T e^{(\mathbb{E}_\rho(|\bU||\bW|)^p)^{1/p}}\be }.\nonumber
\end{equation}
Given this definition, the flow-induced function spaces on $X$ 
are defined as the set of continuous functions that can be represented as
$f_{\balpha,\{\rho_t\}}$  in \eqref{eq:def_zx} with 
finite $\cD_p$ norm.
Here we assume that for any $t\in[0,1]$, 
$\rho_t$ is a probability distribution defined on $(\Omega,\Sigma_\Omega)$, 
$\Omega=\bR^{D\times m}\times\bR^{m\times D}$, 
$\Sigma_\Omega$ is the Borel $\sigma$-algebra on $\Omega$. We use $\cD_p$ to denote these function spaces. It is easy to see $\cD_p\subset\cD_q$ for $p\geq q$.

Note that in the definitions above, the only condition on $\{\rho_t\}$ is the existence and uniqueness of $\bz$ defined by~\eqref{eq:def_zx}. Hence, $\{\rho_t\}$ can be discontinuous as a function $t$. However, the compositional law of large numbers,
which is the underlying reason behind the approximation theorem that we will discuss next (Theorem~\ref{thm:lln2}), requires $\{\rho_t\}$ to satisfy some continuity condition. 
To that end, we define the following ``Lipschitz coefficient'' and ``Lipschitz norm'' for $\{\rho_t\}$
\begin{definition}\label{def:lip}
Given a family of probability distribution $\{\rho_t,\ t\in[0,1]\}$, the ``Lipschitz coefficient'' of $\{\rho_t\}$, which is denoted by $\lip$, is defined as the infimum of all the number $L$ that satisfies
\begin{equation}\label{eq:lip}
\left| \EE_{\rho_t}\bU\sigma(\bW\bz)-\EE_{\rho_s}\bU\sigma(\bW\bz) \right|\leq \lip|t-s||\bz|, \nonumber
\end{equation}
and 
\begin{equation}\label{eq:lip2}
\left| \left\|\EE_{\rho_t}|\bU||\bW|\right\|_{1,1}-\left\|\EE_{\rho_s}|\bU||\bW|\right\|_{1,1} \right|\leq \lip|t-s|, \nonumber
\end{equation}
for any $t,s\in[0,1]$, where $\|\cdot\|_{1,1}$ is the sum of the absolute values of all the entries in a matrix.
The ``Lipschitz norm'' of $\{\rho_t\}$ is defined as
\begin{equation}
\|\{\rho_t\}\|_\Lip=\left\|\EE_{\rho_0}|\bU||\bW|\right\|_{1,1}+\lip. \nonumber
\end{equation}
\end{definition}

{With the Lipschitz norm of $\{\rho_t\}$ defined above, we can introduce another class of function spaces $\tilde{\cD}_p$, which independently controls $N_p(1)$ and $\|\{\rho_t\}\|_\Lip$. 
\begin{definition}\label{def:dp-2}
Let $f$ be a function that satisfies $f=f_{\balpha,\{\rho_t\}}$ for a pair of ($\balpha, \{\rho_t\}$), then we define
\begin{equation}
  \|f\|_{\tilde{\cD}_p(\balpha,\{\rho_t\})} = |\balpha|^T N_p(1)+\|N_p(1)\|_1-D + \|\{\rho_t\}\|_\Lip,\nonumber
\end{equation}
to be the $\tilde{\cD}_p$ norm of $f$ with respect to the pair ($\balpha$, $\{\rho_t\}$). We define
\begin{equation}
  \|f\|_{\tilde{\cD}_p} = \inf_{f=f_{\balpha,\{\rho_t\}}} \|f\|_{\tilde{\cD}_p(\balpha,\{\rho_t\})}. \nonumber
\end{equation}
to be the $\tilde{\cD}_p$ norm of $f$. The space $\tilde{\cD}_p$ is 
defined as the set of {all the continuous functions that admit the representation 
$f_{\balpha,\{\rho_t\}}$ in \eqref{eq:def_zx} with finite $\tilde{\cD}_p$ norm.}
\end{definition}

\begin{remark}
We add a ``$-D$'' term in the definition of $\tilde{\cD}_p$ norm because 
$\|N_p(1)\|_1\geq D$ and we want the norm of the zero function to be $0$. 
As was stressed earlier, we use the terminology ``norm'' loosely, and we do not 
care whether these are really norms. 
Strictly speaking, they are just some 
quantities that can be used to bound approximation/estimation errors. 
\end{remark}
}

Next, for residual networks~\eqref{eq:resnet}, we define  a parameter-based norm as a discrete analog of~\eqref{eq:comp_norm}. This is similar to the $l_1$ path norm of the residual network, which is studied in \cite{neyshabur2017, ma2019priori}
\begin{definition}\label{eq:l1_path}
For a residual network defined by~\eqref{eq:resnet} with parameters $\Theta=\{\balpha, \bU_l, \bW_l, l=0,1,\cdots,L-1\}$, we define the $l_1$ path norm of $\Theta$ to be
\begin{equation}
\|\Theta\|_{P}=|\balpha|^T\prod\limits_{l=1}^L\left(I+\frac{1}{L}|\bU_l||\bW_l|\right)\be. \nonumber
\end{equation}
\end{definition}
\noindent We can also define the analog of the $p$-norms for $p>1$ for residual networks. But in this paper we will only use the $l_1$ norm defined above. 

It is easy to see that $\tilde{\cD}_p\subset\cD_p$, and for any $f\in\tilde{\cD}_p$ we have $\|f\|_{\cD_p}\leq\|f\|_{\tilde{\cD}_p}$. Moreover, for any $1\leq q\leq p$, if $f\in\tilde{\cD}_p$, then we have $f\in\tilde{\cD}_q$ and $\|f\|_{\tilde{\cD}_q}\leq\|f\|_{\tilde{\cD}_p}$.
The next proposition states that Barron space is embedded in $\tilde{\cD}_1$.
\begin{proposition}\label{thm:barron}
     Assume that $D\geq d+2$ and $m\geq1$.
   For any function $f\in\cB$,    we have $f\in\tilde{\cD}_1$, and
    \begin{equation}
        \|f\|_{\tilde{\cD}_1}\leq 2\|f\|_\cB+1. \nonumber
    \end{equation}
    Moreover, for any $\varepsilon>0$, there exists $(\balpha,\{\rho_t\})$ such that $\rho_t$ is fixed for all $t$, $f=f_{\balpha,\{\rho_t\}}$, and
    \begin{equation}
      \|f\|_{\tilde{\cD}_1(\balpha,\{\rho_t\})}\leq 2\|f\|_{\mathcal{B}} +1+\varepsilon.  \nonumber
    \end{equation}
\end{proposition}

Similar to the results of Proposition~\ref{thm:barron}, we can prove that the composition of two Barron functions belongs to the flow-induced function space, and the norm is bounded by a polynomial of the norms of the two Barron functions. 
\begin{proposition}\label{lm:barron_comp}
Assume that $D\geq d+3$ and $m\geq 1$.
Assume that  $g: [0, 1]^d \rightarrow [0, 1] \in\cB$, $h: [0, 1] \rightarrow \bR^1 \in\cB_1$. Let $f=h\circ g$ be the composition of $g$ and $h$.  Then we have $f\in\cD_1$ and 
\begin{equation}
\|f\|_{\cD_1}\leq (\|h\|_{\cB}+1)(\|g\|_{\cB}+1). \nonumber
\end{equation}
\end{proposition}

In~\cite{eldan2016power}, the authors constructed a sequence of functions 
$\{f_d:\ \bR^d\rightarrow\bR\}$ whose spectral norms~\eqref{eq:specnorm} grow
exponentially with respect to $d$. They also showed that these functions
can be expressed as the composition of two functions (one from $\bR^d$ to $\bR$ and the other from $\bR$ to $\bR$) whose spectral norms depend only polynomially on the 
dimension $d$. By Proposition~\ref{lm:barron_comp}, the $\cD_1$ norms of $f_d$ are bounded by a polynomial of $d$. This shows that in the high dimensions, 
the flow-induced norm can be significantly smaller  than the spectral norm. 
Combined with the direct approximation theorem below, this implies that residual networks can better approximate some functions than two-layer networks.

\subsection{Direct and inverse approximation theorems}
We first prove the direct approximation theorem, which states that functions in $\tilde{\cD}_2$ can be approximated by a sequence of residual networks with a $1/L^{1-\delta}$ error rate for any $\delta\in(0,1)$, and the networks have uniformly bounded path norm.

\begin{theorem}\label{thm:direct_comp}
Let $f\in\tilde{\cD}_2$, $\delta \in (0,1)$. Then, there exists an absolute constant $C$, such that
for any
\begin{equation*}
L\geq C\left(m^4D^6\|f\|_{\tilde{\cD}_2}^5(\|f\|_{\tilde{\cD}_2}+D)^2\right)^{\frac{3}{\delta}},
\end{equation*}
there is an $L$-layer residual network $f_L(\cdot;\Theta)$ that satisfies
\begin{equation}
  \|f-f_L(\cdot;\Theta)\|^2\leq \frac{\|f\|_{\tilde{\cD}_2}^2}{L^{1-\delta}}, \nonumber
\end{equation}
and 
\begin{equation}
  \|\Theta\|_P\leq 9\|f\|_{\tilde{\cD}_1}. \nonumber
\end{equation}
\end{theorem}

We can also prove an inverse approximation theorem,  which states that any function that can be approximated by a sequence of well-behaved residual networks has to belong to the flow-induced space. 

\begin{theorem}\label{thm:inverse}
Let $f$ be a function defined on $X$. Assume that there is a sequence of residual networks
$\{f_L(\cdot;\Theta_L)\}_{L=1}^\infty$ such that $f_L(\bx;\Theta)\rightarrow f(\bx)$ for every $\bx\in X$ as $L \rightarrow \infty$.
Assume further that the parameters in $\{f_L(\cdot;\Theta)\}_{L=1}^\infty$ are (entry-wise) bounded by $c_0$.
Then, we have $f\in\cD_\infty$, and 
$$\|f\|_{\cD_\infty}\leq\frac{2e^{m(c_0^2+1)}D^2c_0}{m}$$
Moreover, if for some constant $c_1$, $\|f_L\|_{\cD_1}\leq c_1$ holds for 
all $L>0$, then we have 
$$\|f\|_{\cD_1}\leq c_1$$
\end{theorem}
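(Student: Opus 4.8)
\noindent The plan is to handle both assertions by a single mechanism: replace each residual network $f_L$ by a genuine compositional function $g_L=f_{\balpha_L,\{\rho_t^L\}}$ built directly from the weights of $f_L$, control the relevant compositional norm of $g_L$ uniformly in $L$, and then pass to a weak limit of the data $(\balpha_L,\{\rho_t^L\})$, using continuous dependence of ODE solutions on the vector field to identify the limit with $f$ and lower semicontinuity of the norm to carry the bound over. For the first part, given $f_L$ with residual blocks $\bU_l,\bW_l$ whose entries are bounded by $c_0$, I would set $\rho_t^L=\delta_{(\bU_{\lfloor tL\rfloor},\bW_{\lfloor tL\rfloor})}$ on $[0,1)$ (possibly first splitting each block into $m$ single-neuron sub-blocks so as to sharpen the constants) and let $g_L=f_{\balpha_L,\{\rho_t^L\}}$, so the state $\bz^L(\bx,t)$ of $g_L$ solves the piecewise-autonomous ODE $\dot\bz^L=\bU_{\lfloor tL\rfloor}\sigma(\bW_{\lfloor tL\rfloor}\bz^L)$ with $\bz^L(\bx,0)=\bV\bx$. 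The residual iteration is exactly the explicit Euler scheme for this ODE; the trajectories stay in a fixed bounded set, the vector fields are Lipschitz in $\bz$ there with a constant polynomial in $D,m,c_0$, and a one-step-error estimate together with a discrete Gr\"onwall inequality gives $\sup_{\bx\in X}\|\bz^L(\bx,1)-\bz_{L,L}(\bx)\|\le C(D,m,c_0)/L$. Hence $\|g_L-f_L\|\le\sqrt D\,c_0\,C(D,m,c_0)/L\to 0$, and combined with $f_L\to f$ in $L_2$ this yields $g_L\to f$ in $L_2$.

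\noindent Next I would bound the norm uniformly in $L$. Since $\big(\EE_{\rho_t^L}(|\bU||\bW|)^p\big)^{1/p}\to|\bU_{\lfloor tL\rfloor}||\bW_{\lfloor tL\rfloor}|$ entrywise as $p\to\infty$ (the essential supremum of a point mass is its value), $\|g_L\|_{\cD_\infty}\le|\balpha_L|^T N_\infty^L(1)$, where $N_\infty^L$ solves $\dot N=M_\infty^L(t)N$, $N(0)=\be$, with $M_\infty^L(t)=|\bU_{\lfloor tL\rfloor}||\bW_{\lfloor tL\rfloor}|$. Every entry of $|\bU_l||\bW_l|$ is at most $mc_0^2$, so estimating the time-ordered matrix exponential directly --- for instance through the induced $\ell_\infty\to\ell_\infty$ operator norm and submultiplicativity, $\|N_\infty^L(1)\|_\infty\le\exp(\int_0^1\|M_\infty^L(s)\|_{\infty\to\infty}\,ds)$ --- yields a bound that does not depend on $L$; together with $\|\balpha_L\|_1\le Dc_0$ this gives $\|g_L\|_{\cD_\infty}\le\frac{2e^{m(c_0^2+1)}D^2c_0}{m}$, the precise constant being obtained by optimizing the sub-block construction and the choice of matrix norm.

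\noindent To close the argument I would invoke weak compactness of measures. All $\rho_t^L$ are supported in a fixed compact set $K\subset\Omega$, so the probability measures $\bar\rho^L(dt,d\omega)=dt\otimes\rho_t^L(d\omega)$ on $[0,1]\times K$ are tight; by Prokhorov's theorem a subsequence converges weakly to some $\bar\rho^*$, whose marginal on $[0,1]$ is Lebesgue, and by disintegration $\bar\rho^*=dt\otimes\rho_t^*$. Along this subsequence $b^L(t,\bz):=\EE_{\rho_t^L}[\bU\sigma(\bW\bz)]$ converges to $b^*(t,\bz):=\EE_{\rho_t^*}[\bU\sigma(\bW\bz)]$ in the weak-$L^\infty_t$ sense, locally uniformly in $\bz$, the $b^L$ are equi-Lipschitz in $\bz$ on bounded sets, and $\balpha_L\to\balpha^*$ (bounded, pass to a further subsequence); a Gr\"onwall argument (freezing the reference trajectory on small time intervals to exploit the weak convergence) then forces $\bz^L(\bx,\cdot)\to\bz^*(\bx,\cdot)$ uniformly in $t$ and in $\bx\in X$, where $\bz^*$ is the state of $f_{\balpha^*,\{\rho_t^*\}}$. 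Thus $g_L\to f_{\balpha^*,\{\rho_t^*\}}$ pointwise on $X$; comparing with $g_L\to f$ in $L_2$ identifies $f$ with $f_{\balpha^*,\{\rho_t^*\}}$ (everywhere if $f$ is continuous, otherwise $\mu$-a.e.), so $f\in\cD_\infty$, and lower semicontinuity of $|\balpha|^T N_\infty(1)$ along the weak limit (the time-ordered exponential of a nonnegative matrix field being monotone) gives $\|f\|_{\cD_\infty}\le\liminf_L\|g_L\|_{\cD_\infty}\le\frac{2e^{m(c_0^2+1)}D^2c_0}{m}$. For the ``moreover'' part I would rerun the last two paragraphs with near-optimal $\cD_1$-representations of $f_L$ replacing the Dirac construction (so $f_{\balpha_L,\{\rho_t^L\}}=f_L$ exactly and $|\balpha_L|^T N_1^L(1)\le c_1+1/L$); since $N_1^L(t)\ge\be$ for all $t$, $\|\balpha_L\|_1\le c_1+1$, and after normalizing each neuron via the positive homogeneity of $\sigma$ (so every nonzero row of $\bW$ has unit $\ell_1$-norm and dead columns of $\bU$ are set to zero) the $\cD_1$-bound becomes a uniform integral bound that, after a small further reduction to ``minimal'' representations, forces tightness of $\bar\rho^L$; the same limit/stability argument and lower semicontinuity of $|\balpha|^T N_1(1)$ then give $\|f\|_{\cD_1}\le c_1$.

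\noindent The hard part is the measure-theoretic core of the third step (and its analogue for the ``moreover'' part): turning the $t$-indexed family $\{\rho_t^L\}$ into an honest weakly convergent sequence on $[0,1]\times\Omega$, disintegrating the limit, and checking that ODE solutions are stable under this weak mode of convergence of equi-Lipschitz-in-$\bz$ vector fields; and, for $\cD_1$, arranging the right normalization so that the norm bound actually forces tightness rather than permitting mass to escape along rescalings $\bU\mapsto\bU\Lambda^{-1}$, $\bW\mapsto\Lambda\bW$ that leave $\bU\sigma(\bW\cdot)$ unchanged. The only genuinely computational point is tracking the constant in the second step; a careless estimate produces an exponent of order $mDc_0^2$, and obtaining $m(c_0^2+1)$ requires care both in how $g_L$ is assembled from $f_L$ and in which matrix norm is used for the linear ODE.
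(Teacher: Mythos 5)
Your proposal is, in substance, the paper's own argument: you read the residual network as the explicit Euler scheme for the ODE driven by the piecewise-constant-in-time weights (your Dirac family $\rho_t^L=\delta_{(\bU_{\lfloor tL\rfloor},\bW_{\lfloor tL\rfloor})}$ is exactly the paper's interpolants $\bU_t^L,\bW_t^L$), control the discrepancy by a one-step estimate plus Gr\"onwall, extract a limit of $\balpha_L$ and of the time-indexed weights by weak compactness (your Prokhorov-plus-disintegration argument for $dt\otimes\rho_t^L$ on $[0,1]\times K$ is precisely the fundamental theorem of Young measures that the paper cites), and prove stability of the ODE under this convergence by freezing the trajectory on a fine partition of $[0,1]$. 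The only methodological divergence is cosmetic: the paper bounds $\|f\|_{\cD_\infty}$ by showing $\mathrm{supp}\,\rho_t\subset\{|\bU|\le c_0,\ |\bW|\le c_0\}$ and integrating $\dot N_\infty\le mc_0^2 E N_\infty$ for the limit pair, whereas you combine a uniform bound on the approximants with lower semicontinuity of $|\balpha|^T N_\infty(1)$; both are workable.

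The one concrete gap is the constant, which you acknowledge but do not close. With entries of $|\bU||\bW|$ bounded by $mc_0^2$, the comparison ODE gives $N_\infty(1)\le e^{mc_0^2E}\be=e^{mc_0^2D}\be$ (since $E\be=D\be$), hence a bound of the form $Dc_0\,e^{mc_0^2D}$ with $D$ in the exponent, and your proposed remedies do not obviously help: splitting a layer into $m$ single-neuron sub-steps forces an $m$-fold rescaling of the vector field on each shorter sub-interval, so nothing is gained, and any induced operator norm of $mc_0^2E$ still scales with $D$. You are in good company here -- the paper's own passage from $e^{mc_0^2E}\be$ to $\frac{2D}{m}e^{m(c_0^2+1)}\be$ is exactly this sticking point -- but as written your proof establishes membership $f\in\cD_\infty$ with a $D$-dependent exponential constant, not the stated bound, and the promised ``optimization of the sub-block construction and matrix norm'' would need to be exhibited. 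On the ``moreover'' part your sketch is in fact more scrupulous than the paper's one-sentence remark: you correctly note that a bound on the function norm $\|f_L\|_{\cD_1}$ (an infimum over representations) does not control the given weights, and that the scaling invariance $\bU\mapsto\bU\Lambda^{-1}$, $\bW\mapsto\Lambda\bW$ must be quotiented out before any tightness can hold; but the normalization-plus-tightness step is asserted rather than proved. The renormalization of Lemma~\ref{lm:equal} (push mass onto $\{\|\bW\|_1=1,\ \||\bU||\bW|\|_1=\EE\||\bU||\bW|\|_1\}$) is the right device and, once spelled out, does yield the compact support needed for your Prokhorov argument, after which lower semicontinuity of $|\balpha|^T N_1(1)$ gives $\|f\|_{\cD_1}\le c_1$ as you claim.
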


\subsection{Bounds for the Rademacher complexity}
Our final result is an upper bound for the Rademacher complexity {involving the flow-induced norm}. Due to technical difficulties, in this part we consider a family of modified flow-induced function norms $\|\cdot\|_{\hat{\cD}_p}$, which is defined as
\begin{equation}
 \|f\|_{\hat{\cD}_p} = \inf_{f=f_{\balpha,\{\rho_t\}}} |\balpha|^T \hat{N}_p(1)+\|\hat{N}_p(1)\|_1-D +\|\{\rho_t\}\|_\Lip,  \nonumber
\end{equation}
where $\hat{N}_p(t)$ is given by
\begin{align*}
  \hat{N}_p(0) &= 2\be, \\
  \dot{\hat{N}}_p(t) &= 2\left( \EE_{\rho_t}(|\bU||\bW|)^p \right)^{1/p}\hat{N}_p(t).  
\end{align*}
Denote by $\hat{\cD}_p$ the space of functions with finite $\hat{\cD}_p$ norm. Then,  we have

\begin{theorem}\label{thm:rad_res}
Let $\hat{\cD}_p^Q=\{f\in\hat{\cD}_p: \|f\|_{\hat{\cD}_p}\leq Q\}$, then we have
\begin{equation}
\rad_n(\hat{\cD}_{2}^Q)\leq 18Q\sqrt{\frac{2\log(2d)}{n}}. \nonumber
\end{equation}
\end{theorem}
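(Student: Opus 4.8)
The plan is to use the $\tilde{\cD}_1$-norm structure to reduce the claim to a Rademacher estimate for a \emph{renormalized} version of the flow \eqref{eq:def_zx}, and then to prove that estimate by a layer-by-layer peeling argument in which the $l_1$-path-norm budget telescopes.

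For $f\in\tilde{\cD}_{1,Q}$ fix, for $\varepsilon>0$, a representation $f=f_{\balpha,\{\rho_t\}}$ with $|\balpha|^TN_1(1)+\|N_1(1)\|_1-D<Q+\varepsilon$. Since $A(t):=\EE_{\rho_t}(|\bU||\bW|)$ has nonnegative entries, \eqref{eq:n_ode} shows $N_1(\cdot)$ is entrywise nondecreasing with $N_1(t)\ge\be$; in particular $|\balpha|^TN_1(1)\le Q+\varepsilon$. A standard ODE comparison using $|\sigma(y)|\le|y|$ gives the pointwise bound $|z_j(\bx,t)|\le N_{1,j}(t)$ for all $\bx\in X$, $t\in[0,1]$ and all $j$. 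With $\bm{q}(t):=\sum_i\xi_i\bz(\bx_i,t)$ we then have $\sum_i\xi_i f(\bx_i)=\balpha^T\bm{q}(1)\le(|\balpha|^TN_1(1))\max_j |q_j(1)|/N_{1,j}(1)$. Introducing the renormalized variables $\hat q_j(t):=q_j(t)/N_{1,j}(t)=\sum_i\xi_i\big(z_j(\bx_i,t)/N_{1,j}(t)\big)$, whose inputs $z_j(\bx_i,t)/N_{1,j}(t)$ lie in $[-1,1]$, and letting $\varepsilon\to0$, we reduce to showing
\begin{equation*}
\EE_{\bm{\xi}}\Big[\sup_{\{\rho_t\}}\ \max_{1\le j\le D}|\hat q_j(1)|\Big]\ \le\ e^2\sqrt{2n\log D},
\end{equation*}
the supremum being over all $\{\rho_t\}$ for which \eqref{eq:def_zx} is well posed; combined with the above this yields $n\,\rad_n(\tilde{\cD}_{1,Q})\le Q\cdot e^2\sqrt{2n\log D}$.

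To prove the displayed bound, discretize $[0,1]$ into $L$ steps and work with the Euler iterates $\bz_l,\bN_l$ of \eqref{eq:def_zx}, \eqref{eq:n_ode} (for which $\bN_l\ge\be$, $\bN_L\to N_1(1)$ and $\bz_L(\bx_i)\to\bz(\bx_i,1)$), renormalized by $N_1(1)$. Peeling the top layer and using the positive homogeneity of $\sigma$,
\begin{equation*}
\sum_i\xi_i\,\bm{c}^T\bz_L(\bx_i)=\sum_i\xi_i\,\bm{c}^T\bz_{L-1}(\bx_i)+\sum_k\gamma_k\sum_i\xi_i\,\sigma\big(\hat{\bw}_k^T\bz_{L-1}(\bx_i)\big),
\end{equation*}
where $\hat{\bw}_k$ is the $k$-th row of $\bW_{L-1}$ normalized in $\ell_1$ and $\sum_k|\gamma_k|\le\tfrac1L|\bm{c}|^T|\bU_{L-1}||\bW_{L-1}|\be$. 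Because $\prod_{k<L-1}(I+\tfrac1L|\bU_k||\bW_k|)\ge I$ entrywise, the $l_1$-path-norm budget of the full network dominates that of the depth-$(L-1)$ term plus $\sum_k|\gamma_k|$; iterating the peel to layer $0$ therefore splits the total (renormalized) budget, which is $\le 1$, additively across layers. Each peeled-off layer is handled by the one-sided contraction lemma (Lemma 26.9 in \cite{shalev2014understanding}; $\sigma$ is $1$-Lipschitz with $\sigma(0)=0$, so no factor is lost) followed by $\sup_{\|\bw\|_1\le1}\bw^Tv=\|v\|_\infty$. This produces, for $P_l:=\EE_{\bm{\xi}}\sup_{\{\rho_t\}}\max_j|\hat q_{l,j}|$, a recursion $P_l\le\EE_{\bm{\xi}}\max_j|q_j(0)|+\sum_{l'<l}\Delta_{l'}P_{l'}$ with $\sum_{l'}\Delta_{l'}\le1$, and a discrete Gr\"onwall inequality gives $P_L\le e\cdot\EE_{\bm{\xi}}\max_j|q_j(0)|$; the remaining renormalization bookkeeping contributes the second factor of $e$. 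Finally, since $|z_j(\bx_i,0)|=|(\bV\bx_i)_j|\le1$, Massart's lemma gives $\EE_{\bm{\xi}}\max_{j\le D}|q_j(0)|\le\sqrt{2n\log D}$ (up to the harmless $\log(2d)$-versus-$\log D$ gap), completing the argument after dividing by $n$.

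The hardest part is the peeling step: the post-activation sums $\sum_i\xi_i\sigma(\cdot)$ cannot be controlled pointwise in $\bm{\xi}$ without losing a factor of order $n$, so the contraction lemma must be invoked at the $\EE_{\bm{\xi}}\sup$ level at each of the $L$ layers, and the argument must be arranged so that this does not blow up. Three ingredients conspire to prevent the blow-up: that ReLU is $1$-Lipschitz (so the one-sided contraction is free per layer), that the $l_1$-path-norm budget telescopes \emph{additively} across layers rather than multiplicatively, and --- most essentially --- that one works with the renormalized $\hat q_j=q_j/N_{1,j}$: in the raw variables the Gr\"onwall constant would be $e^{\Theta(Q)}$, whereas the renormalization, which is legitimate precisely because the $\tilde{\cD}_1$-norm (unlike the $\cD_1$-norm) controls $\|N_1(1)\|_1-D$ and hence each $N_{1,j}(1)$, reduces it to a dimension- and $Q$-independent constant. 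Making rigorous the interchange of $\EE_{\bm{\xi}}$, $\sup_{\{\rho_t\}}$, $\EE_{\rho_t}$ and the $L\to\infty$ limit is the remaining, routine, bookkeeping.
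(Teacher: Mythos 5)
Your reduction (bounding $\sum_i\xi_i f(\bx_i)\le(|\balpha|^TN_1(1))\max_j|q_j(1)|/N_{1,j}(1)$ and the comparison $|z_j(\bx,t)|\le N_{1,j}(t)$) is fine, but the heart of your argument --- the layer-by-layer peeling with ``the renormalized budget telescopes additively to $\le 1$'' --- has a genuine gap. The telescoping identity $\sum_{l}\Delta_l\le 1$ holds \emph{per flow} $\{\rho_t\}$, while your recursion $P_l\le\EE_{\bm\xi}\max_j|q_j(0)|+\sum_{l'<l}\Delta_{l'}P_{l'}$ already has $\sup_{\{\rho_t\}}$ inside each $P_{l'}$. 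To make the $\Delta_{l'}$ usable constants you must take $\Delta_{l'}=\sup_{\{\rho_t\}}(\text{layer-}l'\text{ budget})$, and then $\sum_{l'}\Delta_{l'}\le1$ fails badly: a flow can spend essentially its entire (renormalized) budget at any single layer, so each per-layer supremum can be close to $1$ and the sum is of order $L$, not $1$. Keeping the supremum over flows outside instead does not help either: after bounding each peeled term by its own $\sup$-over-flows you are left with $\EE_{\bm\xi}\max_{l'\le L}$ of $L\to\infty$ dependent $\sup$-sized quantities, which cannot be controlled by $\max_{l'}\EE_{\bm\xi}(\cdot)$ and reintroduces exactly the blow-up you are trying to avoid. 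Renormalizing by $N_1$ gives each individual flow unit budget, but it does not make the budget \emph{uniformly spread over layers across the class}, and that uniformity is what the Gr\"onwall step needs. This is precisely the role of the paper's Lemma~\ref{lm:alpha_n}: by reparametrizing time one may assume $\frac{d}{dt}|\balpha|^TN_1(t)\le Q$ for \emph{every} member of the class simultaneously, after which the differential inequality $\frac{d}{dt}\rad_{n,Q}(t)\le 2\rad_{n,Q}(t)$ holds for the supremum over the whole (constrained) class $\hat\Lambda_Q$ and Gr\"onwall over $t\in[0,1]$ yields the factor $e^2$. Your proposal contains no analogue of this uniformization, and without it the discrete Gr\"onwall constant you claim is unjustified. (The discrete analogues you may have in mind, e.g.\ the path-norm bounds for finite-depth residual networks in \cite{ma2019priori}, avoid the issue by building a compensating constant into a \emph{weighted} path norm, which the $\tilde{\cD}_1$ norm here does not have.)

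Two secondary points. First, the constant accounting is not consistent: because the coefficients $\gamma_k$ carry signs you must control $|\sum_i\xi_i\sigma(\cdot)|$, which costs the symmetrization factor $2$ (exactly as in the paper's equation~\eqref{eqn: qqq} and in its residual-network proof, where this $2$ is the source of $e^{2}$); your claim that ``no factor is lost'' per layer, combined with an unexplained ``second factor of $e$'' from renormalization bookkeeping, does not land at the stated constant $e^2Q$. Second, your supremum is over \emph{all} well-posed $\{\rho_t\}$ with no Lipschitz or compactness condition, so the $L\to\infty$ Euler-discretization step is not uniform over the class and cannot be dismissed as routine; the paper sidesteps this entirely by working directly in continuous time, differentiating $\rad_{n,Q}(t)$ in $t$. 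Also, your stated reason for needing the $\tilde{\cD}_1$ norm (control of $\|N_1(1)\|_1-D$) is not where your own argument uses it --- you only use $|\balpha|^TN_1(1)\le Q$.
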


The difference between the definitions of the spaces $\hat{\cD}_p$ and $\cD_p$ lies in the factor 2 that appears in $\hat{N}_p$.
At this stage, we are not able to remove this factor.  It should be noted that this factor of 2 is also present in the ``weighted path norm''
introduced in~\cite{ma2019priori}.
If $\bU$, $\bW$ and $\hat{N}_p(t)$ are scalars, then $\hat{N}_p(t)$ can be upper bounded by $(N_p(t))^2$. However, in the vectorial case this bound does not always hold. Hence, it is unclear how the two spaces $\hat{\cD}_p$ and $\cD_p$ are related.  
Clearly we can also develop an approximation theory for the space $\hat{\cD}_p$, but we feel it is worthwhile to show that the space $\tilde{\cD}_p$ is sufficient for that purpose.
{We also point out here at this stage, we allow to use
different quantities (norms) to control the approximation and estimation errors.}



\subsection{Proofs}\label{ssec:comp_proof}

\subsubsection{Proof of Theorem~\ref{thm:lln}}

To prove convergence, let $t_{l,L}=l/L$, and consider $\be_{l,L}(\bx)=\sqrt{L}(\bz_{l,L}(\bx)-\bz(\bx,t_{l,L}))$. 
We will focus on fixed $\bx$ and from now on we omit the dependence on  $\bx$ in the notations
and write instead $\be_{l,L}$, $\bz_{l,L}$ and $\bz(t)$, for example. From the definition of $\bz(t)$, we have
\begin{eqnarray}
\bz(t_{l+1,L}) &=& \bz(t_{l,L})+\int_{t_{l,L}}^{t_{l+1,L}}\EE\bU\sigma(\bW\bz(t))dt \nonumber \\
  &=& \bz(t_{l,L})+\frac{1}{L}\bU_l\sigma(\bW_l\bz(t_{l,L}))+\frac{1}{L}\left(\bU_l\sigma(\bW_l\bz(t_{l,L}))-\EE \bU\sigma(\bW\bz(t_{l,L}))\right) \nonumber \\
  && + \left(\frac{1}{L}\EE \bU\sigma(\bW\bz(t_{l,L}))-\int_{t_{l,L}}^{t_{l+1,L}}\EE\bU\sigma(\bW\bz(t))dt\right). \label{eq:lln_pf1}
\end{eqnarray}
Since 
\begin{equation}\label{eq:lln_pf2}
    \bz_{l+1,L}=\bz_{l,L}+\frac{1}{L}\bU_l\sigma(\bW_l\bz_{l,L}),
\end{equation}
subtract~\eqref{eq:lln_pf1} from~\eqref{eq:lln_pf2} gives
\begin{eqnarray}
\be_{l+1,L} &=& \be_{l,L}+\frac{1}{\sqrt{L}}\left(\bU_l\sigma(\bW_l\bz_{l,L})-\bU_l\sigma(\bW_l\bz(t_{l,L}))\right) \nonumber \\
  && + \frac{1}{\sqrt{L}}\left(\bU_l\sigma(\bW_l\bz(t_{l,L}))-\EE \bU\sigma(\bW\bz(t_{l,L}))\right) \nonumber \\
  && + \frac{1}{\sqrt{L}}\left(\EE \bU\sigma(\bW\bz(t_{l,L}))-L\int_{t_{l,L}}^{t_{l+1,L}}\EE\bU\sigma(\bW\bz(t))dt\right). \nonumber
\end{eqnarray}
Define
\begin{eqnarray}
I_{l,L}&=&\frac{1}{\sqrt{L}}\left(\bU_l\sigma(\bW_l\bz_{l,L})-\bU_l\sigma(\bW_l\bz(t_{l,L}))\right), \nonumber\\
J_{l,L}&=&\frac{1}{\sqrt{L}}\left(\bU_l\sigma(\bW_l\bz(t_{l,L}))-\EE \bU\sigma(\bW\bz(t_{l,L}))\right), \nonumber\\
K_{l,L}&=&\frac{1}{\sqrt{L}}\left(\EE \bU\sigma(\bW\bz(t_{l,L}))-L\int_{t_{l,L}}^{t_{l+1,L}}\EE\bU\sigma(\bW\bz(t))dt\right). \nonumber
\end{eqnarray}
Then, we have
\begin{equation}\label{eq:lln_pf_e}
    \be_{l+1,L}=\be_{l,L}+I_{l,L}+J_{l,L}+K_{l,L}.
\end{equation}

Next, we consider $\|\be_{l,L}\|^2$. From~\eqref{eq:lln_pf_e}, we get
\begin{eqnarray}
\|\be_{l+1,L}\|^2 &=& \|\be_{l,L}\|^2 + \|I_{l,L}\|^2 + \|J_{l,L}\|^2 + \|K_{l,L}\|^2 \nonumber \\
  && + 2\be_{l,L}^TI_{l,L} + 2\be_{l,L}^TJ_{l,L} + 2\be_{l,L}^TK_{l,L} \nonumber \\
  && + 2I_{l,L}^TJ_{l,L} + 2I_{l,L}^TK_{l,L} + 2J_{l,L}^TK_{l,L} \nonumber \\
  &\leq& \|\be_{l,L}\|^2 + 3\|I_{l,L}\|^2 + 3\|J_{l,L}\|^2 + 3\|K_{l,L}\|^2 \nonumber \\
  && + 2\be_{l,L}^TI_{l,L} + 2\be_{l,L}^TJ_{l,L} + 2\be_{l,L}^TK_{l,L} \label{eq:lln_pf3}. 
\end{eqnarray}
We are going to estimate the expectation of the right hand side of~\eqref{eq:lln_pf3} term by term. 
First, note that $\EE \| |\bU||\bW|\|^2_F<\infty$, which means $\bz(t)$ is bounded for $t\in [0,1]$. Hence, we can find a constant $C>0$ that satisfies
\begin{equation}
    \EE \| |\bU||\bW|\|_F \leq C, \ \  \EE \| |\bU||\bW|\|^2_F \leq C, \nonumber
\end{equation}
and $\|\bz(t)\|\leq C$. 
In addition,  note that for any $l$, $\bU_l$ and $\bW_l$ are independent with $\be_{l,L}$. 
Therefore, for $\|I_{l,L}\|^2$, we have
\begin{eqnarray}
\EE \|I_{l,L}\|^2 &=& \frac{1}{L}\EE\left\|\bU\sigma(\bW\bz_{l,L})-\bU\sigma(\bW\bz(t_{l,L}))\right\|^2 \nonumber \\
  &\leq& \frac{1}{L^2}\EE \| |\bU_l||\bW_l||\be_{l,L}|\|^2 \nonumber \\
  &\leq& \frac{C}{L^2} \EE \|\be_{l,L}\|^2. \nonumber
\end{eqnarray}
For  the term $\|J_{l,L}\|^2$, we have
\begin{equation}
\EE\|J_{l,L}\|^2\leq \frac{1}{L}\EE\||\bU_l||\bW_l||\bz(t_{l,L})|\|^2\leq\frac{C^2}{L}. \nonumber
\end{equation}
For the term $\|K_{l,L}\|$, since $\EE \| |\bU||\bW|\|_F \leq C$ and $\|\bz\|\leq C$, we know that the Lipschitz constant of $\bz(t)$ is bounded by $C^2$. Hence,  we have
\begin{eqnarray}
 \|K_{l,L}\| &\leq& \sqrt{L}\left\| \int_{t_{l,L}}^{t_{l+1,L}}\EE (\bU\sigma(\bW\bz(t_{l,L}))-\bU\sigma(\bW\bz(t)))dt \right\| \nonumber \\
   &\leq&\frac{C^2}{\sqrt{L}}\int_{t_{l,L}}^{t_{l+1,L}} (t-t_{l,L}) \EE \||\bU||\bW| \| dt \nonumber \\
   &\leq& \frac{C^3}{L\sqrt{L}}, \nonumber
\end{eqnarray}
which implies that 
\begin{equation}
    \EE\|K_{l,L}\|^2\leq\frac{C^6}{L^3}. \nonumber
\end{equation}
Next, we consider $\be_{l,L}^TI_{l,L}$. We easily have
\begin{equation}
\EE \be_{l,L}^TI_{l,L}\leq \frac{1}{L}\EE \||\bU||\bW|\|_F\|\be_{l,L}\|^2   \leq\frac{C}{L}\EE \|\be_{l,L}\|^2. \nonumber
\end{equation}
For $\be_{l,L}^TJ_{l,L}$, by the independence of $\bU_l$, $\bW_l$ and $\be_{l,L}$, we have
\begin{equation}
    \EE \be_{l,L}^TJ_{l,L} = 0. \nonumber
\end{equation}
Finally, for $\be_{l,L}^TK_{l,L}$, we have
\begin{equation}
    \EE \be_{l,L}^TK_{l,L}\leq \frac{C^3}{L\sqrt{L}}\sqrt{\EE\|\be_{l,L}\|^2}\leq\frac{C^3}{L\sqrt{L}}(\EE\|\be_{l,L}\|^2+1). \nonumber
\end{equation}

Plugging all the estimates above into~\eqref{eq:lln_pf3}, we obtain
\begin{equation}
\EE\|\be_{l+1,L}\|^2\leq \left(1+\frac{2C}{L}+\frac{3C}{L^2}+\frac{2C^3}{L\sqrt{L}}\right)\EE\|\be_{l,L}\|^2+\frac{3C^2}{L}+\frac{3C^6}{L^3}+\frac{2C^3}{L\sqrt{L}}. \nonumber
\end{equation}
Hence there is an $L_0$ depending only on $C$, such that if  $L > L_0$, we have 
\begin{equation}
\EE\|\be_{l+1,L}\|^2\leq \left(1+\frac{3C}{L}\right)\EE\|\be_{l,L}\|^2+\frac{4C^2}{L}. \nonumber
\end{equation}
Since $\be_{0,L}=0$, by induction we obtain
\begin{equation}
    \EE \|\be_{L,L}\|^2\leq4C^2e^{3C}, \nonumber
\end{equation}
which means 
\begin{equation}
    \EE \|\bz_{L,L}-\bz(1)\|^2\leq \frac{4C^2e^{3C}}{L} \rightarrow 0, \nonumber
\end{equation}
when $L\rightarrow \infty$.
This implies that $\bz_{L,L}\rightarrow\bz(1)$ { in probability}. 
\qed

\subsubsection{Proof of Theorem~\ref{thm:lln2}}

The only modification required for the proof of Theorem~\ref{thm:lln2}  is in the estimate of $K_{l,L}$. 
Now $K_{l,L}$ becomes
\begin{equation}
K_{l,L}=\frac{1}{\sqrt{L}}\left(\EE_{\rho_{t_{l,L}}} \bU\sigma(\bW\bz(t_{l,L}))-L\int_{t_{l,L}}^{t_{l+1,L}}\EE_{\rho_t}\bU\sigma(\bW\bz(t))dt\right). \nonumber
\end{equation}
The conditions of the theorem still guarantee that $\bz(t)$ is  Lipschtiz continuous. Hence, we can find a constant $C'$ 
such that  $\bz(t)$ is $C'$-Lipschitz  and 
\begin{equation}
    \EE_{\rho_t} \||\bU||\bW|\|\leq C', \nonumber
\end{equation}
for any $t\in[0,1]$. Hence, 
\begin{eqnarray}
\|K_{l,L}\| &\leq& \sqrt{L}\int_{t_{l,L}}^{t_{l+1,L}} \left\| \EE_{\rho_{t_{l,L}}} \bU\sigma(\bW\bz(t_{l,L}))-\EE_{\rho_t}\bU\sigma(\bW\bz(t)) \right\|dt \nonumber \\
  &\leq& \sqrt{L}\int_{t_{l,L}}^{t_{l+1,L}} \left\| \EE_{\rho_{t_{l,L}}} \bU\sigma(\bW\bz(t_{l,L}))-\EE_{\rho_t}\bU\sigma(\bW\bz(t_{l,L})) \right\|dt \nonumber \\
  && +\sqrt{L}\int_{t_{l,L}}^{t_{l+1,L}} \left\| \EE_{\rho_{t}} \bU\sigma(\bW\bz(t_{l,L}))-\EE_{\rho_t}\bU\sigma(\bW\bz(t)) \right\|dt \nonumber \\
  &\leq& \frac{c_2C'}{L\sqrt{L}}+\frac{C'^2}{L\sqrt{L}}. \label{eq:lln2_pf1}
\end{eqnarray}
From~\eqref{eq:lln2_pf1} we know that in this case  $K_{l,L}$ is of the same order as that in Theorem~\ref{thm:lln}. 
We can then complete the proof following the same arguments as in the
proof of Theorem~\ref{thm:lln}.
\qed

\subsubsection{Proof of Proposition~\ref{thm:barron}}

Since $f\in\cB$, for any $\varepsilon>0$, there exists a distribution $\rho^{\varepsilon}$ that satisfies
\begin{align}
&f(\bx) = \int_{\Omega} a \sigma (\bb^T  \bx + c) \rho_{\varepsilon} (da, d\bb, dc) \nonumber\\
&\EE_{\rho_{\varepsilon}}[|a|(\|\bb\|_1+|c|)] \leq \|f\|_{\cB}+\varepsilon. \nonumber
\end{align}
Define $\hat{f}$ by
\begin{eqnarray}
z(\bx,0) &=& \left[\begin{array}{c}
     \bx \\ 1 \\ 0 
     \end{array}\right] \nonumber\\
\frac{d}{dt} z(\bx,t) &=& \mathbb{E}_{(a,\bb, c)\sim\rho_{\varepsilon}}\left[\begin{array}{c}
     0 \\ 0 \\ a 
\end{array}\right]\sigma([\bb^T,c, 0]z(\bx,t))\label{eqn: Barron_function2} \\
\hat{f}(\bx) &=& \be_{d+2}^Tz(\bx,1) \nonumber
\end{eqnarray}
Then, we can easily verify that $\hat{f}=f$. 
Using $\rho_{\varepsilon}$, we can define probability distribution  $\tilde{\rho}_{\varepsilon}$ on $\bR^{D\times m}\times\bR^{m\times D}$: $\tilde{\rho}_{\varepsilon}$ is concentrated 
on matrices of the form that appears in \eqref{eqn: Barron_function2}.
Consider $\|f\|_{\tilde{\cD}_1(\be_{d+2}, \{\tilde{\rho}_{\varepsilon}\})}$, we have 
\begin{align*}
\|f\|_{\tilde{\cD}_1(\be_{d+2},\{\tilde{\rho}_{\varepsilon}\})} &= \be_{d+2}^T \exp \left(\EE_{\rho}
\left[\begin{array}{ccc}
0 & 0 & 0 \\
0 & 0 & 0 \\
|a\bb^T| & |ac| & 0 \\
\end{array}\right]
\right)
\be + 
\left\|
 \exp \left(\EE_{\rho}
\left[\begin{array}{ccc}
0 & 0 & 0 \\
0 & 0 & 0 \\
|a\bb^T| & |ac| & 0 \\
\end{array}\right]\right)
\right\|_1-D\nonumber \\
& =  \be_{d+2}^T\left[\begin{array}{ccc}
I & 0 & 0 \\
0 & 1 & 0 \\
\EE_{\rho}|a\bb^T| & \EE_{\rho}|ac| & 1 \\
\end{array}\right]\be +
\left\|
\left[\begin{array}{ccc}
I & 0 & 0 \\
0 & 1 & 0 \\
\EE_{\rho}|a\bb^T| & \EE_{\rho}|ac| & 1 \\
\end{array}\right]
\right\|_1-D
\nonumber \\
 & =  2\EE_{\rho_{\varepsilon}}|a|(\|\bb\|_1+|c|)+1 \\
 &\leq 2\|f\|_{\cB}+2\varepsilon + 1.
\end{align*}
Therefore, we have
\begin{equation}
\|f\|_{\tilde{\cD}_1}\leq\|f\|_{\tilde{\cD}_1(\balpha,\tilde{\rho}_{\varepsilon})}\leq 2\|f\|_{\cB}+2\varepsilon+1. \nonumber
\end{equation}
Taking $\varepsilon\rightarrow0$, we get
\begin{equation}
    \|f\|_{\tilde{\cD}_1}\leq2\|f\|_{\cB}+1. \nonumber
\end{equation}

Besides, since $\{\tilde{\rho}_\varepsilon\}$ gives the same probability distribution for all $t\in[0,1]$, we have $\textit{Lip}_{\{\tilde{\rho}_\varepsilon\}}=0$.
\qed
\ \\

\subsubsection{Proof of Theorem~\ref{thm:direct_comp}}

For any $\varepsilon>0$, let 
\begin{equation}
  \sigma^\varepsilon(x)=\int_{\bR}\frac{1}{\sqrt{2\pi\varepsilon^2}}e^{-\frac{(x-y)^2}{2\varepsilon^2}}\sigma(y)dy. \nonumber
\end{equation}
Then we have
\begin{equation}
|\sigma^\varepsilon(x)-\sigma(x)|<\varepsilon,\ \ |(\sigma^\varepsilon(x))'|\leq1, \ \ |(\sigma^\varepsilon(x))''|\leq\frac{1}{\varepsilon}, \nonumber
\end{equation}
for all $x\in\bR$.
For a function $f\in\tilde{\cD}_2$, we are going to show that for sufficiently large $L$ there exists an $L$-layer residual network $f_L$ such that
\begin{equation}
\|f-f_L\|^2\leq\frac{\|f\|_{\tilde{\cD}_2}^2}{L^{1-\delta}}, \nonumber
\end{equation}.

{ To do this, assume that $\balpha$ and $\{\rho_t\}$ satisfy $f=f_{\balpha,\{\rho_t\}}$ and $\|f\|_{\tilde{\cD}_2(\alpha,\{\rho_t\})}\leq2\|f\|_{\tilde{\cD}_1}$. Let $f_L$ be a residual network in the form~\eqref{eq:resnet}, and the weights $\bU_l$, $\bW_l$ are sampled from $\rho_{l/L}$. 
Let $f^\varepsilon$ and $f^\varepsilon_L$ be  generated in the same way as $f$ and $f_L$ using instead the activation function $\sigma^\varepsilon$. Then we have}
\begin{equation}\label{eq:direct_relu_pf1}
\|f-f_L\|^2\leq3\left(\|f-f^\varepsilon\|^2+\|f^\varepsilon-f^\varepsilon_L\|^2+\|f^\varepsilon_L-f_L\|^2\right).
\end{equation}

{Before dealing with~\eqref{eq:direct_relu_pf1}, we first prove the following lemma, which shows that we can pick the  family 
of distributions $\tilde{\rho}_t$ to have compact support.
\begin{lemma}\label{lm:equal}
For any $f\in\tilde{\cD}_1$ that satisfies the conditions of Theorem~\ref{thm:direct_comp}, and any $\varepsilon>0$, there exists $\balpha$ and $\{\rho_t\}$, such that  $f=f_{\balpha,\{\rho_t\}}$ and $\|f\|_{\tilde{\cD}_2(\balpha,\{\rho_t\})}\leq(1+\varepsilon)\|f\|_{\tilde{\cD}_2}$. Moreover, for any $t\in[0,1]$, we have
\begin{equation}
\max_{(\bU,\bW)\sim\rho_t}\left(\||\bU||\bW|\|_1\right)\leq (1+\varepsilon)\|f\|_{\tilde{\cD}_2}. \nonumber
\end{equation}
\end{lemma}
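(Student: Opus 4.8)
The plan rests on the positive $1$-homogeneity of $\sigma$: rescaling the $j$-th row of $\bW$ by $1/\lambda$ and the $j$-th column of $\bU$ by $\lambda$ (any $\lambda>0$) leaves both $\bU\sigma(\bW\bz)$ and the matrix $|\bU||\bW|$ unchanged. Starting from a representation $(\balpha,\{\rho_t\})$ of $f$ supplied by the hypothesis of Theorem~\ref{thm:direct_comp}, with $\|f\|_{\tilde{\cD}_1(\balpha,\{\rho_t\})}<\|f\|_{\tilde{\cD}_1}+\varepsilon'$ (with $\varepsilon'$ chosen small) and $\|\{\rho_t\}\|_\Lip\le l_0$, I would first push each $\rho_t$ forward by this rescaling (applied per sample with $\lambda$ the $\ell_1$-norm of the relevant $\bW$-row) so that it becomes supported on pairs whose rows of $\bW$ all have unit $\ell_1$-norm; a zero row contributes nothing to $\bU\sigma(\bW\bz)$, so its $\bU$-column may be set to $0$. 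On this set $\||\bU||\bW|\|_1=\|\bU\|_1$, so controlling the support in the $\||\bU||\bW|\|_1$ norm reduces to controlling the entrywise $\ell_1$-norm of $\bU$, while $\bW$ now ranges over a compact set.

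Next I would remove the randomness of $\bU$ by conditioning. Disintegrating $\rho_t$ over its $\bW$-marginal $\pi_t$ and writing $\bar\bU_t(\bW)=\EE_{\rho_t}[\bU\mid\bW]$, the linearity of $\bU\sigma(\bW\bz)$ in $\bU$ gives $\EE_{\rho_t}[\bU\sigma(\bW\bz)]=\EE_{\pi_t}[\bar\bU_t(\bW)\sigma(\bW\bz)]$ for every $\bz$, and Jensen's inequality gives $|\bar\bU_t(\bW)|\le\EE_{\rho_t}[|\bU|\mid\bW]$, hence $\EE_{\pi_t}[|\bar\bU_t(\bW)||\bW|]\le\EE_{\rho_t}[|\bU||\bW|]$ entrywise. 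I would then absorb the possibly unbounded magnitude $R_t(\bW):=\|\bar\bU_t(\bW)\|_1$ into the base measure: put $\hat\bU_t(\bW)=\bar\bU_t(\bW)/R_t(\bW)$, $m_t:=\EE_{\pi_t}[R_t(\bW)]$, and let $\rho_t^{\mathrm{new}}$ be the law of $(m_t\hat\bU_t(\bW),\bW)$ when $\bW$ is drawn from $R_t(\bW)\,\pi_t(d\bW)/m_t$. Then $\rho_t^{\mathrm{new}}$ is a probability measure; $\EE_{\rho_t^{\mathrm{new}}}[\bU\sigma(\bW\bz)]=\EE_{\pi_t}[\bar\bU_t(\bW)\sigma(\bW\bz)]=\EE_{\rho_t}[\bU\sigma(\bW\bz)]$ for all $\bz$, so the $\bz$-ODE in \eqref{eq:def_zx} is unchanged, whence $f=f_{\balpha,\{\rho_t^{\mathrm{new}}\}}$ and the condition \eqref{eq:lip} is inherited with the same constant; $\EE_{\rho_t^{\mathrm{new}}}[|\bU||\bW|]=\EE_{\pi_t}[|\bar\bU_t(\bW)||\bW|]\le\EE_{\rho_t}[|\bU||\bW|]$ entrywise; and $\rho_t^{\mathrm{new}}$ is supported on pairs with $\||\bU||\bW|\|_1=m_t$.

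It remains to verify the three quantitative claims. For the norm, entrywise monotonicity of the linear flow \eqref{eq:n_ode} in its nonnegative coefficient matrix gives $N_1^{\mathrm{new}}(1)\le N_1(1)$ entrywise, so (since $|\balpha|,\be\ge0$ and $N_1\ge0$) $\|f\|_{\tilde{\cD}_1(\balpha,\{\rho_t^{\mathrm{new}}\})}\le\|f\|_{\tilde{\cD}_1(\balpha,\{\rho_t\})}<\|f\|_{\tilde{\cD}_1}+\varepsilon'$, which is at most $(1+\varepsilon)\|f\|_{\tilde{\cD}_1}$ once $\varepsilon'$ is small. For the support, $m_t=\|\EE_{\pi_t}[|\bar\bU_t(\bW)||\bW|]\|_{1,1}\le\|\EE_{\rho_t}[|\bU||\bW|]\|_{1,1}$, and by \eqref{eq:lip2} together with the definition of $\|\cdot\|_\Lip$, $\|\EE_{\rho_t}[|\bU||\bW|]\|_{1,1}\le\|\EE_{\rho_0}[|\bU||\bW|]\|_{1,1}+\lip\le\|\{\rho_t\}\|_\Lip\le l_0$, so $\max_{(\bU,\bW)\sim\rho_t^{\mathrm{new}}}\||\bU||\bW|\|_1=m_t\le l_0$. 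Finally, for $\|\{\rho_t^{\mathrm{new}}\}\|_\Lip\le l_0$: condition \eqref{eq:lip} holds with the old constant because $\EE_{\rho_t^{\mathrm{new}}}[\bU\sigma(\bW\cdot)]=\EE_{\rho_t}[\bU\sigma(\bW\cdot)]$; the delicate piece is \eqref{eq:lip2}, i.e.\ the Lipschitz continuity of $t\mapsto\|\EE_{\rho_t^{\mathrm{new}}}[|\bU||\bW|]\|_{1,1}=m_t$. When the original family can be taken constant in $t$ --- for instance when $f$ is a Barron function, by Theorem~\ref{thm:barron} --- the new family is also constant and $\lip=0$, so \eqref{eq:lip2} is automatic. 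In general I would restore the original function $t\mapsto\|\EE_{\rho_t}[|\bU||\bW|]\|_{1,1}$ by mixing $\rho_t^{\mathrm{new}}$, with its outer weights scaled up by $1/\lambda_t$, against weight $1-\lambda_t$ of a ``cancelling'' distribution (equal point masses at $(\bU^0,\bW^0)$ and $(-\bU^0,\bW^0)$, $\bW^0$ row-normalized), which adds nothing to $\EE[\bU\sigma(\bW\bz)]$ but supplies the missing $\|\cdot\|_{1,1}$-mass; the estimate $\|\EE_{\rho_t}[|\bU||\bW|]\|_{1,1}\le l_0$ is precisely what keeps the admissible window for $\lambda_t$ nonempty, so all supports stay below $l_0$ while \eqref{eq:lip2}, hence the full bound on $\|\{\rho_t^{\mathrm{new}}\}\|_\Lip$, is inherited from $\{\rho_t\}$. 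I expect this last balancing step to be the main obstacle: compact support, a non-increasing $\tilde{\cD}_1$ norm, and controlled Lipschitz data are competing requirements, and it is the homogeneity of $\sigma$ together with the particular bookkeeping in the definition of $\|\cdot\|_\Lip$ that makes them simultaneously attainable.
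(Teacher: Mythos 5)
Your opening move --- exploiting the positive homogeneity of $\sigma$ to normalize $\bW$ and then reweighting/rescaling so that the support carries a single value of $\||\bU||\bW|\|_1$ --- is exactly the mechanism of the paper's proof (which mirrors the proof of Proposition~\ref{pro: barron-space-eq}). But your detour through the conditional expectation $\bar\bU_t(\bW)=\EE_{\rho_t}[\bU\mid\bW]$ creates a genuine gap that your final "balancing" step does not close. The Jensen step only gives $\EE_{\rho_t^{\mathrm{new}}}|\bU||\bW|\le\EE_{\rho_t}|\bU||\bW|$ entrywise, with a $t$-dependent deficit, so $t\mapsto\|\EE_{\rho_t^{\mathrm{new}}}|\bU||\bW|\|_{1,1}$ need not satisfy \eqref{eq:lip2}; to repair this you mix in a fixed cancelling pair $(\pm\bU^0,\bW^0)$ to restore the scalar $\|\cdot\|_{1,1}$ value. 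The problem is that restoring the $\|\cdot\|_{1,1}$ value does not restore the matrix: the coefficient of the flow \eqref{eq:n_ode} becomes $\EE_{\rho_t^{\mathrm{new}}}|\bU||\bW|+(1-\lambda_t)|\bU^0||\bW^0|$, which is in general \emph{not} entrywise dominated by $\EE_{\rho_t}|\bU||\bW|$ (the rank-one block sits in entries of your choosing, not in the entries where Jensen removed mass). Since $N_1(1)$, and hence $\|f\|_{\tilde{\cD}_1(\balpha,\{\rho_t\})}$, is monotone only in the entrywise order and is very sensitive to where the mass sits (mass placed on a feedback entry grows exponentially through the ODE, mass on a nilpotent pattern does not), your claimed bound $\|f\|_{\tilde{\cD}_1(\balpha,\{\rho_t^{\mathrm{new}}\})}\le(1+\varepsilon)\|f\|_{\tilde{\cD}_1}$ --- which you verified only for the pre-mixing family --- is no longer justified after the mixing, and can genuinely fail. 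So as written, you can have either the norm bound or \eqref{eq:lip2}, but the argument does not deliver both.

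The fix is to drop the conditioning altogether, which is what the paper does: tilt $\rho_t$ by the weight $\||\bU||\bW|\|_1$ and rescale each sample (using homogeneity) so that every point of the normalized support has $\||\bU||\bW|\|_1$ equal to the constant $\EE_{\rho_t}\||\bU||\bW|\|_1\le\|\{\rho_t\}\|_\Lip\le l_0$. This operation preserves \emph{exactly} both $\EE_{\rho_t}\bU\sigma(\bW\bz)$ and the matrix $\EE_{\rho_t}|\bU||\bW|$ (the weight cancels the rescaling in each expectation), so the solution of \eqref{eq:def_zx}, the solution $N_1$ of \eqref{eq:n_ode}, the $\tilde{\cD}_1(\balpha,\{\rho_t\})$ norm, and both conditions \eqref{eq:lip} and \eqref{eq:lip2} are inherited verbatim --- no Jensen inequality, no compensating mixture, and no competition between the three requirements you describe as the main obstacle.
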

}

\noindent{\bf{Proof of Lemma~\ref{lm:equal}}}

The proof of Lemma~\ref{lm:equal} is similar to the proof of Proposition~\ref{pro: barron-space-eq}. By the definition of $\tilde{\cD}_2$, for any $f\in\tilde{\cD}_2$ and $\varepsilon>0$, there exists $\balpha$ and $\{\rho_t\}$ such that $f=f_{\balpha,\{\rho_t\}}$, $\|f\|_{\tilde{\cD}_2(\balpha,\{\rho_t\})}\leq(1+\varepsilon)\|f\|_{\tilde{\cD}_2}$, and hence $\|\{\rho_t\}\|_\Lip\leq (1+\varepsilon)\|f\|_{\tilde{\cD}_2}$. This means that for any $t\in[0,1]$, we have
\begin{equation}
    \left\|\EE_{\rho_t} |\bU||\bW|\right\|_1 \leq (1+\varepsilon)\|f\|_{\tilde{\cD}_2}. \nonumber
\end{equation}
Let $\Lambda=\{(\bU,\bW):\ \|\bW\|_1=1,\ \||\bU||\bW|\|_1=1\}$, and consider a family of measures $\{\rho^\Lambda_t\}$ defined by
\begin{equation}
  \rho^\Lambda_t(A)=\int_{(\bU,\bW):\ (\bar{\bU},\bar{\bW})\in\Lambda}\||\bU||\bW|\|_1\rho_t(d\bU,d\bW), \nonumber
\end{equation}
for any Borel set $A\subset\Lambda$, where 
\begin{equation}
  \bar{\bU}=\frac{\|\bW\|_1}{\||\bU||\bW|\|_1}\bU,\ \ \bar{\bW}=\frac{\bW}{\|\bW\|_1}. \nonumber
\end{equation}
It is easy to verify that $\rho^\Lambda_t(\Lambda)=\EE_{\rho_t}\||\bU||\bW|\|_1$ and
\begin{equation}
  \EE_{\rho^\Lambda_t}\bar{\bU}\sigma(\bar{\bW}\bz)=\EE_{\rho_t}\bU\sigma(\bW\bz) \nonumber
\end{equation}
hold for any $t\in[0,1]$ and $\bz\in\bR^D$.
After normalizing $\{\rho^\Lambda_t\}$, we obtain a family of  probability distributions $\{\tilde{\rho}^\Lambda_t\}$  on
\begin{equation}
\{(\bU,\bW):\ \|\bW\|_1=1,\ \||\bU||\bW|\|=\EE_{\rho_t}\||\bU||\bW|\|_1\}. \nonumber
\end{equation}
Finally, it is easy to verify that $f=f_{\balpha,\{\tilde{\rho}^\Lambda_t\}}$,  $\|f\|_{\tilde{\cD}_2(\balpha,\{\rho_t\})}\leq(1+\varepsilon)\|f\|_{\tilde{\cD}_2}$, as well as
\begin{equation}
\max_{(\bU,\bW)\sim\tilde{\rho}^\Lambda_t}\left(\||\bU||\bW|\|_1\right)\leq (1+\varepsilon)\|f\|_{\tilde{\cD}_2}. \nonumber
\end{equation}
\qed
\ \\

{From Lemma~\ref{lm:equal}, without loss of generality} we can assume that $\rho_t$ has compact support, and the entries of $(\bU,\bW)$ sampled from $\rho_t$ for any $t$ is bounded by $2\|f\|_{\tilde{\cD}_2}$. Next we proceed to control the three terms on the right hand side of~\eqref{eq:direct_relu_pf1}. The following two lemmas give the bounds for the first and third terms.

\begin{lemma}\label{lm:direct_relu1}
$\|f-f^\varepsilon\|^2\leq 4m^2\varepsilon^2\|f\|_{\tilde{\cD}_2}^4$.
\end{lemma}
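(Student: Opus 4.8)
The plan is to estimate the difference between $f = f_{\balpha,\{\rho_t\}}$ and its smoothed counterpart $f^\varepsilon = f_{\balpha,\{\rho_t\}}^\varepsilon$ by tracking how the perturbation in the activation function propagates through the ODE flow~\eqref{eq:def_zx}. Write $\bz(\bx,t)$ for the state of the $f$-flow and $\bz^\varepsilon(\bx,t)$ for the state of the $f^\varepsilon$-flow, both starting from the same initial condition $\bV\bx$. Set $\bm{\delta}(\bx,t) = \bz^\varepsilon(\bx,t) - \bz(\bx,t)$. Differentiating, $\dot{\bm{\delta}}(\bx,t) = \EE_{\rho_t}\bU\left(\sigma^\varepsilon(\bW\bz^\varepsilon) - \sigma(\bW\bz)\right)$. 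Splitting this into $\EE_{\rho_t}\bU(\sigma^\varepsilon(\bW\bz^\varepsilon) - \sigma^\varepsilon(\bW\bz))$ plus $\EE_{\rho_t}\bU(\sigma^\varepsilon(\bW\bz) - \sigma(\bW\bz))$, the first piece is bounded entrywise by $\EE_{\rho_t}|\bU||\bW||\bm{\delta}|$ using that $\sigma^\varepsilon$ is $1$-Lipschitz, while the second piece is bounded entrywise by $\varepsilon\,\EE_{\rho_t}|\bU|\be_m$ using $|\sigma^\varepsilon - \sigma| < \varepsilon$ everywhere. Since $|\bU|\be_m \le |\bU||\bW|\be_D$ is not quite right dimensionally, I would instead note $\|\EE_{\rho_t}|\bU|\be_m\|$ is controlled because each column of $\bU$ has $m$ entries and $\EE_{\rho_t}\||\bU||\bW|\|_1$ is finite; concretely $\sum_i (\EE_{\rho_t}|\bU|\be_m)_i \le m\cdot(\text{something}) $, but cleaner is to bound the $\ell_1$ norm of the inhomogeneous term by $\varepsilon m \EE_{\rho_t}\||\bU||\bW|\|_1 / (\min\|\bW\|_1)$-type quantities — in fact, since after Lemma~\ref{lm:equal} we may assume $\|\bW\|_1 = 1$, we get $\|\EE_{\rho_t}|\bU|\be_m\|_1 \le \EE_{\rho_t}\||\bU||\bW|\|_1$ times at most $m$, giving the factor $m$ that appears in the statement.

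The key estimate is then a Gr\"onwall-type comparison in $\ell_1$ norm against the linear ODE~\eqref{eq:n_ode} that defines $N_1(t)$. Taking absolute values entrywise in the integral form $\bm{\delta}(\bx,t) = \int_0^t \dot{\bm{\delta}}(\bx,s)\,ds$, one gets $|\bm{\delta}(\bx,t)| \le \int_0^t \left(\EE_{\rho_s}|\bU||\bW|\right)|\bm{\delta}(\bx,s)|\,ds + \varepsilon m \int_0^t \EE_{\rho_s}|\bU|\be_m \,ds$ componentwise. Comparing with the variation-of-constants solution built from the fundamental matrix of $\dot{y} = \left(\EE_{\rho_s}|\bU||\bW|\right)y$ — the very same generator in~\eqref{eq:n_ode} — and using that the source term is itself dominated by $\varepsilon m \left(\EE_{\rho_s}|\bU||\bW|\right)\be_D$ (again after normalization $\|\bW\|_1=1$), one concludes $|\bm{\delta}(\bx,1)| \le \varepsilon m (N_1(1) - \be)$ entrywise. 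Then $|f(\bx) - f^\varepsilon(\bx)| = |\balpha^T\bm{\delta}(\bx,1)| \le \varepsilon m\, |\balpha|^T(N_1(1) - \be) \le \varepsilon m\, |\balpha|^T N_1(1) \le \varepsilon m\, \|f\|_{\tilde{\cD}_1(\balpha,\{\rho_t\})}$, and with the choice $\|f\|_{\tilde{\cD}_1(\balpha,\{\rho_t\})} \le 2\|f\|_{\tilde{\cD}_1}$ this gives a pointwise bound $|f - f^\varepsilon| \le 2\varepsilon m \|f\|_{\tilde{\cD}_1}$; squaring and integrating against $\mu$ yields $\|f - f^\varepsilon\|^2 \le 4m^2\varepsilon^2\|f\|_{\tilde{\cD}_1}^2$. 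I note the statement has $\|f\|_{\tilde{\cD}_1}^4$ rather than $\|f\|_{\tilde{\cD}_1}^2$; I would re-examine whether an extra factor of $\|f\|_{\tilde{\cD}_1}$ enters through the bound $\|\bz(\bx,t)\| \lesssim \|f\|_{\tilde{\cD}_1}$ (needed if the source term picks up a $\|\bz\|$ factor, which it does if one does not normalize $\bW$), and if so the crude bound $N_1(1) - \be$ together with $\|\bz(\bx,s)\| \le \|N_1(s)\|_1 \le \|f\|_{\tilde{\cD}_1} + D$ supplies the missing power.

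The main obstacle will be handling the dimensional bookkeeping between the $\bR^D$ and $\bR^m$ spaces in the inhomogeneous term $\EE_{\rho_t}|\bU|\be_m$, and making sure the Gr\"onwall comparison is done entrywise with matrices (which requires the comparison matrices to have nonnegative off-diagonal entries — true here since $|\bU||\bW| \ge 0$ entrywise, so the fundamental matrix is entrywise nonnegative and monotonicity of the comparison is valid). A secondary subtlety is that $\sigma^\varepsilon$ has the same Lipschitz constant $1$ as $\sigma$, so no loss there, but one must confirm that the smoothed flow $\bz^\varepsilon$ stays bounded on $[0,1]$ — this follows from the same linear-ODE domination argument since $|\bz^\varepsilon(\bx,t)| \le N_1^\varepsilon(t)$ entrywise with $N_1^\varepsilon$ solving a perturbed version of~\eqref{eq:n_ode}, and the perturbation is lower-order in $\varepsilon$. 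Once these are in place the proof is a short Gr\"onwall argument; I would present it compactly, citing Lemma~\ref{lm:equal} for the normalization $\|\bW\|_1 = 1$ and compact support.
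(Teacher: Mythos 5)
Your overall strategy is the same as the paper's: compare the two flows $\bz$ and $\bz^\varepsilon$, split the derivative of the difference into a Lipschitz part dominated by $\EE_{\rho_s}|\bU||\bW|\,|\bz-\bz^\varepsilon|$ and a smoothing-error part of size $O(\varepsilon)$, and run a Gr\"onwall/Duhamel comparison against the linear majorant $N_1$ before contracting with $|\balpha|$ and integrating in $\bx$. However, there is a genuine gap in your treatment of the inhomogeneous term. The inequality you lean on --- that after Lemma~\ref{lm:equal}'s normalization $\|\bW\|_1=1$ one has $\|\EE_{\rho_t}|\bU|\mathbf{1}_m\|_1\leq m\,\EE_{\rho_t}\||\bU||\bW|\|_1$, and likewise that the source is dominated entrywise by $\varepsilon m\,(\EE_{\rho_t}|\bU||\bW|)\be$ --- does not follow from that normalization, because it rescales the matrix $\bW$ as a whole, not row by row. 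A row of $\bW$ can have arbitrarily small $\ell_1$ norm while the corresponding column of $\bU$ is correspondingly huge: then $|\bU||\bW|$ (and hence $\||\bU||\bW|\|_1$ and the generator of $N_1$) stays bounded, but $|\bU|\mathbf{1}_m$ blows up, so neither of your claimed dominations holds. The step can be repaired, e.g.\ by using the positive homogeneity of the ReLU to normalize each row of $\bW$ separately (absorbing the scale into the matching column of $\bU$), in which case $|\bU|\mathbf{1}_m=|\bU||\bW|\be$ exactly; but as written the key comparison is not justified.

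The second issue is that your argument, even granting the flawed step, lands on $4m^2\varepsilon^2\|f\|_{\tilde{\cD}_1}^2$ and you explicitly defer the question of where the fourth power comes from. In the paper the extra factor does not come from a bound on $\|\bz\|$: the smoothing-error source term is bounded by the constant vector $2\|f\|_{\tilde{\cD}_1}m\varepsilon\,\be$ (this is where one factor of $\|f\|_{\tilde{\cD}_1}$ enters), Gr\"onwall then gives $|\bz(\bx,1)-\bz^\varepsilon(\bx,1)|\leq 2\|f\|_{\tilde{\cD}_1}m\varepsilon\,N_1(1)$, and pairing with $|\balpha|^TN_1(1)\leq\|f\|_{\tilde{\cD}_1(\balpha,\{\rho_t\})}$ supplies the second factor, so that squaring produces $\|f\|_{\tilde{\cD}_1}^4$. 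Since your proposal both relies on an unjustified domination and leaves the reconciliation with the stated exponent open, it does not yet establish the lemma; fixing the source-term bound (per-row normalization, or the paper's cruder constant bound) resolves both points at once.
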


\noindent{\bf Proof of Lemma~\ref{lm:direct_relu1}}

Let $\bz(t)$ be defined by~\eqref{eq:def_zx} for fixed $\bx$, and $\bz^\varepsilon(t)$ be the solution of the same ODE after replacing $\sigma$ by $\sigma^\varepsilon$. Then, we have $\bz(0)-\bz^\varepsilon(0)=0$, and 
\begin{eqnarray}
|\bz(t)-\bz^\varepsilon(t)| &\leq& \int_0^t \left|\frac{d}{dt}(\bz(s)-\bz^\varepsilon(s))\right|ds \nonumber \\
  &=& \int_0^t \left|\EE_{\rho_s} \bU\sigma(\bW\bz(s))-\EE_{\rho_s}\bU\sigma^\varepsilon(\bW\bz^\varepsilon(s))\right|ds \nonumber \\
  &\leq& \int_0^t \left|\EE_{\rho_s} \bU\sigma(\bW\bz(s))-\EE_{\rho_s}\bU\sigma(\bW\bz^\varepsilon(s))\right|ds \nonumber \\
  && +\int_0^t \left|\EE_{\rho_s} \bU\sigma(\bW\bz^\varepsilon(s))-\EE_{\rho_s}\bU\sigma^\varepsilon(\bW\bz^\varepsilon(s))\right|ds \nonumber \\
  &\leq& \int_0^t \left( \EE_{\rho_s}|\bU||\bW||\bz(s)-\bz^\varepsilon(s)|+2\|f\|_{\tilde{\cD}_2}m\varepsilon \right)ds. \nonumber
\end{eqnarray}
Hence, we have
\begin{equation}
|\bz(1)-\bz^\varepsilon(1)|\leq 2\|f\|_{\tilde{\cD}_2}m\varepsilon N_1(1)e, \nonumber
\end{equation}
where $e$ is an all-one vector. This gives that 
\begin{equation}
    \|f-f^\varepsilon\|^2\leq \int_{D_0} \left(|\alpha|^T|\bz(\bx,1)-\bz^\varepsilon(\bx,1)|\right)^2 d\rho(\bx) \leq 4m^2\varepsilon^2\|f\|_{\tilde{\cD}_2}^4. \nonumber
\end{equation}
\qed

\begin{lemma}\label{lm:direct_relu2}
\begin{equation}
    \EE\|f_L-f^\varepsilon_L\|^2\leq 4m^2\varepsilon^2\|f\|_{\tilde{\cD}_2}^4, \nonumber
\end{equation}
where the expectation is taken over the random choice of weights $\{(\bU_l,\bW_l)\}$.
\end{lemma}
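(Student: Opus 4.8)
The plan is to run the argument of Lemma~\ref{lm:direct_relu1} at the discrete level, tracking the two residual networks $f_L$ (built with $\sigma$) and $f^\varepsilon_L$ (built with $\sigma^\varepsilon$) layer by layer; the essential point is that they are driven by the \emph{same} random weights $(\bU_l,\bW_l)\sim\rho_{l/L}$. Fix $\bx$, write $\bm{d}_{l,L}=\bz_{l,L}(\bx)-\bz^\varepsilon_{l,L}(\bx)$ for the discrepancy of the hidden states, and subtract the two update rules in \eqref{eq:resnet}. Splitting $\bU_l\sigma(\bW_l\bz_{l,L})-\bU_l\sigma^\varepsilon(\bW_l\bz^\varepsilon_{l,L})$ into $\bU_l\bigl(\sigma(\bW_l\bz_{l,L})-\sigma(\bW_l\bz^\varepsilon_{l,L})\bigr)$, whose absolute value is at most $|\bU_l||\bW_l||\bm{d}_{l,L}|$ entrywise since $\sigma$ is $1$-Lipschitz, plus $\bU_l\bigl(\sigma(\bW_l\bz^\varepsilon_{l,L})-\sigma^\varepsilon(\bW_l\bz^\varepsilon_{l,L})\bigr)$, at most $\varepsilon|\bU_l|\mathbf{1}_m$ entrywise since $|\sigma-\sigma^\varepsilon|\le\varepsilon$, gives the pathwise entrywise recursion
\[
|\bm{d}_{l+1,L}|\ \le\ \Bigl(I+\tfrac1L|\bU_l||\bW_l|\Bigr)|\bm{d}_{l,L}|+\tfrac{\varepsilon}{L}|\bU_l|\mathbf{1}_m,\qquad \bm{d}_{0,L}=0 .
\]

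The key observation is that both the coefficient matrix and the forcing term above are independent of $\bx$. Hence, since these matrices are entrywise nonnegative, an easy induction shows that $|\bm{d}_{L,L}(\bx)|$ is dominated \emph{uniformly in $\bx$} by the $\bx$-free random vector $\bm{g}_L$ obtained from the same recursion with equalities. Unrolling $\bm{g}_L$ produces an $l_1$-path-norm-type quantity (cf.\ Definition~\ref{eq:l1_path}): after the harmless per-neuron rescaling $\bW_{l,j\cdot}\mapsto\bW_{l,j\cdot}/\|\bW_{l,j\cdot}\|_1$, $\bU_{l,\cdot j}\mapsto\|\bW_{l,j\cdot}\|_1\,\bU_{l,\cdot j}$ (which leaves $f_L$ invariant by positive homogeneity of $\sigma$ and is permitted by Lemma~\ref{lm:equal}) the forcing becomes $\tfrac{\varepsilon}{L}|\bU_l||\bW_l|\be$, and then $|\balpha|^T\bm{g}_L=\varepsilon\bigl(\|\Theta\|_P-\|\balpha\|_1\bigr)\le\varepsilon\|\Theta\|_P$. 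Consequently $|f_L(\bx)-f^\varepsilon_L(\bx)|=|\balpha^T\bm{d}_{L,L}(\bx)|\le\varepsilon\|\Theta\|_P$ for \emph{every} $\bx$, and since $\mu$ is a probability measure $\|f_L-f^\varepsilon_L\|^2\le\varepsilon^2\|\Theta\|_P^2$, so $\EE\|f_L-f^\varepsilon_L\|^2\le\varepsilon^2\,\EE\|\Theta\|_P^2$.

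It remains to control the second moment $\EE\|\Theta\|_P^2$. Expanding $\|\Theta\|_P=|\balpha|^T\prod_l(I+\tfrac1L|\bU_l||\bW_l|)\be$ over subsets of layers and using independence of the $(\bU_l,\bW_l)$, one gets $\EE\|\Theta\|_P=|\balpha|^T\prod_l(I+\tfrac1L\EE_{\rho_{l/L}}|\bU||\bW|)\be$, the forward-Euler discretization of $|\balpha|^TN_1(1)$ from \eqref{eq:n_ode}; by the hypothesis on $(\balpha,\{\rho_t\})$ inherited from Theorem~\ref{thm:direct_comp} this is at most $2\|f\|_{\tilde{\cD}_1}$ up to an $O(1/L)$ discretization error (controlled by $\|\{\rho_t\}\|_{\Lip}\le l_0$ and Lemma~\ref{lm:equal}). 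The variance $\EE\|\Theta\|_P^2-(\EE\|\Theta\|_P)^2$ only collects cross terms supported on overlapping layers, and each shared layer carries an extra $1/L$ together with a second moment $\EE_{\rho_t}(|\bU||\bW|)^2$ that is finite \emph{precisely} because $f\in\tilde{\cD}_2$ (the exact analog of needing $\EE_\rho[\phi^2]<\infty$ in the two-layer Monte Carlo bound of Theorem~\ref{thm: barron-directapprox}); hence the variance is $O(1/L)$, so $\EE\|\Theta\|_P^2\le 4m^2\|f\|_{\tilde{\cD}_1}^4$ for $L$ large enough once one tracks constants in parallel with Lemma~\ref{lm:direct_relu1}. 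I expect this last step --- turning the concentration of $\|\Theta\|_P$ into a quantitative second-moment estimate with the stated constant, in exact parallel with Lemma~\ref{lm:direct_relu1} --- to be the main technical obstacle; everything preceding it is the routine discretization of that lemma, with the one genuinely new ingredient being that the per-layer weights are independent of the accumulated discrepancy $\bm{d}_{l,L}$, which is what lets the expectation be passed through.
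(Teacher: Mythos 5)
Your argument is correct and is essentially the paper's own proof: the same layer-wise entrywise recursion for $\bz_{l,L}-\bz^\varepsilon_{l,L}$ obtained from the $1$-Lipschitzness of $\sigma$ and $|\sigma-\sigma^\varepsilon|\le\varepsilon$, unrolled into an $\bx$-independent path-norm-type factor, whose squared expectation is then controlled through the representation $(\balpha,\{\rho_t\})$ and integrated over $\bx$. The only deviation is presentational: where the paper simply invokes Theorem~\ref{thm:lln2} to bound $\EE\bigl(|\balpha|^T\prod_l(I+\tfrac1L|\bU_l||\bW_l|)\be\bigr)^2$, you spell this out via independence, the Euler discretization of $N_1$, and an $O(1/L)$ variance estimate, and your bookkeeping of the constant $4m^2\|f\|_{\tilde{\cD}_1}^4$ (valid for large $L$) is at the same level of looseness as the paper's.
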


\noindent{\bf Proof of Lemma~\ref{lm:direct_relu2}}

Let $\bz_{l,L}$ be defined by~\eqref{eq:resnet} for a fixed $\bx$, and $\bz^\varepsilon_{l,L}$ be defined similarly with $\sigma$ replaced by $\sigma^\varepsilon$. Then, we have $\bz_{0,L}-\bz^\varepsilon_{0,L}=0$, and 
\begin{eqnarray}
\bz_{l+1,L}-\bz^\varepsilon_{l+1,L}&=&\bz_{l,L}-\bz^\varepsilon_{l,L}+\frac{1}{L}\left[ \bU_l\sigma(\bW_l\bz_{l,L})-\bU_l\sigma(\bW_l\bz^\varepsilon_{l,L}) \right] \nonumber \\
  && +\frac{1}{L}\left[ \bU_l\sigma(\bW_l\bz^\varepsilon_{l,L})-\bU_l\sigma^\varepsilon(\bW_l\bz^\varepsilon_{l,L}) \right]. \nonumber
\end{eqnarray}
Taking absolute value gives
\begin{equation}
|\bz_{l+1,L}-\bz^\varepsilon_{l+1,L}|\leq \left(I+\frac{1}{L}|\bU||\bW|\right)|\bz_{l,L}-\bz^\varepsilon_{l,L}|+\frac{2\|f\|_{\tilde{\cD}_2}m\varepsilon}{L}e, \nonumber
\end{equation}
which implies that
\begin{equation}
|\bz_{L,L}-\bz^\varepsilon_{L,L}|\leq 2\|f\|_{\tilde{\cD}_2}m\varepsilon\prod\limits_{l=0}^{L-1}\left(I+\frac{1}{L}|\bU||\bW|\right)e. \nonumber
\end{equation}
By Theorem~\ref{thm:lln2}, we have
\begin{eqnarray}
 \EE|f_L(\bx)-f_L^\varepsilon(\bx)|^2 &\leq& 4\|f\|_{\tilde{\cD}_2}^2m^2\varepsilon^2 \EE \left(|\alpha|^T\prod\limits_{l=0}^{L-1}\left(I+\frac{1}{L}|\bU||\bW|\right)e\right)^2 \nonumber \\
 &\leq& 4m^2\varepsilon^2\|f\|_{\tilde{\cD}_2}^4. \label{eq:lm_direct_relu2_pf1}
\end{eqnarray}
Integrating~\eqref{eq:lm_direct_relu2_pf1} over $\bx$ gives the results.
\qed

\vspace{2mm}
\noindent{\bf Proof of Theorem~\ref{thm:direct_comp} (Continued)}

With Lemmas~\ref{lm:direct_relu1} and~\ref{lm:direct_relu2}, we have
\begin{equation}\label{eq:expect_decomp}
\EE\|f-f_L\|^2\leq 24m^2\varepsilon^2\|f\|_{\tilde{\cD}_2}^4+3\EE\|f^\varepsilon-f^\varepsilon_L\|^2.
\end{equation}
To bound $\EE\|f^\varepsilon-f^\varepsilon_L\|^2$, let $\be_{l,L}=\sqrt{L}(\bz^\varepsilon_{l,L}-\bz^\varepsilon_{t_{l,L}})$, and recall that we can write
\begin{equation}\label{eq:direct_relu_e_decomp}
\be_{l+1,L}=\be_{l,L}+I_{l,L}+J_{l,L}+K_{l,L},
\end{equation}
with
\begin{eqnarray}
I_{l,L} &=& \frac{1}{\sqrt{L}}\left[ \bU_l\sigma^\varepsilon(\bW_l\bz^\varepsilon_{l,L})-\bU_l\sigma^\varepsilon(\bW_l\bz^\varepsilon(t_{l,L})) \right], \nonumber \\
J_{l,L} &=& \frac{1}{\sqrt{L}}\left[ \bU_l\sigma^\varepsilon(\bW_l\bz^\varepsilon(t_{l,L}))-\EE_{\rho_{t_{l,L}}}\bU\sigma^\varepsilon(\bW\bz^\varepsilon(t_{l,L})) \right]\nonumber \\
K_{l,L} &=& \frac{1}{\sqrt{L}}\left[ \EE_{\rho_{t_{l,L}}}\bU\sigma^\varepsilon(\bW\bz^\varepsilon(t_{l,L}))-L\int_{t_{l,L}}^{t_{l+1,L}}\EE_{\rho_{t}}\bU\sigma^\varepsilon(\bW\bz^\varepsilon(t))dt  \right]. \nonumber
\end{eqnarray}

For $I_{l,L}$, by the Taylor expansion of $\bU_l\sigma^\varepsilon(\bW_l\bz^\varepsilon_{l,L})$ at $\bz^\varepsilon(t_{l,L})$, we get
\begin{eqnarray}
I_{l,L}&=&\frac{1}{L}\bU_l(\sigma^\varepsilon(\bW_l\bz^\varepsilon(t_{l,L})))'\bW_l\be_{l,L}+\frac{\bU_l(\sigma^\varepsilon(\bW_l\bz^\varepsilon(t_{l,L})))''{ (\bW_l\be_{l,L})\circ(\bW_l\be_{l,L})}}{L\sqrt{L}}, \label{eq:direct_relu_pf2}
\end{eqnarray}
{ where for two vectors $\alpha$ and $\beta$, $\alpha\circ\beta$ means element-wise product.} For the second term on the right hand side of~\eqref{eq:direct_relu_pf2}, we have
\begin{equation}
\left|\bU_l(\sigma^\varepsilon(\bW_l\bz^\varepsilon(t_{l,L})))''(\bW_l\be_{l,L})\circ(\bW_l\be_{l,L})\right|\leq\frac{8\|f\|_{\tilde{\cD}_2}^3mD\|\be_{l,L}\|^2}{\varepsilon}e. \nonumber
\end{equation}
On the other hand, for $K_{l,L}$ we have
\begin{equation}
|K_{l,L}|\leq\frac{C_2}{L\sqrt{L}}e, \nonumber
\end{equation}
for some constant $C_2$. Hence, we can write~\eqref{eq:direct_relu_e_decomp} as
\begin{equation}\label{eq:direct_relu_pf3}
\be_{l+1,L}=\be_{l,L}+\frac{1}{L}\bU_l(\sigma^\varepsilon(\bW_l\bz^\varepsilon(t_{l,L})))'\bW_l\be_{l,L}+J_{l,L}+\frac{r_{l,L}}{L\sqrt{L}},
\end{equation}
with 
\begin{equation}
|r_{l,L}|\leq (8\|f\|_{\tilde{\cD}_2}^3mD\varepsilon^{-1}\|\be_{l,L}\|^2+C_2)e. \nonumber
\end{equation}

Next, we consider $\be_{l,L}\be_{l,L}^T$. By~\eqref{eq:direct_relu_pf3}, we have
\begin{eqnarray}
\be_{l+1,L}\be_{l+1,L}^T &=& \be_{l,L}\be_{l,L}^T+\frac{1}{L}\left(\bU_l(\sigma^\varepsilon(\bW_l\bz^\varepsilon(t_{l,L})))'\bW_l\be_{l,L}\be_{l,L}^T+\be_{l,L}\be_{l,L}^T \bU_l(\sigma^\varepsilon(\bW_l\bz^\varepsilon(t_{l,L})))'\bW_l\right) \nonumber \\
  && +J_{l,L}J_{l,L}^T+\frac{1}{L^2}\bU_l(\sigma^\varepsilon(\bW_l\bz^\varepsilon(t_{l,L})))'\bW_l\be_{l,L}(\bU_l(\sigma^\varepsilon(\bW_l\bz^\varepsilon(t_{l,L})))'\bW_l\be_{l,L})^T \nonumber \\
  && + \be_{l,L}J_{l,L}^T+J_{l,L}\be_{l,L}^T+\frac{1}{L\sqrt{L}}\left(\be_{l,L}r_{l,L}^T+r_{l,L}\be_{l,L}\right)+\frac{1}{L^3}r_{l,L}r_{l,L}^T \nonumber \\
  && +\frac{1}{L}\bU_l(\sigma^\varepsilon(\bW_l\bz^\varepsilon(t_{l,L})))'\bW_l\be_{l,L}J_{l,L}^T+\frac{1}{L}J_{l,L}(\bU_l(\sigma^\varepsilon(\bW_l\bz^\varepsilon(t_{l,L})))'\bW_l\be_{l,L})^T \nonumber \\
  && +\frac{1}{L^2\sqrt{L}}\bU_l(\sigma^\varepsilon(\bW_l\bz^\varepsilon(t_{l,L})))'\bW_l\be_{l,L}r_{l,L}^T+\frac{1}{L^2\sqrt{L}}r^{l,L}(\bU_l(\sigma^\varepsilon(\bW_l\bz^\varepsilon(t_{l,L})))'\bW_l\be_{l,L})^T \nonumber \\
  && +\frac{1}{L\sqrt{L}}\left(J_{l,L}r_{l,L}^T+r_{l,L}J_{l,L}^T\right). \nonumber
\end{eqnarray}
Taking expectation over the equation above, noting that $J_{l,L}$ is independent with $\be_{l,L}$, and using the bound of $r_{l,L}$ we derived above, we get
\begin{eqnarray}
|\EE\be_{l+1,L}\be_{l+1,L}^T| &\leq& |\EE\be_{l,L}\be_{l,L}^T|+\frac{1}{L}\left(A_{l,L}|\EE\be_{l,L}\be_{l,L}^T|+|\EE\be_{l,L}\be_{l,L}^T|A_{l,L}^T\right)+\frac{1}{L}\Sigma_{l,L} \nonumber \\
  && + \frac{C\|f\|_{\tilde{\cD}_2}^3}{L}\left(\frac{mD\EE\|\be_{l,L}\|^3}{\sqrt{L}\varepsilon}+\frac{m^2D^2\EE\|\be_{l,L}\|^4}{L^2\varepsilon^2}\right)E, \label{eq:direct_relu_pf4}
\end{eqnarray}
where
$$A_{l,L}=\EE_{\rho_{t_{l,L}}}|\bU||\bW|,\ \ \Sigma_{l,L}=\left|\textrm{Cov}_{\rho_{t_{l,L}}}\bU_l\sigma^\varepsilon(\bW_l\bz^\varepsilon(t_{l,L}))\right|,$$
$E$ is an all-one matrix and $C$ is a constant. 

Next, we bound the third and fourth order moments of $\|\be_{l,L}\|$ using its second order moment. This is done by the following lemma.

\begin{lemma}\label{lm:direct_relu3}
For any $L$ and $1\leq l\leq L$, there exists a constant $C$ such that
\begin{equation}
\EE\|\be_{l,L}\|^3 \leq CmD^{3/2}\|f\|_{\tilde{\cD}_2}\left(\sqrt{\log L}+\frac{D}{\sqrt{L}\varepsilon}\right)\EE\|\be_{l,L}\|^2, \nonumber
\end{equation}
and
\begin{equation}
\EE\|\be_{l,L}\|^4\leq C^2m^2D^{3}\|f\|_{\tilde{\cD}_2}^2\left(\sqrt{\log L}+\frac{D}{\sqrt{L}\varepsilon}\right)^2\EE\|\be_{l,L}\|^2. \nonumber
\end{equation}
\end{lemma}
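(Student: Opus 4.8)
The plan is to prove a pointwise-in-$l$ bound $\|\be_{l,L}\|\le M_L:=CmD^{3/2}l_0\bigl(\sqrt{\log L}+D/(\sqrt L\,\varepsilon)\bigr)$ on an event $E_L$ of probability at least $1-L^{-4}$, and then deduce the two moment estimates from it. On $E_L^c$ I use the crude deterministic bound $\|\be_{l,L}\|\le 2C_0\sqrt{LD}$, valid because the weights have compact support (Lemma~\ref{lm:equal}) so both $\bz_{l,L}^\varepsilon$ and $\bz^\varepsilon(t)$ are uniformly bounded by some $C_0$. Then
\[
\EE\|\be_{l,L}\|^3\le M_L\,\EE\|\be_{l,L}\|^2+(2C_0\sqrt{LD})^3L^{-4},\qquad
\EE\|\be_{l,L}\|^4\le M_L^2\,\EE\|\be_{l,L}\|^2+(2C_0\sqrt{LD})^4L^{-4},
\]
and the residual terms, which are $O(L^{-5/2})$ and $O(L^{-2})$, are of strictly lower order than the main terms for $L\ge L_0$: indeed $\EE\|\be_{l,L}\|^2$ either vanishes identically --- in which case $\be_{l,L}\equiv 0$ and there is nothing to prove --- or is at least of order $1/L$. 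This reduces the lemma to the high-probability bound.

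To get that bound I unroll the linear recursion~\eqref{eq:direct_relu_pf3} from $\be_{0,L}=0$, writing
\[
\be_{l,L}=\sum_{j=0}^{l-1}\Phi_{l,j}J_{j,L}+\frac{1}{L\sqrt L}\sum_{j=0}^{l-1}\Phi_{l,j}r_{j,L},\qquad
\Phi_{l,j}:=\prod_{k=j+1}^{l-1}\Bigl(I+\tfrac1LG_k\Bigr),\ \ G_k:=\bU_k\bigl(\sigma^\varepsilon(\bW_k\bz^\varepsilon(t_{k,L}))\bigr)'\bW_k.
\]
Since $(\sigma^\varepsilon)'\in[0,1]$ we have $|G_k|\le|\bU_k||\bW_k|$ entrywise, and using $\||\bU_k||\bW_k|\|_{1}\le l_0$ from Lemma~\ref{lm:equal} the propagators $\Phi_{l,j}$ have $\ell_\infty$ operator norm at most $e^{l_0}$, uniformly in $l,j,L$. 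The decisive structural point is that $J_{j,L}$ depends only on the \emph{deterministic} reference trajectory $\bz^\varepsilon(t_{j,L})$ and the local weights $(\bU_j,\bW_j)$ --- never on the error $\be_{j,L}$ --- so, with respect to the filtration generated by $(\bU_k,\bW_k)_{k\le j}$, the partial sums $M_j:=\sum_{k<j}J_{k,L}$ form a genuine vector martingale whose coordinate-wise increments are bounded a.s.\ by $Cml_0/\sqrt L$. An Abel summation gives $\sum_{j<l}\Phi_{l,j}J_{j,L}=M_l+\tfrac1L\sum_{j<l}\Phi_{l,j}G_jM_j$, whose norm is at most $(1+l_0e^{l_0})\max_{j\le l}\|M_j\|$; applying Azuma's inequality to each of the $D$ coordinates of $\{M_j\}_{j\le L}$ and a union bound over coordinates shows $\max_{l\le L}\bigl\|\sum_{j<l}\Phi_{l,j}J_{j,L}\bigr\|\le A_L:=CmD^{1/2}l_0\sqrt{\log L}$ with probability at least $1-L^{-4}$ (the variance proxy is $O(1)$, not $O(L)$, precisely because the $J_{j,L}$ are $\be$-free and hence a.s.\ bounded).

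It remains to absorb the remainder sum on this event. Using the bound $|r_{j,L}|\le l_0^3mD\varepsilon^{-1}\|\be_{j,L}\|^2+C_2$ (entrywise) recorded just before the lemma, and writing $m_l:=\max_{j\le l}\|\be_{j,L}\|$, we get on $E_L$
\[
\|\be_{l,L}\|\le A_L+\frac{e^{l_0}\sqrt D}{\sqrt L}\Bigl(l_0^3mD\varepsilon^{-1}m_{l-1}^2+C_2\Bigr),
\]
which I would close by induction on $l$: with $a:=A_L+e^{l_0}C_2\sqrt D/\sqrt L$ and $b:=e^{l_0}l_0^3mD^{3/2}/(\sqrt L\,\varepsilon)$, the hypothesis $m_{l-1}\le 2a$ makes the right-hand side $\le a+4a^2b$, which is $\le 2a$ as soon as $4ab\le1$. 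Since $a=\Theta(\sqrt{\log L})$ and $b=\Theta(1/(\sqrt L\,\varepsilon))$ up to the fixed factors of $D,m,l_0$, the condition $4ab\le1$ is $\sqrt L\,\varepsilon\gtrsim\sqrt{\log L}$ up to such factors, which holds for every $L\ge L_0$ in the setting of Theorem~\ref{thm:direct_comp} (where $\varepsilon$ is chosen so that $\sqrt L\,\varepsilon$ outgrows every power of $\log L$). Hence $\|\be_{l,L}\|\le 2a\le M_L$ for all $l\le L$ on $E_L$, as required.

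I expect the martingale-concentration step to be the main obstacle. The subtlety is that the propagator $\Phi_{l,j}$ reaches \emph{forward} in the layer index relative to $J_{j,L}$, so $\sum_{j<l}\Phi_{l,j}J_{j,L}$ is not itself a martingale; this is exactly why one must pass, via the Abel summation, to the honest martingale $\{M_j\}$, and why one must insist that $J_{j,L}$ be $\be$-free (so that its increments are genuinely bounded and the variance proxy stays $O(1)$, producing $\sqrt{\log L}$ rather than $\sqrt L$). The remaining pieces --- the $\ell_\infty$ operator-norm bound on $\Phi_{l,j}$, the quadratic-inequality bootstrap for the remainder, and the passage from the tail estimate to the moments --- are routine, with the powers of $m$, $D$, $l_0$ (and any $e^{l_0}$-type factors) tracked through so as to be absorbed into the constant in the statement.
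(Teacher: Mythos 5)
Your overall strategy is essentially the paper's: concentrate the sum $S_{l,L}=\sum_{k<l}J_{k,L}$ (the paper uses Hoeffding, exploiting exactly the point you stress, that the $J_{k,L}$ are $\be$-free and hence independent and bounded by $O(mD/\sqrt{L})$ per coordinate; your Azuma-plus-Abel-summation phrasing is the same estimate), take a union bound over coordinates and layers to get a $\sqrt{\log L}$ bound on a good event of probability $1-o(1)$, use a crude $O(\sqrt{LD})$ bound off the event, and propagate through the linear recursion \eqref{eq:direct_relu_pf3}. Your explicit $\Phi_{l,j}$-propagator bound and the quadratic bootstrap ($m_l\le a+b\,m_{l-1}^2$, closed when $4ab\le 1$) are in fact a more careful rendering of the paper's one-line ``using the bounds of $S_{l,L}$ and $r_{k,L}$, we obtain\dots'', and the condition $4ab\le1$ does hold in the regime where the lemma is used ($\varepsilon=L^{-1/2+\delta/3}$, $L\ge L_0$), so that part is sound.

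The genuine gap is the last step, converting the tail bound into the moment bounds of the lemma. Your inequality $\EE\|\be_{l,L}\|^3\le M_L\,\EE\|\be_{l,L}\|^2+(2C_0\sqrt{LD})^3L^{-4}$ is fine, but to remove the additive residual you assert that $\EE\|\be_{l,L}\|^2$ ``either vanishes identically or is at least of order $1/L$'', and you give no argument for this. Nothing established so far implies it: for a fixed representation $(\balpha,\{\rho_t\})$ whose randomness is very small, or whose limiting dynamics is almost exactly reproduced by the Euler steps, $\EE\|\be_{l,L}\|^2$ can be strictly positive yet far below $1/L$, and then $M_L\,\EE\|\be_{l,L}\|^2$ does not dominate the $O(L^{-5/2})$ residual. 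So, as written, your argument proves an additive-error variant of the lemma rather than the purely multiplicative statement. The paper avoids ever creating an additive term: off the good event $\cA$ it retains the crude pointwise bound $CmD^{3/2}l_0(\sqrt{L}+1/(\sqrt{L}\varepsilon))$ and keeps everything multiplied by $\EE\|\be_{l,L}\|^2$, weighting the on- and off-event sup bounds by $\PP(\cA)$ and $\PP(\cA^c)\le 2D/L$, so the bad-event contribution ($\sqrt{L}\cdot D/L$) is absorbed into the $D/(\sqrt{L}\varepsilon)$ term (the paper's own treatment of this step is admittedly brisk, but it never needs a lower bound on $\EE\|\be_{l,L}\|^2$). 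To close your version you must either actually prove such a lower bound (which cannot hold in the stated generality), or restructure the final step along the paper's lines, or settle for the additive form --- which, incidentally, would still be adequate for the way the lemma is consumed in \eqref{eq:direct_relu_pf4}.
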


\noindent{\bf Proof of Lemma~\ref{lm:direct_relu3}}

Let $S_{l,L}=\sum\limits_{k=0}^{l-1}J_{k,L}$. Then, $\EE S_{l,L}=0$. Since $J_{l,L}$ are independent for different $l$, and
\begin{equation}
|J_{l,L}|\leq\frac{C'mD}{\sqrt{L}}e \nonumber
\end{equation}
holds for all $l$ and some constant $C'$, by Hoeffding's inequality, for any $t>0$ and $1\leq i\leq D$ we have
\begin{equation}
\PP(|S_{l,L,i}|\geq t)\leq2\exp(-\frac{t^2}{2C'^2m^2D^2}). \nonumber
\end{equation}
Here, $S_{l,L,i}$ denotes the $i$-th entry of the vector $S_{l,L}$. Taking $t=2C'mD\sqrt{\log L}$, we obtain
\begin{equation}
\PP\left(|S_{l,L,i}|\geq 2C'mD\sqrt{\log L}\right)\leq\frac{2}{L^2}. \nonumber
\end{equation}
This implies
\begin{eqnarray}
\PP\left(\|S_{l,L}\|\geq 2C'mD^{3/2}\sqrt{\log L}\right) &=& 1-\PP\left(\|S_{l,L}\|< 2C'mD^{3/2}\sqrt{\log L}\right) \nonumber \\
  &\leq & 1-\PP\left(\bigcup_{i}\left\{|S_{l,L,i}|< 2C'mD\sqrt{\log L}\right\}\right) \nonumber \\
  &\leq & 1-\left(1-\frac{2}{L^2}\right)^D \nonumber \\
  &\leq & \frac{2D}{L^2}. \label{eq:lm_direct_relu3_pf1}
\end{eqnarray}
Define the event $\cA$ by
\begin{equation}
\cA=\left\{ \|S_{l,L}\|\leq 2C'mD^{3/2}\sqrt{\log L},\ i=1,2,\cdots,L \right\}. \nonumber
\end{equation}
Then by~\eqref{eq:lm_direct_relu3_pf1} we have
\begin{equation}
\PP\left( \cA \right)\geq 1-\frac{2D}{L}. \nonumber
\end{equation}

Using~\eqref{eq:direct_relu_pf3}, we have
\begin{equation}
\be_{l,L}= \sum\limits_{k=0}^{l-1}\frac{1}{L}\bU_k(\sigma^\varepsilon(\bW_k\bz^\varepsilon(t_{k,L})))'\bW_k\be_{k,L}+S_{l,L}+\sum\limits_{k=0}^{l-1}\frac{r_{k,L}}{L\sqrt{L}}. \nonumber
\end{equation}
Hence, using the bounds of $S_{l,L}$ and $r_{k,L}$, we obtain that there is a constant $C$ such that
\begin{equation}
    \|\be_{l,L}\|\leq CmD^{3/2}\|f\|_{\tilde{\cD}_2}\left(\sqrt{L}+\frac{1}{\sqrt{L}\varepsilon}\right). \nonumber
\end{equation}
On the other hand, under event $\cA$, using the sharper bound of $S_{l,L}$, we have
\begin{equation}
    \|\be_{l,L}\|\leq CmD^{3/2}\|f\|_{\tilde{\cD}_2}\left(\sqrt{\log L}+\frac{1}{\sqrt{L}\varepsilon}\right). \nonumber
\end{equation}
For third-order moment of $\|\be_{l,L}\|$, we have
\begin{eqnarray}
\EE\|\be_{l,L}\|^3 &\leq& CmD^{3/2}\|f\|_{\tilde{\cD}_2}\left( \left(\sqrt{\log L}+\frac{1}{\sqrt{L}\varepsilon}\right)\PP(\cA) + \left(\sqrt{L}+\frac{1}{\sqrt{L}\varepsilon}\right)\PP(\cA^c) \right)\EE\|\be_{l,L}\|^2 \nonumber \\
  &\leq& CmD^{3/2}\|f\|_{\tilde{\cD}_2}\left(\sqrt{\log L}+\frac{D}{\sqrt{L}\varepsilon}\right)\EE\|\be_{l,L}\|^2. \nonumber
\end{eqnarray}
Similarly, for fourth order moment we have
\begin{equation}
\EE\|\be_{l,L}\|^4\leq C^2m^2D^{3}\|f\|_{\tilde{\cD}_2}^2\left(\sqrt{\log L}+\frac{D}{\sqrt{L}\varepsilon}\right)^2\EE\|\be_{l,L}\|^2. \nonumber
\end{equation}
\qed

\noindent{\bf Proof of Theorem~\ref{thm:direct_comp} (Continued)}

Applying the results of Lemma~\ref{lm:direct_relu3} to~\eqref{eq:direct_relu_pf4} gives
\begin{eqnarray}
|\EE\be_{l+1,L}\be_{l+1,L}^T| &\leq& |\EE\be_{l,L}\be_{l,L}^T|+\frac{1}{L}\left(A_{l,L}|\EE\be_{l,L}\be_{l,L}^T|+|\EE\be_{l,L}\be_{l,L}^T|A_{l,L}^T\right)+\frac{1}{L}\Sigma_{l,L} \nonumber \\
  && + \frac{C}{L}\left(m^4D^5\|f\|_{\tilde{\cD}_2}^5\left(\frac{\sqrt{\log L}}{\sqrt{L}\varepsilon}+\frac{D}{L\varepsilon^2}\right)\EE\|\be_{l,L}\|^2\right)E. \label{eq:direct_relu_pf5}
\end{eqnarray}
Since $\|f\|_{\tilde{\cD}_2}<\infty$, $\Sigma_{l,L}$ is uniformly bounded. Without loss of generality, we can assume $\Sigma_{l,L}\leq CE$.
Furthermore, assume $L$ is sufficiently large such that 
\begin{equation}\label{eq:direct_relu_pf6}
\frac{m^4D^6\|f\|_{\tilde{\cD}_2}^5\EE\|\be_{l,L}\|^2}{L^{\delta/3}}\leq1.
\end{equation}
Then, from~\eqref{eq:direct_relu_pf5} we have
\begin{eqnarray}
|\EE\be_{l+1,L}\be_{l+1,L}^T| &\leq& |\EE\be_{l,L}\be_{l,L}^T|+\frac{1}{L}\left(A_{l,L}|\EE\be_{l,L}\be_{l,L}^T|+|\EE\be_{l,L}\be_{l,L}^T|A_{l,L}^T\right) \nonumber \\
  && + \frac{C}{L}\left(1+\frac{\sqrt{\log L}}{L^{1/2-\delta/3}\varepsilon}+\frac{D}{L^{1-\delta/3}\varepsilon^2}\right)E, \nonumber
\end{eqnarray}
which implies that 
\begin{equation}\label{eq:direct_relu_pf7}
|\EE\be_{l+1,L}\be_{l+1,L}^T| \leq C\left(1+\frac{\sqrt{\log L}}{L^{1/2-\delta/3}\varepsilon}+\frac{D}{L^{1-\delta/3}\varepsilon^2}\right) N_1(1) N_1(1)^T,
\end{equation}
and thus
\begin{equation}\label{eq:direct_relu_pf8}
\EE\|\be_{l,L}\|^2\leq e^T|\EE\be_{l+1,L}\be_{l+1,L}^T|e\leq C\left(1+\frac{\sqrt{\log L}}{L^{1/2-\delta/3}\varepsilon}+\frac{D}{L^{1-\delta/3}\varepsilon^2}\right) (e^TN_1(1))^2.
\end{equation}
Note that $e^TN_1(1)=\|N_1(1)\|_1\leq\|f\|_{\tilde{\cD}_2}+D$. By~\eqref{eq:direct_relu_pf6} and~\eqref{eq:direct_relu_pf8}, \eqref{eq:direct_relu_pf7} happens if 
\begin{equation}
Cm^4D^6\|f\|_{\tilde{\cD}_2}^5\left(1+\frac{\sqrt{\log L}}{L^{1/2-\delta/3}\varepsilon}+\frac{D}{L^{1-\delta/3}\varepsilon^2}\right) (\|f\|_{\tilde{\cD}_2}+D)^2\leq L^{\delta/3}. \nonumber
\end{equation} 
Taking $\varepsilon=L^{-1/2+\delta/3}$, it suffices to have
\begin{equation}
Cm^4D^6\|f\|_{\tilde{\cD}_2}^5\left(1+D+\sqrt{\log L}\right) (\|f\|_{\tilde{\cD}_2}+D)^2\leq L^{\delta/3}. \nonumber
\end{equation} 

In this case, we have
\begin{equation}
\EE \|f^\varepsilon-f_L^\varepsilon\|^2\leq \frac{C}{L}\left(1+D+\sqrt{\log L}\right)\|f\|_{\tilde{\cD}_2}^2. \nonumber
\end{equation}
Plugging into~\eqref{eq:expect_decomp} gives
\begin{equation}
\EE\|f-f_L\|^2\leq \frac{24m^2}{L^{1-2\delta/3}}\|f\|_{\tilde{\cD}_2}^4+\frac{3C}{L}\left(1+D+\sqrt{\log L}\right)\|f\|_{\tilde{\cD}_2}^2. \nonumber
\end{equation}
When $L$ sufficiently large (larger than polynomial of $m$, $D$, $\log L$), we have
\begin{equation}
\EE\|f-f_L\|^2\leq \frac{\|f\|_{\tilde{\cD}_2}^2}{3L^{1-\delta}}. \nonumber
\end{equation}

Note that the bound above holds for any fixed $\bx\in \bX$. Now, integrating over $\bx$, we have
\begin{equation}
\EE \|f-f_L\|^2=\int \EE|f(\bx)-f_L(\bx)|^2d\mu(\bx)\leq\frac{\|f\|_{\tilde{\cD}_2}^2}{3L^{1-\delta}}. \nonumber
\end{equation}
By Markov's inequality, with probability no less than $\frac{2}{3}$, the distance between $f$ and $f_L$ can be controlled by
\begin{equation}\label{eq:approx_pf1}
\|f-f_L\|^2\leq\frac{\|f\|_{\tilde{\cD}_2}^2}{L^{1-\delta}}.
\end{equation}

Next, consider the path norm of $f_L$, which is defined as 
\begin{equation}
\|f_L\|_{P}=\left\||\alpha|\prod\limits_{l=1}^L\left(I+\frac{1}{L}|\bU_l||\bW_l|\right)|\bV|\right\|_1. \nonumber
\end{equation}
Define a recurrent scheme,
\begin{eqnarray}
\by_{0,L} &=& \bV, \nonumber \\
\by_{l+1,L} &=& \by_{l,L}+\frac{1}{L}|\bU_l||\bW_l|\by_{l,L}. \nonumber
\end{eqnarray}
Using Theorem~\ref{thm:lln2} with $\sigma$ being the identity function and $\bU$ and $\bW$ replaced by $|\bU|$ and $|\bW|$
 respectively, we know that $\||\alpha|^T\by_{L,L}\|_1\rightarrow\|f\|_{\cD_1(\rho_t)}$ almost surely. Hence, by taking $\rho_t$ such that $\|f\|_{\cD_1(\rho_t)}\leq2\|f\|_{\cD_1}$, we have
\begin{equation}
\EE \|f_L\|_P\leq3\|f\|_{\cD_1}, \nonumber
\end{equation}
when $L$ is sufficiently large. Again using Markov's inequality, with probability no less than $\frac{2}{3}$, we have 
\begin{equation}\label{eq:approx_pf2}
\EE \|f_L\|_P\leq9\|f\|_{\cD_1}.
\end{equation}

Combining the result above with~\eqref{eq:approx_pf1}, we know that with probability no less than $\frac{1}{3}$, we have both~\eqref{eq:approx_pf1} and~\eqref{eq:approx_pf2}. Therefore, we can find an $f_L$ that satisfies both~\eqref{eq:approx_pf1} and~\eqref{eq:approx_pf2}.  This completes the proof.
\qed
\ \\

\subsubsection{ Proof of Theorem~\ref{thm:inverse}}

For any $L$, let $f_L(\cdot)$ be the residual network represented by the parameters $\balpha_L$, $\{\bU^L_l, \bW^L_l\}_{l=0}^{L-1}$ and $\bV$. Let $\bz_{l,L}(\bx)$ be the function represented by the $l$-th layer of network $f_L$, then $f_L(\bx)=\balpha_L^T\bz_{L,L}(\bx)$  for all $\bx \in \bX$.
Since $\balpha_L$ uniformly bounded for all $L$, there exists a subsequence $L_k$ and $\balpha$ such that 
\begin{equation}
  \balpha_{L_k}\rightarrow\balpha, \nonumber
\end{equation}
when $k\rightarrow\infty$. Without loss of generality, we assume $\balpha_L\rightarrow\balpha$. 

Let $\bU_t^L: [0,1]\rightarrow\bR^{D\times m}$ 
be a piecewise constant function defined by
\begin{equation}
    \bU_t^L = \bU_l^L,\ for\ t\in[\frac{l}{L},\frac{l+1}{L}), \nonumber
\end{equation}
and $\bU_1^L=\bU_{L-1}^L$. Similarly we can define $\bW_t^L$. Then, $\{\bU_t^L\}$ and $\{\bW_t^L\}$ are uniformly bounded. Hence, by the fundamental theorem for Young measures \cite{young2000lecture, ball1989version}, there exists a subsequence $\{L_k\}$ and a family of probability measure $\{\rho_t, t \in [0, 1]\}$, such that for every Caratheodory function $F$, 
\begin{equation}
    \lim_{k\rightarrow\infty} \int_0^1 F(\bU^{L_k}_t,\bW^{L_k}_t,t)dt = \int_0^1 \EE_{\rho_t} F(\bU,\bW,t)dt. \nonumber
\end{equation}
Let $\tilde{f}=f_{\balpha,\{\rho_t\}}$.  We are going to show $\tilde{f}=f$.
Let $\bz_Y(\cdot,t)$ be defined by $\bz_Y(\bx,0)=\bV\bx$ and
\begin{equation}
    \bz_Y(\bx,t)=\bz_Y(\bx,0)+\int_0^t \EE_{\rho_t} U\sigma(W\bz_Y(\bx,s))ds. \nonumber
\end{equation}
Then it suffices to show that 
\begin{equation}\label{eq:conv_z}
\lim_{k\rightarrow\infty} \bz_{L_k,L_k}(\bx)\rightarrow\bz_Y(\bx,1),
\end{equation}
for any fixed $\bx \in D_0$.

To prove~\eqref{eq:conv_z}, we first consider the following continuous version of $\bz_{l,L}$,
\begin{eqnarray}
\bz_L(\bx,0) &=& \bz_{0,L}(\bx), \nonumber \\
\frac{d}{dt}\bz_L(\bx,t) &=& \bU_t^L\sigma(\bW_t^L\bz_L(\bx,t)), \nonumber
\end{eqnarray}
and show that $|\bz_L(\bx,1)-\bz_{L,L}(\bx)|\rightarrow0$. To see this, note that
\begin{eqnarray}
\bz_L(\bx,t_{l+1,L}) &=& \bz_L(\bx,t_{l,L})+\int_{t_{l,L}}^{t_{l+1,L}} \bU_t^L\sigma(\bW_t^L\bz_L(\bx,s))ds, \label{eq:z_cont} \\
\bz_{l+1,L}(\bx) &=& \bz_{l,L}(\bx)+\int_{t_{l,L}}^{t_{l+1,L}} \bU_t^L\sigma(\bW_t^L\bz_{l,L}(\bx))ds. \label{eq:z_dis}
\end{eqnarray}
Subtracting~\eqref{eq:z_cont} from~\eqref{eq:z_dis}, and let $\be_{l,L}=\bz_{l,L}(\bx)-\bz_L(\bx,t_{l,L})$, we have
\begin{eqnarray}
\be_{l+1,L} &=& \be_{l,L}+\int_{t_{l,L}}^{t_{l+1,L}} \left(\bU_t^L\sigma(\bW_t^L\bz_{l,L}(\bx))-\bU_t^L\sigma(\bW_t^L\bz_L(\bx,s))\right)ds \nonumber \\
  &=& \be_{l,L}+\int_{t_{l,L}}^{t_{l+1,L}} \left(\bU_t^L\sigma(\bW_t^L\bz_{l,L}(\bx))-\bU_t^L\sigma(\bW_t^L\bz_L(\bx,t_{l,L}))\right)ds \nonumber \\
  && + \int_{t_{l,L}}^{t_{l+1,L}} \left(\bU_t^L\sigma(\bW_t^L\bz_L(\bx,t_{l,L}))-\bU_t^L\sigma(\bW_t^L\bz_L(\bx,s))\right)ds. \label{eq:young_e}
\end{eqnarray}
Since $\{\bU_t^L\}$ and $\{\bW_t^L\}$ are bounded, we know that $\{\bz_L(\bx,t)\}$ is bounded, and $\{\frac{d}{dt}\bz_L(\bx,t)\}$ is also bounded. Hence, there exists a uniform constant $C$ such that
\begin{eqnarray}
  \left\|\bU_t^L\sigma(\bW_t^L\bz_{l,L}(\bx))-\bU_t^L\sigma(\bW_t^L\bz_L(\bx,t_{l,L}))\right\| &\leq& C\|e_{l,L}\|, \label{eq:young_est1} \\
  \left\|\bU_t^L\sigma(\bW_t^L\bz_L(\bx,t_{l,L}))-\bU_t^L\sigma(\bW_t^L\bz_L(\bx,s))\right\| &\leq& C|s-t_{l,L}|. \label{eq:young_est2}.
\end{eqnarray}
Plugging~\eqref{eq:young_est1} and~\eqref{eq:young_est2} into~\eqref{eq:young_e}, we obtain
\begin{equation}
\|\be_{l+1,L}\|\leq \left(1+\frac{C}{L}\right)\|\be_{l,L}\|+\frac{C}{L^2}. \nonumber
\end{equation}
Therefore, by Gronwall's inequality, $\|\be_{L,L}\|\leq\cO(1/L)$, which gives
\begin{equation}\label{eq:conv_cont_dis}
|\bz_L(\bx,1)-\bz_{L,L}(\bx)|\rightarrow0.
\end{equation}

Now with~\eqref{eq:conv_cont_dis}, we only need to show 
\begin{equation}\label{eq:conv_z2}
\lim_{k\rightarrow\infty} \bz_{L_k}(\bx,1)\rightarrow\bz_Y(\bx,1), \nonumber
\end{equation}
which is equivalent to showing that for any $\epsilon$, there exists $K>0$ such that for any $k>K$, we have
\begin{equation}
    \|\bz_{L_k}(\bx,1)-\bz_Y(\bx,1)\|\leq\epsilon. \nonumber
\end{equation}

For a large integer $N$, let $t_{i,N}=i/N$. By the definition of $\bz_Y$ and $\bz_{L_k}$, we have
\begin{equation}
\bz_{L_k}(\bx,t_{i+1,N})=\bz_{L_k}(\bx,t_{i,N})+\int_{t_{i,N}}^{t_{i+1,N}}\bU_s^{L_k}\sigma(\bW_s^{L_k}\bz_{L_k}(\bx,s))ds, \nonumber
\end{equation}
and 
\begin{equation}
\bz_Y(\bx,t_{i+1,N})=\bz_Y(\bx,t_{i,N})+\int_{t_{i,N}}^{t_{i+1,N}} \EE_{\rho_t} \bU\sigma(\bW \bz_Y(\bx,s))ds. \nonumber
\end{equation}
Let $r_{i,N}(\bx)=\bz_Y(\bx,t_{i,N})-\bz_{L_k}(\bx,t_{i,N})$, and note that $\{\bU_t^{L_k}\}$ and $\{\bW_t^{L_k}\}$ are bounded, we have
\begin{eqnarray}
\|r_{i+1,N}(\bx)\| &\leq& \left(1+\frac{C}{N}\right)\|r_{i,N}\|+\frac{C}{N^2} \nonumber \\
  && +\left\|\int_{t_{i,N}}^{t_{i+1,N}}\left[  \bU_s^{L_k}\sigma(\bW_s^{L_k}\bz_Y(\bx,s))-\EE_{\rho_t} \bU\sigma(\bW \bz_Y(\bx,s))\right]ds \right\|, \nonumber
\end{eqnarray}
for some constant $C$.
Using the theorem for  Young measures \cite{young2000lecture, ball1989version}, there exists a sufficiently large $K$, such that for all $k>K$, we have
\begin{equation}
\left\|\int_{t_{i,N}}^{t_{i+1,N}}\left[  \bU_s^{L_k}\sigma(\bW_s^{L_k}\bz_Y(\bx,s))-\EE_{\rho_t} \bU\sigma(\bW \bz_Y(\bx,s))\right]ds \right\| \leq \frac{1}{N^2}, \nonumber
\end{equation}
for all $0\leq i\leq N-1$.  By Gronwall's inequality, there exists a constant $\tilde{C}$ such that
\begin{equation}
\|r_{N,N}(\bx)\|\leq \frac{\tilde{C}}{N}. \nonumber
\end{equation}
If we take $N=\epsilon/\tilde{C}$, we have
\begin{equation}
    \|\bz_{L_k}(\bx,1)-\bz_Y(\bx,1)\|\leq\epsilon, \nonumber
\end{equation}
for sufficiently large $k$. This shows that $f=f_{\balpha,\{\rho_t\}}$. 

To bound the $\cD_\infty$ norm of $f$, take $F$ as the indicator function of $\{|\bU|\leq c_0, |\bW|\leq c_0\}^c$ and apply the theorem for Young measures, we obtain that for any $t\in [0, 1]$, the support of $\rho_t$ lies in $\{|\bU|\leq c_0, |\bW|\leq c_0\}$. Hence, $f\in\cD_\infty$. To estimate $\|f\|_{\cD_\infty}$, consider $N_\infty(t)$ defined by~\eqref{eq:n_ode}, since the
elements of $\bU$ and $\bW$ are bounded by $c_0$, we have
\begin{equation}
    \dot{N}_\infty(t)\leq mc_0^2 EN_\infty(t), \nonumber
\end{equation}
where $E$ is an all-one $D\times D$ matrix. Therefore, we have
\begin{equation}
N_\infty(1)\leq e^{mc_0^2E}\be \leq\frac{2De^{m(c_0^2+1)}}{m}\be. \nonumber
\end{equation}
Since the elements of $\balpha$ are also bounded by $c_0$, we get
\begin{equation}
\|f\|_{\cD_\infty}\leq |\balpha|^TN_\infty(1)\leq \frac{2D^2e^{m(c_0^2+1)}c_0}{m}. \nonumber
\end{equation}

Finally, if $\|f_L\|_{\cD_1}\leq c_1$ holds for all $L>0$, then using 
the technique of treating $\bz_Y(\bx,t)$ on $N_1(t)$, we 
obtain $\|f\|_{\cD_1}\leq c_1$.

\qed
\ \\

\subsubsection{ Proof of Theorem~\ref{thm:rad_res}}
Similar to the proof of Theorem~\ref{thm:direct_comp}, we can define a discrete analogy of the $\hat{\cD}_1$ norm for residual network
\begin{equation}
\|\Theta\|_{\textrm{WP}}=|\balpha|^T\prod\limits_{l=1}^L\left(I+\frac{2}{L}|\bU_l||\bW_l|\right)\be. \nonumber
\end{equation} 
Using the same techniques as for the direct approximation theorem, we can show that any functions in $\hat{\cD}_2^Q$ can be approximated by a series of residual networks $f_L(\cdot;\Theta_L)$ with depth $L$ tends to infinity and $\|\Theta_L\|_{\textrm{WP}}\leq 9Q$. Here we use WP (weighted path) to denote the discrete norm because this norm is a weighted version of the original path norm, and assigns larger weights for those paths going through more non-linearities. 
Let $\cF^Q$ be the set of all residual networks whose weighted path norms are bounded by $Q$, i.e.
$$
\cF^Q = \{f(\cdot;\Theta):\ f(\cdot;\Theta) \textrm{\ is\ a\ residual\ network\ and\ } \|\Theta\|_{\textrm{WP}}\leq Q\},
$$
and let $\overline{\cF}^Q$ be the closure of $\cF^Q$. Then, by the direct approximation results, $\hat{\cD}_2^Q\subset\hat{\cD}_1^Q\subset\overline{\cF}^Q$. Hence, $\rad_n(\hat{\cD}_2^Q)\leq\rad_n(\overline{\cF}^{9Q})$. On the other hand, in~\cite{ma2019priori} it is proven that
$$\rad_n(\overline{\cF}^Q)\leq 2Q\sqrt{\frac{2\log(2d)}{n}}. $$ Therefore, 
\begin{equation}
\rad_n(\hat{\cD}_2^Q)\leq 18Q\sqrt{\frac{2\log(2d)}{n}} \nonumber
\end{equation}

\qed
\ \\

\section{Concluding remarks}

As far as the high dimensional approximation theory is concerned, we are interested in approximation schemes (or machine learning models)
 that satisfy
$$
\|f - f_m \|^2 \le C_0 \frac{\gamma(f)^2}{m}
$$
for $f$ is a certain function space $\mathcal{F}$ defined by the particular approximation scheme or machine learning model.
Here $\gamma$ is a functional defined on $\mathcal{F}$, typically a norm for the function space.
It plays the role of the variance in the context  of Monte Carlo integration.
A machine learning model is preferred if its associated function space $\mathcal{F}$ is large and the functional $\gamma$ is small.

However, practical machine learning models can only work with a finite dataset on which the values of the target function are known.
This results in an additional error, the estimation error, in the total error of the machine learning model.
The estimation error is controlled by the Rademacher complexity of the hypothesis  space, which can be thought of as a
truncated version of the space $\mathcal{F}$. It just so happens that for the spaces identified here the Rademacher complexity has the optimal estimates:
$$ 
\rad_n (\mathcal{F}_Q) \le C_0 \frac{Q}{\sqrt{n}}.
$$
 This is also true for the RKHS.
It is not clear whether this is a coincidence, or there are some more fundamental reasons behind.

Whatever the reason, the combination of these two results imply that the
generalization error (also called population risk) should have the optimal scaling
$ O(1/m) + O(1/\sqrt{n})$ for all three methods: the kernel method, the two-layer
neural networks and residual networks. The difference lies in the 
coefficients hidden in the above expression.
These coefficients are the norms of the target function
in the corresponding function spaces.
In this sense, going from the kernel method to two-layer neural networks and to deep residual neural networks is like a variance reduction process
since the value of the norms decreases in this process.
In addition, the function space $\mathcal{F}$ expands substantially from some RKHS to the Barron space and to the flow-induced function space.

{\bf Acknowledgement:} The work presented here is supported in part by
a gift to Princeton University from iFlytek and the ONR grant N00014-13-1-0338.

\bibliographystyle{plain}
\bibliography{dl_ref}

\end{document}